\newcommand*{\diff}{\mathop{}\!\mathrm{d}}
\theoremstyle{definition}
\newtheorem{assumption}{Assumption}
\newtheorem{theorem}{Theorem}
\newtheorem{definition}{Definition}
\newtheorem{corollary}{Corollary}
\newtheorem{lemma}{Lemma}
\newtheorem{remark}{Remark}
\newtheorem{condition}{Condition}
\title{Second-Order Convergence in Private Stochastic Non-Convex Optimization}
\author{
Youming Tao\\
TU Berlin\\
\texttt{tao@ccs-labs.org}
\and
Zuyuan Zhang\\
George Washington University\\
\texttt{zuyuan.zhang@gwu.edu}
\and
Dongxiao Yu\\
Shandong University\\
\texttt{dxyu@sdu.edu.cn}
\and
Xiuzhen Cheng\\
Shandong University\\
\texttt{xzcheng@sdu.edu.cn}
\and
Falko Dressler\\
TU Berlin\\
\texttt{dressler@ccs-labs.org}
\and
Di Wang\\
KAUST\\
\texttt{di.wang@kaust.edu.sa}
}
\date{}
\begin{document}

\maketitle

\begin{abstract}
  We investigate the problem of finding second-order stationary points (SOSP) in differentially private (DP) stochastic non-convex optimization. Existing methods suffer from two key limitations: \textbf{(i)} inaccurate convergence error rate due to overlooking gradient variance in the saddle point escape analysis, and \textbf{(ii)} dependence on auxiliary private model selection procedures for identifying DP-SOSP, which can significantly impair utility, particularly in distributed settings. To address these issues, we propose a generic perturbed stochastic gradient descent (PSGD) framework built upon Gaussian noise injection and general gradient oracles. A core innovation of our framework is using model drift distance to determine whether PSGD escapes saddle points, ensuring convergence to approximate local minima without relying on second-order information or additional DP-SOSP identification. By leveraging the adaptive DP-SPIDER estimator as a specific gradient oracle, we develop a new DP algorithm that rectifies the convergence error rates reported in prior work. We further extend this algorithm to distributed learning with heterogeneous data, providing the first formal guarantees for finding DP-SOSP in such settings. Our analysis also highlights the detrimental impacts of private selection procedures in distributed learning under high-dimensional models, underscoring the practical benefits of our design. Numerical experiments on real-world datasets validate the efficacy of our approach.
\end{abstract}

\section{Introduction}

Stochastic optimization is a fundamental problem in machine learning and statistics, aimed at training models that generalize well to unseen data using a finite sample drawn from an unknown distribution. As the volume of sensitive data continues to grow, privacy has become a pressing concern. This has led to the widespread adoption of differential privacy (DP)~\cite{dwork2006calibrating}, which provides rigorous privacy guarantees while preserving model utility in learning tasks.

In the past decade, significant progress has been made in DP stochastic optimization, particularly for convex objectives~\cite{choquette2024optimal, liu2024user, su2023differentially, su2022faster, tao2022private,tao2024private}. While convex problems are relatively well understood, non-convex optimization introduces unique challenges, primarily due to the presence of saddle points. Most existing DP algorithms for non-convex problems focus on finding first-order stationary points (FOSP), characterized by small gradient norms~\cite{arora2023faster, bassily2021differentially, zhou2020private,zhang2025improved}. However, FOSP include not only local minima but also saddle points and local maxima, often leading to suboptimal solutions~\cite{jain2015computing, sun2016geometric}. Consequently, second-order stationary points (SOSP), where the gradient is small and the Hessian is positive semi-definite, are more desirable as they guarantee convergence to local minima.

Motivated by this, substantial research has been devoted to finding SOSP in non-convex optimization~\cite{fang2019sharp, jin2021nonconvex, daneshmand2018escaping, jin2017escape, ge2015escaping}. However, the study of SOSP under differential privacy constraints (DP-SOSP) remains limited. At the same time, distributed learning has become increasingly important for training large-scale models across decentralized edge devices. Yet, no existing work has addressed DP-SOSP in non-convex stochastic optimization under distributed settings. Compared to single-machine setups, distributed learning introduces additional challenges, including data heterogeneity, cross-participant privacy, and communication efficiency.

\noindent{\bf Limitations in the State-of-the-Art.} A notable exception in the study of DP-SOSP for stochastic optimization is the recent work by~\cite{liu2024private}, which injects additional Gaussian noise into the DP gradient estimator near saddle points to facilitate escape. Despite its contributions, this method suffers from two key limitations. \textbf{(i)} Its saddle point escape analysis overlooks the variance of gradients, leading to incorrect error bounds. A direct correction of the analysis would unfortunately yield a weaker type of SOSP guarantee than originally targeted. This is because their design relies on additional injected noise beyond the inherent DP noise for escape, highlighting the need for an effective way of exploiting the  DP noise already present. \textbf{(ii)} Their learning algorithm outputs all model iterates and guarantees only the \emph{existence} of a DP-SOSP, requiring an auxiliary private model selection procedure to identify one. While effective in single-machine settings, it faces critical issues in distributed environments due to decentralized data access. In particular, auxiliary private selection introduces non-negligible error and communication overhead, especially when sharing high-dimensional second-order information. These drawbacks also underscore the necessity of a new learning algorithm that inherently outputs a DP-SOSP without dependence on any additional private selection procedure. 

\noindent{\bf Our Contributions.} 
We refer the reader to Appendix~\ref{sec-gap} for more detailed discussions of the limitations outlined above. To address the challenges identified above, we propose a generic algorithmic and analytical framework for finding DP-SOSP in stochastic non-convex optimization. Our approach not only corrects existing error rates but also extends naturally to distributed learning. The main contributions are summarized as follows:

\noindent{\bf 1. A generic non-convex stochastic optimization framework:}  
We introduce a perturbed stochastic gradient descent (PSGD) framework that employs Gaussian noise and general stochastic gradient oracles. This framework serves as a versatile optimization tool for non-convex stochastic problems beyond the DP setting. A key innovation is a novel criterion based on model drift distance, which enables provable saddle point escape and guarantees convergence to approximate local minima with low iteration complexity and high probability.

\noindent{\bf 2. Corrected error rates for DP non-convex optimization:} By incorporating the adaptive DP-SPIDER estimator as the gradient oracle, we develop a differentially private algorithm that achieves a corrected error rate bound of $\tilde{O}\big(\frac{1}{n^{1/3}} + \big(\frac{\sqrt{d}}{\epsilon n}\big)^{2/5}\big)$, where $n$ is the number of samples. This corrects the suboptimal bound of $\tilde{O}\big(\frac{1}{n^{1/3}} + \big(\frac{\sqrt{d}}{\epsilon n}\big)^{3/7}\big)$ reported in~\cite{liu2024private}.

\noindent{\bf 3. Application to distributed learning:}  
We extend the adaptive DP-SPIDER estimator to distributed learning. Via adaptivity, our learning algorithm improves upon the DIFF2~\cite{murata2023diff2}, which only guarantees convergence to DP-FOSP under \textit{homogeneous} data. In contrast, our method provides the first error bound for converging to DP-SOSP under \textit{heterogeneous} data: $\tilde{O}\big(\frac{1}{(mn)^{1/3}} + \big(\frac{\sqrt{d}}{\epsilon mn}\big)^{2/5}\big)$, where $m$ is the number of participants and $n$ is the number of samples per participant. Furthermore, we analyze the adverse effects of private model selection, showing that it deteriorates utility guarantees in high-dimensional regimes, thereby highlighting the necessity of our framework.

Due to the space limit, \textbf{literature review}, \textbf{technical lemmata}, \textbf{further discussions}, \textbf{omitted proofs}, \textbf{experimental results}, \textbf{broader impacts} and \textbf{conclusions} are all included in the Appendix.
\section{Related Work}

\paragraph{Private Stochastic Optimization} 

Differential privacy (DP) has become a crucial consideration in stochastic optimization due to increasing concerns about data privacy. The pioneering work by \cite{dwork2006calibrating} established the foundational principles of DP, and its application in stochastic optimization has since seen significant progress. Early efforts primarily focused on convex optimization,  achieving strong privacy guarantees while ensuring efficient learning, with a long list of representative works e.g., \cite{bassily2014private,wang2017differentially,wang2018empirical,bassily2019private,wang2020empirical,wang2020differentially,feldman2020private,bassily2021differentially,hu2022high,tao2022private,su2023differentially,choquette2024optimal,su2024faster}. Recent advances have extended DP to non-convex settings, mainly focusing on first-order stationary points (FOSP). Notable works in this area include \cite{wang2019differentially,zhou2020private,bassily2021differentially,xiao2023theory,arora2023faster}, which improved error rates in non-convex optimization with balanced privacy and utility in stochastic gradient methods. However, these works generally fail to address the more stringent criterion of second-order stationary points (SOSP). The very recent work \cite{liu2024private} tired to narrow this gap, but unfortunately has some issues in their results as we discussed before. Our work builds on this foundation by correcting error rates and proposing a framework that ensures convergence to SOSP while maintaining DP.

\paragraph{Finding Second-Order Stationary Points (SOSP)}

In non-convex optimization, convergence to FOSP is often insufficient, as saddle points can lead to sub-optimal solutions \cite{jain2015computing,sun2016geometric}. Achieving SOSP, where the gradient is small and the Hessian is positive semi-definite, ensures that the optimization converges to a local minimum rather than a saddle point. Techniques for escaping saddle points, such as perturbed SGD with Gaussian noise, have been explored in works like \cite{ge2015escaping} and \cite{jin2021nonconvex}. \cite{ge2015escaping} first showed that SGD with a simple parameter perturbation can escape saddle points efficiently. Later, the analysis was refined by \cite{jin2017escape,jin2021nonconvex}. Recently, variance reduction techniques have been applied to second-order guaranteed methods \cite{ge2019stabilized,li2019ssrgd}.These methods ensure escape from saddle points by introducing noise to the gradient descent process. In contrast, the studies of SOSP under DP are quite limited, and most of them only consider the empirical risk minimization objective, such as \cite{wang2019differentially,wang2021escaping,avdiukhin2024noise}. Very recently, \cite{liu2024private} addressed the population risk minimization objective, but with notable gaps in their error analysis, particularly in the treatment of gradient variance. Moreover, all of these works are limited to the single-machine setting and cannot be directly extended to the more general distributed learning setting. 

\paragraph{Distributed Learning}

With the rise of large-scale models and decentralized data, distributed learning has gained significant attention. Methods like federated learning \cite{mcmahan2017communication} have enabled multiple clients to collaboratively train models without sharing their local data. Recent studies, such as \cite{gao2024private,xiang2023practical,lowy2023privatea,lowy2023privateb} have investigated DP learning in distributed settings, but these works are limited to first-order optimality. While some studies have investigated SOSP in distributed learning, their focus was primarily on Byzantine-fault tolerance~\cite{yin2019defending}, and communication efficiency~\cite{murata2022escaping,chen2024escaping}. No effort, to our knowledge, has been made to ensure DP-SOSP in distributed learning scenarios with heterogeneous data. Our proposed framework fills this gap by introducing the first distributed learning algorithm with DP-SOSP guarantees while effectively handling data heterogeneity across clients.

\section{Limitations of the State-of-the-Art}\label{sec-gap}

\subsection{Limitation~1: Flawed Error Rate Analysis}

\paragraph{Gradient variance overlooked in saddle point escape.} 

The error rate bound for finding a DP-SOSP in~\cite{liu2024private} is fundamentally incorrect. Their analysis relies on Lemma 3.4 therein (adapted from~\cite[Lemma 12]{wang2019differentially}), which claims that adding Gaussian noise at the same scale as the DP gradient estimation error suffices to reduce the function value with high probability, enabling escape from saddle points. This argument critically depends on proving that the region around a saddle point where SGD may get stuck is sufficiently narrow. Under this condition, perturbation along the escape direction ensures that the SGD sequence can escape with high probability.

However, the analysis neglects a key factor, which is the stochastic gradient variance. Their proof implicitly uses exact gradients of the population risk, which are unavailable to the algorithm. This is evidenced by the equation preceding equation (39) in~\cite{wang2019differentially}. Another indication of this oversight is their choice of step size $\eta = 1/M$. While valid for gradient descent with exact gradients, prior work~\cite{jin2021nonconvex} has shown that stochastic gradients require a smaller step size. The use of $\eta = 1/M$ in~\cite{liu2024private} for population risk minimization reflects a failure to account for gradient stochasticity. This leads to an underestimated gradient complexity and an overestimated effective sample size per gradient estimate, which ultimately results in an overly optimistic error rate. A correct analysis must acknowledge that stochastic gradients increase estimation error, implying that the true error rate for finding a DP-SOSP is weaker than the one reported.

\paragraph{Fixing the proof is insufficient, a new algorithm is necessary.} 

Although the analytical error can be identified, correcting the proof alone does not yield a satisfactory result. Any direct correction would only achieve a weaker $(\alpha, \alpha^{2/5})$-SOSP guarantee, rather than the desired $\alpha$-SOSP. In particular, the second-order accuracy would degrade to $\widetilde{O}(\alpha^{2/5})$ instead of the ideal $\widetilde{O}(\alpha^{1/2})$.

This limitation arises because the algorithm in~\cite{liu2024private} can be viewed as a special case of perturbed gradient descent with bounded gradient inexactness as developed in~\cite{yin2019defending}, where the DP noise contributes to the perturbation. By invoking~\cite[Theorem 3]{yin2019defending}, one only obtains an error rate bound with respect to a weaker class of SOSP where the second-order accuracy depends on $\widetilde{O}(\alpha^{2/5})$.

The underlying reason is that both~\cite{yin2019defending} and~\cite{liu2021flame} rely on injecting additional noise to facilitate escape from saddle points, without considering the role of inherent DP Gaussian noise in the gradients. The excessive injected noise degrades the SOSP guarantee.

To fully resolve this issue, a new algorithmic design is required. In the setting of~\cite{yin2019defending}, where gradient perturbations stem from adversarial attacks, such degradation is unavoidable since the perturbations can hinder rather than assist escape. However, in the DP setting, the Gaussian noise is well-behaved and can naturally aid saddle point escape. By leveraging the inherent DP noise, it becomes possible to avoid the need for additional injected noise and to achieve $\alpha$-SOSP convergence as desired. Therefore, relying on the algorithmic designs of~\cite{yin2019defending} or~\cite{liu2021flame} is insufficient, and a new algorithm must be developed to achieve the desired guarantees.

\subsection{Limitation~2: Challenges of Private SOSP Selection}

\paragraph{Inapplicability of AboveThreshold in distributed learning.} 

The algorithm in~\cite{liu2024private} guarantees only the existence of an $\alpha$-SOSP among its iterates. To privately identify such a point, it applies the AboveThreshold mechanism to test whether candidate models satisfy the SOSP conditions by privately evaluating gradient norms and Hessian eigenvalues. While this procedure introduces negligible error in single-machine settings, it faces fundamental challenges in distributed learning.

According to~\cite[Lemma 4.5]{liu2024private}, for any $x \in \mathbb{R}^d$ and a dataset $S$ of size $O(n)$, with probability at least $1 - \omega$, the following holds:
\[
\|\nabla F_{\mathcal{D}}(x) - \nabla \hat{f}_S(x)\| \le O\left( \frac{G \log (d/\omega)}{\sqrt{n}} \right), \quad
\|\nabla^2 F_{\mathcal{D}}(x) - \nabla^2 \hat{f}_S(x)\|_{\mathrm{op}} \le O\left( \frac{M \log (d/\omega)}{\sqrt{n}} \right).
\]
This implies:
\[
\|\nabla \hat{f}_S(x)\| \le \|\nabla F_{\mathcal{D}}(x)\| + O\left( \frac{G \log \frac{d}{\omega}}{\sqrt{n}} \right), 
\lambda_{\min} (\nabla^2 \hat{f}_S(x)) \ge \lambda_{\min} (\nabla^2 F_{\mathcal{D}}(x)) - O\left( \frac{M \log \frac{d}{\omega}}{\sqrt{n}} \right).
\]

With these bounds, AboveThreshold can identify a DP-SOSP by setting appropriate thresholds. However, this procedure relies on centralized access to the dataset $S$.

In distributed learning, each client holds a local dataset $S_i$. To estimate global quantities, aggregation is required:
\[
\|\nabla \hat{f}_S(x)\| \le \frac{1}{m} \sum_{i=1}^{m} \|\nabla \hat{f}_{S_i}(x)\|, \quad
\lambda_{\min} (\nabla^2 \hat{f}_S(x)) \ge \frac{1}{m} \sum_{i=1}^{m} \lambda_{\min} (\nabla^2 \hat{f}_{S_i}(x)).
\]
Yet the learning algorithm guarantees only:
\[
\|\nabla F_{\mathcal{D}}(x)\| \le \frac{1}{m} \sum_{i=1}^{m} \|\nabla F_{\mathcal{D}_i}(x)\|, \quad
\lambda_{\min} (\nabla^2 F_{\mathcal{D}}(x)) \ge \frac{1}{m} \sum_{i=1}^{m} \lambda_{\min} (\nabla^2 F_{\mathcal{D}_i}(x)),
\]
This relationship does not provide an upper bound on $\|\nabla \hat{f}_S(x)\|$ or a lower bound on $\lambda_{\min} (\nabla^2 \hat{f}_S(x))$ solely from local empirical estimates. Therefore, it is infeasible to determine valid thresholds for AboveThreshold based only on local information. Any attempt to perform this selection would require clients to share their (noisy) gradients and Hessians with the server, which introduces substantial privacy, communication, and computation costs.

\paragraph{Eliminating private model selection is essential in distributed learning.} 

A feasible method for private model selection in distributed learning would extend the centralized algortihm of~\cite[Algorithm 5]{wang2019differentially}. Specifically, each client privately computes gradients and Hessians on additional local data beyond the training set, and the server aggregates these to estimate global quantities. However, this strategy has several drawbacks. It requires extra data outside the training process, increases communication overhead by transmitting high-dimensional gradients and Hessians, and incurs high computational costs. It also shifts the method from a first-order to a second-order algorithm.

Moreover, as shown in Section~\ref{sec-distributed}, sharing perturbed high-dimensional gradients and Hessians, rather than one-dimensional scalar queries as in AboveThreshold, introduces non-negligible additional error. This error accumulation degrades the accuracy guarantees provided by the learning algorithm. Unlike the single-machine case, private model selection in distributed learning incurs significant costs in accuracy, privacy, computation, and communication.

These challenges demonstrate the necessity of designing an algorithm that inherently outputs a DP-SOSP without relying on a private model selection procedure. Such a design avoids additional data consumption, computational burden, communication overhead, and deterioration of error guarantees.

\section{Preliminaries}\label{sec-pre}

\noindent{\bf Notations.} We denote by $\|\cdot\|$ the $\ell_2$ norm and by $\lambda_{\min}(\cdot)$ the smallest eigenvalue of a matrix. The symbol $\mathbf{I}_d$ represents the $d$-dimensional identity matrix. We use $O(\cdot)$ and $\Omega(\cdot)$ to hide constants independent of problem parameters, while $\tilde{O}(\cdot)$ and $\tilde{\Omega}(\cdot)$ additionally hide polylogarithmic factors.

\noindent{\bf Stochastic Optimization.} Let $f : \mathbb{R}^d \times \mathcal{Z} \to \mathbb{R}$ be a (potentially non-convex) loss function, where $x \in \mathbb{R}^d$ denotes the $d$-dimensional model parameter and $z \in \mathcal{Z}$ is a data point.

\begin{assumption}\label{assump-loss}
    The loss function $f(\cdot; z)$ is $G$-Lipschitz, $M$-smooth, and $\rho$-Hessian Lipschitz. Specifically, for any $z \in \mathcal{Z}$ and any $x_1, x_2 \in \mathbb{R}^d$, we have:  
    (i) $|f(x_1; z) - f(x_2; z)| \le G \|x_1 - x_2\|$;  
    (ii) $\|\nabla f(x_1; z) - \nabla f(x_2; z)\| \le M \|x_1 - x_2\|$;  
    (iii) $\|\nabla^2 f(x_1; z) - \nabla^2 f(x_2; z)\| \le \rho \|x_1 - x_2\|$.
\end{assumption}

Let $\mathcal{D}$ denote the unknown data distribution. The population risk is defined as the \textit{expected} loss: $F_\mathcal{D}(x) \coloneqq \mathbb{E}_{z \sim \mathcal{D}} [f(x; z)]$ for $\forall x \in \mathbb{R}^d$. When clear from context, we omit $\mathcal{D}$ and simply write $F(x)$.

\begin{assumption}\label{assump-func}
Let $x^*$ denote a minimizer of the population risk and $F^* = F(x^*)$ its minimum value. There exists $U \in \mathbb{R}$ such that $\max_x F(x) - F^* \le U$.
\end{assumption}

Let $D$ denote a dataset of $n$ i.i.d. samples from $\mathcal{D}$. The empirical risk is defined as $\hat{f}_D(x) \coloneqq \frac{1}{|D|} \sum_{z \in D} f(x; z)$. Given access to $D$, the goal is to find an approximate second-order stationary point (SOSP) of the unknown population risk $F(\cdot)$. In general, we have the notion of $(\alpha_g, \alpha_H)$-SOSP:

\begin{definition}[$(\alpha_g, \alpha_H)$-SOSP]
    A point $x$ is an $(\alpha_g, \alpha_H)$-SOSP of a twice differentiable function $F(\cdot)$ if $x$ satisfies $\|\nabla F(x)\|\le\alpha_g$ and $\nabla^2 F(x)\succeq -\alpha_H\cdot \mathbf{I}_d$.
\end{definition}

As shown in~\cite[Proposition 1]{yin2019defending}, there exists a lower bound of $\tilde{O}(\alpha_g^{1/2})$ for $\alpha_H$ given $\alpha_g$, implying that an $(\alpha, \tilde{O}(\sqrt{\alpha}))$-SOSP is the best second-order guarantee achievable. Accordingly, we target the notion of $\alpha$-SOSP in this work, following~\cite{liu2024private}.

\begin{definition}[$\alpha$-SOSP]\label{def-SOSP}
    A point $x$ is an $\alpha$-SOSP of a twice differentiable function $F(\cdot)$ if $x$ satisfies $\|\nabla F(x)\|\le\alpha$ and $\nabla^2 F(x)\succeq -\sqrt{\rho\alpha}\cdot \mathbf{I}_d$.
\end{definition}

An $\alpha$-SOSP excludes $\alpha$-strict saddle points where $\nabla^2 F(x) \preceq -\sqrt{\rho \alpha} \mathbf{I}_d$, thereby ensuring convergence to an approximate local minimum. Following prior work~\cite{liu2024private,jin2021nonconvex}, we assume $M \ge \sqrt{\rho \alpha}$ so that finding an SOSP is strictly more challenging than finding an FOSP.

\noindent{\bf Distributed Learning.} In the distributed (federated) learning setting, $m$ clients collaboratively learn under the coordination of a central server. Each client $j\in[m]$ has a local dataset $D_j$ of size $n$, sampled from an unknown local distribution $\mathcal{D}_j$. The population risk for client $j$ is defined as $F_{\mathcal{D}_j}(x)\coloneqq\mathbb{E}_{z\sim\mathcal{D}j}[f(x;z)]$ or simply $F_j(x)$. The global population risk is defined as the average of the local population risks: $F_{\mathcal{D}}(x)\coloneqq \frac{1}{m}\sum_{j\in[m]}F_j(x)$, or simply $F(x)$. We allow for heterogeneous local datasets, meaning that the local distributions $\{\mathcal{D}_j\}_{j\in[m]}$ may differ.

\noindent{\bf Differential Privacy.} We aim to find an $\alpha$-SOSP under the requirment of Differential Privacy (DP), which is referred to as an $\alpha$-DP-SOSP. We say two datasets $D$ and $D^\prime$ are \textit{adjacent} if they differ by at most one record. DP ensures that the output of the stochastic optimization algorithm on any pair of adjacent datasets is statistically indistinguishable.

\begin{definition}[Differential Privacy (DP)~\cite{dwork2006calibrating}]
    Given $\epsilon, \delta>0$, a randomized algorithm $\mathcal{A}: \mathcal{Z}\to\mathcal{X}$ is ($\epsilon, \delta$)-DP if for any pair of adjacent datasets $D,D^\prime\subseteq\mathcal{Z}$, and any measurable subset $S\subseteq\mathcal{X}$,
    \begin{equation*}\label{eq-dp}
        \mathbb{P}[\mathcal{A}(D)\in S]\le \exp(\epsilon)\cdot\mathbb{P}[\mathcal{A}(D^\prime)\in S]+\delta.
    \end{equation*}
\end{definition}

In distributed learning, we focus on \textit{inter-client record-level DP (ICRL-DP)}, which assumes that clients do not trust the server or other clients with their sensitive local data. This notion has been widely adopted in state-of-the-art distributed learning works, such as \cite{gao2024private,lowy2023privatea,lowy2023privateb}.

\begin{definition}[Inter-Client Record-Level DP (ICRL-DP)]
    Given $\epsilon, \delta > 0$, a randomized algorithm $\mathcal{A}: \mathcal{Z}^m\to\mathcal{X}$ satisfies ($\epsilon, \delta$)-ICRL-DP if, for any client $j\in[m]$ and any pair of local datasets $D_j$ and $D_j^\prime$, the full transcript of client $j$’s sent messages during the learning process satisfies \eqref{eq-dp}, assuming fixed local datasets for other clients.
\end{definition}

\noindent{\bf Variance Reduction via SPIDER.}  
Since the population risk $F(\cdot)$ is unknown, standard SGD approximates the true gradient $\nabla F(x_{t-1})$ at iteration $t$ using a stochastic estimate $g_t$. However, such estimates often exhibit high variance, degrading convergence. The Stochastic Path Integrated Differential Estimator (SPIDER)~\cite{fang2018spider} mitigates this variance using two gradient oracles $\mathcal{O}_1$ and $\mathcal{O}_2$. For a mini-batch $\mathcal{B}_t$ at iteration $t$, we define  
\[
\mathcal{O}_1(x_{t-1}, \mathcal{B}_t) \coloneqq \nabla \hat{f}_{\mathcal{B}_t}(x_{t-1}), \quad
\mathcal{O}_2(x_{t-1}, x_{t-2}, \mathcal{B}_t) \coloneqq \nabla \hat{f}_{\mathcal{B}_t}(x_{t-1}) - \nabla \hat{f}_{\mathcal{B}_t}(x_{t-2}).
\]
SPIDER queries $\mathcal{O}_1$ every $p$ iterations to refresh the gradient estimate. Between these updates, it uses $\mathcal{O}_2$ to incrementally refine the estimate:  
\[
g_t =
\begin{cases}
\mathcal{O}_1(x_{t-1}, \mathcal{B}_t), & \text{if } (t-1) \bmod p = 0, \\
g_{t-1} + \mathcal{O}_2(x_{t-1}, x_{t-2}, \mathcal{B}_t), & \text{otherwise}.
\end{cases}
\]
For smooth functions, the variance of $\mathcal{O}_2(x_{t-1}, x_{t-2}, \mathcal{B}_t)$ scales with $\|x_{t-1} - x_{t-2}\|$, which is typically small when updates are minimal. This allows SPIDER to achieve low-variance gradient estimates while maintaining accuracy.

We choose SPIDER because it achieves state-of-the-art error rates for privately finding first-order stationary points (DP-FOSP)~\cite{arora2023faster}. Our goal is to investigate whether its variance reduction can extend to DP-SOSP. Importantly, the insights in this paper are not specific to SPIDER; they also apply to other variance-reduced methods such as STORM~\cite{cutkosky2019momentum} or SARAH~\cite{nguyen2017sarah}. However, since these algorithms are conceptually similar, no significant improvement is expected from substituting them.

\section{Our Generic Perturbed SGD Framework}\label{sec-framework}

\begin{algorithm}[t]
    \caption{\texttt{Gauss-PSGD}: Gaussian Perturbed Stochastic Gradient Descent}
    \label{algo-psgd}
    \LinesNumbered
    \KwIn{Failure probability $\omega$, initial model $x_0$, learning rate $\eta$, \# of escape repeats $Q$, model deviation threshold $\mathcal{R}$, \# of escape steps $\Gamma$}
    
    \SetKw{Break}{break}
    $t\gets 0$\;
    \While{$\mathrm{true}$}{
        $t\gets t+1$\;
        $\hat{g}_t\gets\texttt{P\_Grad\_Oracle}(*)$\;
        \eIf{$\|\hat{g}_{t}\|\le3\chi$}{
            \tcc{Saddle point escape}
            $\tilde{t}\gets t$, $\tilde{x}\gets x_{t-1}$, $\mathsf{esc}\gets\mathrm{false}$\;
            \For{$q\gets 1,\cdots, Q$}{
                $t\gets\tilde{t}$, $x_t\gets\tilde{x}$\;
                \For{$\tau\gets 1, \cdots, \Gamma$}{
                    $\hat{g}_t\gets\texttt{P\_Grad\_Oracle}(*)$ \;
                    
                    $x_t\gets x_{t-1}-\eta\cdot\hat{g}_t$\;
                    \eIf{$\|x_{t}-\tilde{x}\|\ge\mathcal{R}$}{
                        $\textsf{esc}\gets\mathrm{true}$\;
                        \Break\;
                    }
                    {
                        $t\gets t+1$\;
                    }
                }
                \If{$\mathsf{esc}=\mathrm{true}$}{
                    \Break\;
                }
            }
            \If{$\mathsf{esc}=\mathrm{false}$}{
                \Return $x_{t-1}$
            }
        }
        {
            \tcc{Normal descent step}
            $x_t\gets x_{t-1}-\eta\cdot\hat{g}_{t}$\;
        }
    }
\end{algorithm}

In this section, we introduce a generic framework for finding an $\alpha$-SOSP of the population risk $F_\mathcal{D}(\cdot)$ by escaping saddle points. Our framework is a Gaussian perturbed stochastic gradient descent method, denoted as \texttt{Gauss-PSGD}.

\subsection{Gradient Oracle Setup}

Since $\nabla F_\mathcal{D}(\cdot)$ is unknown, direct gradient descent is infeasible. As in standard stochastic optimization, we assume access to a stochastic gradient oracle $g_t$ that approximates $\nabla F_\mathcal{D}(x_{t-1})$ at iteration $t$. For example, $g_t$ can be computed as an empirical gradient over a mini-batch $\mathcal{B}_t$ sampled from $\mathcal{D}$. We model the oracle as
\begin{equation}
    g_t=\nabla F(x_{t-1})+\zeta_t,
\end{equation}
where $\zeta_t$ represents inherent gradient noise. Following~\cite{jin2021nonconvex, liu2024private}, we assume $\zeta_t \sim \operatorname{nSG}(\sigma)$, where $\operatorname{nSG}$ denotes a norm-sub-Gaussian distribution (Definition~\ref{def-normsg} in Appendix~\ref{sec-facts}).

To enable saddle point escape, we introduce an additional Gaussian perturbation to form a perturbed gradient oracle $\hat{g}_t$:
\begin{equation}\label{eq-oracle}
    \hat{g}_t=g_t+\xi_t=\nabla F(x_{t-1})+\zeta_t+\xi_t,
\end{equation}
where $\xi_t \sim \mathcal{N}(0, r^2 \mathbf{I}_d)$. We define the effective noise magnitude in $\hat{g}_t$ as 
\begin{equation}
    \psi \coloneqq \sqrt{\sigma^2 + r^2 d}.
\end{equation}
The model update is then performed by
\begin{equation}\label{eq-descent}
    x_{t}\gets x_{t-1}-\eta \hat{g}_{t}.
\end{equation}

Our problem setting fundamentally differs from that in~\cite{jin2021nonconvex}. In their setting, the target error $\alpha$ is given, and the perturbation magnitude $r$ is determined accordingly. In contrast, in our privacy-constrained setting, $r$ is dictated by the privacy parameters $(\epsilon, \delta)$, and the goal is to achieve the smallest possible $\alpha$ under this constraint. Crucially, their parameterization $r = O(\sqrt{(\sigma^2 + \alpha^{3/2})/d})$ implies that $r$ depends on both $\sigma$ and $\alpha$, determined by $\max\{\sigma / \sqrt{d}, \alpha^{3/4} / \sqrt{d}\}$. This non-invertible relationship between $r$ and $\alpha$ makes their setting incompatible with ours. First, under DP constraints, $r$ is determined by $(\epsilon, \delta)$ and may be smaller than $\sigma / \sqrt{d}$ in weak privacy regimes, violating the required lower bound. Second, because $r$ and $\alpha$ are not uniquely determined by each other, it is not meaningful to directly translate their error bounds into our setting. Thus, their analysis and results cannot be directly applied to our problem.

\subsection{Our Approach: A General Gaussian-Perturbed SGD Framework}

We present our \texttt{Gauss-PSGD} framework in Algorithm~\ref{algo-psgd}. As specified in~\eqref{eq-oracle}, we employ a general Gaussian-perturbed stochastic gradient oracle, denoted as $\texttt{P\_Grad\_Oracle}(*)$ in steps~4 and 10, where $*$ abstracts the specific arguments required by the oracle implementation. This abstraction allows \texttt{Gauss-PSGD} to serve as a flexible optimization framework for non-convex stochastic problems, applicable beyond the differential privacy (DP) setting.

At each iteration, the gradient estimate $\hat{g}_t$ is computed by $\texttt{P\_Grad\_Oracle}(*)$, and the model parameter is updated via the gradient descent step in~\eqref{eq-descent}. The algorithm proceeds until it encounters a point $\tilde{x}$ satisfying $\|\hat{g}_t\| \le 3\chi$, where $\chi$ is specified in~\eqref{eq-notation}. This point $\tilde{x}$ may lie near a saddle point with a large negative eigenvalue of the Hessian. To escape such a saddle point, the framework enters an escape procedure (steps~6–20), which performs $Q$ rounds of \texttt{$\Gamma$-descent} (steps~9–16).

In each round, the algorithm executes at most $\Gamma$ perturbed SGD iterations starting from $\tilde{x}$. If at any iteration we observe $\|x_t - \tilde{x}\| \ge \mathcal{R}$ for a threshold $\mathcal{R}$ (specified in~\eqref{eq-notation}), indicating that the iterate has moved sufficiently far from $\tilde{x}$, we declare that the algorithm has successfully escaped the saddle point and resume normal PSGD from $x_t$. If no such movement is observed after $Q$ rounds, we declare $\tilde{x}$ an $\alpha$-SOSP of the population risk $F_{\mathcal{D}}(\cdot)$ and output $\tilde{x}$. The repetition over $Q$ rounds ensures a high probability of escape: as we will prove later, each \texttt{$\Gamma$-descent} succeeds in escaping a saddle point with constant probability, and multiple repetitions reduce the failure probability to any desired level.

A central innovation of our framework is using model drift distance as the escape criterion (step~12), replacing the function value decrease criterion used in~\cite{jin2017escape, jin2021nonconvex}. This design enables the algorithm to identify an SOSP with high probability during the optimization process itself, eliminating the need for an auxiliary private model selection step. Our key insight is as follows: escaping a saddle point not only causes a decrease in the objective function~\cite{jin2017escape, jin2021nonconvex} but also induces a substantial displacement of the model parameter beyond a threshold $\mathcal{R}$. Shifting from monitoring function values to tracking parameter movement is critical in population risk settings, where the objective function is unknown and function evaluations are unavailable, unlike in empirical risk minimization~\cite{jin2017escape}. However, the model iterates and their deviations are observable. By leveraging this property, our framework can directly output an SOSP, rather than merely guaranteeing its existence among the iterates.

\subsection{Main Results for \texttt{Gauss-PSGD} Framework}

We begin by introducing the parameter setup and notations used throughout the analysis:

\begin{equation}\label{eq-notation}
\begin{gathered}
    \iota\coloneqq s\mu,\quad 
    \chi\coloneqq 4\sqrt{C}s\mu^2\psi,\quad 
    \alpha\coloneqq 4\chi,\quad\\
    \Gamma\coloneqq\frac{\iota}{s\eta\sqrt{\rho\alpha}},\quad  
    \mathcal{R}\coloneqq\frac{1}{\iota^{1.5}}\sqrt{\frac{\alpha}{\rho}},\quad  
    \Phi\coloneqq\frac{s}{8\iota^3}\sqrt{\frac{\alpha^3}{\rho}},\quad  
    \eta\coloneqq\frac{\sqrt{\rho\alpha}}{M^2\iota^2}.
\end{gathered}
\end{equation}
where $s$ is a sufficiently large absolute constant to be chosen later, and $\mu$ is a logarithmic factor:
\begin{equation}
    \mu\coloneqq\max\left\{\frac{1}{s}\log\left(\frac{9d\log\left(\frac{4C^{1/4}}{s\eta r}\sqrt{\frac{\psi}{\rho}}\right)}{C^{1/4}\eta\sqrt{s\rho\psi}}\right), \log\left(\frac{160\sqrt{2}C^{1/4}}{s\sqrt{\eta r}}\sqrt{\frac{\psi}{\rho}}\right),
    \frac{\left(C\log \frac{4T}{\omega}\right)^{1/4}}{2^\frac{3}{4}\sqrt{s}},
    1\right\}.
\end{equation}
Here $C$ is an absolute constant that may change across expressions. Let $\tilde{x}$ denote a saddle point of the population risk $F(\cdot)$, and $\mathcal{H} \coloneqq \nabla^2 F(\tilde{x})$. Let $v_{\min}$ be the eigenvector corresponding to $\lambda_{\min}(\mathcal{H})$, and $\mathcal{P}_{-v_{\min}}$ be the projection onto the orthogonal complement of $v_{\min}$. Set $\gamma \coloneqq -\lambda_{\min}(\mathcal{H})$.

\begin{definition}[Coupling Sequence]
    Let $\{x_i\}$ and $\{x_i^\prime\}$ be two PSGD sequences initialized at $\tilde{x}$. We say they are \emph{coupled} if they share the same randomness for $\mathcal{P}_{-v_{\min}} \xi_t$ and $\zeta_t$ at each iteration $t$, but use opposite perturbations in the $v_{\min}$ direction: $v_{\min}^\top \xi_t = -v_{\min}^\top \xi_t^\prime$.
\end{definition}

The following lemma ensures that under \texttt{$\Gamma$-descent}, at least one of the coupled sequences escapes the saddle point with constant probability (proof in Appendix~\ref{sec-le-escape}).

\begin{lemma}[Escaping Saddle Points]\label{le-escape}
Let $\{x_i\}$ and $\{x_i^\prime\}$ be coupled PSGD sequences initialized at $\tilde{x}$ such that $\|\nabla F(\tilde{x})\| \le \alpha$ and $\lambda_{\min}(\nabla^2 F(\tilde{x})) \le -\sqrt{\rho \alpha}$. Then, with probability at least $1/4$, there exists $\tau \le \Gamma$ such that $\max\left\{\|x_\tau - \tilde{x}\|, \|x_\tau^\prime - \tilde{x}\|\right\} \ge \mathcal{R}.$
\end{lemma}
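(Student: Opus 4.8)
\textbf{Proof proposal for Lemma~\ref{le-escape}.}

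The plan is to follow the classical coupling argument for escaping saddle points (as in \cite{jin2017escape,jin2021nonconvex}), but adapted so that the conclusion is phrased in terms of \emph{model drift distance} $\|x_\tau - \tilde x\|$ rather than function-value decrease. The core mechanism is that the two coupled sequences share all randomness except in the $v_{\min}$ direction, where they have opposite signs; along that direction the dynamics are governed by an approximately linear recursion with the unstable mode $(1 + \eta\gamma)$, where $\gamma = -\lambda_{\min}(\mathcal H) \ge \sqrt{\rho\alpha}$. After $\Gamma = \iota/(s\eta\sqrt{\rho\alpha})$ iterations this amplification factor is $\exp(\Omega(\iota))$, which is large enough (by the choice of $\mu$) to blow up the separation between the two sequences past $2\mathcal R$, forcing at least one of them to be at distance $\ge \mathcal R$ from $\tilde x$.

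The key steps, in order, would be: \textbf{(1)} Set up the difference process $w_t \coloneqq x_t - x_t^\prime$. Because the two sequences are coupled, the shared noise $\mathcal P_{-v_{\min}}\xi_t$ and $\zeta_t$ cancels in $w_t$, leaving a recursion driven only by the $v_{\min}$-component of the Gaussian perturbations plus the difference of the gradient terms. \textbf{(2)} Use the Hessian-Lipschitz property and a bootstrapping/induction hypothesis that \emph{both} sequences stay within the ball of radius $\mathcal R$ around $\tilde x$ up to time $\tau$ (the ``stuck'' region); under this hypothesis, $\nabla F(x_t) - \nabla F(x_t^\prime) = \mathcal H w_t + \Delta_t$ with $\|\Delta_t\| \le \rho \mathcal R \|w_t\|$, so the $w_t$ recursion is a perturbed linear system $w_{t+1} = (\mathbf I - \eta\mathcal H) w_t - \eta\Delta_t - \eta(\text{noise diff})$. \textbf{(3)} Project onto $v_{\min}$: the scalar process $a_t \coloneqq v_{\min}^\top w_t$ satisfies $a_{t+1} = (1+\eta\gamma) a_t - \eta v_{\min}^\top\Delta_t - 2\eta\, v_{\min}^\top\xi_t$, an unstable linear recursion with i.i.d.\ fresh Gaussian kicks of variance $4\eta^2 r^2$ at each step. \textbf{(4)} Lower-bound $|a_\Gamma|$: the martingale-like sum $\sum_{j} (1+\eta\gamma)^{\Gamma - j}\cdot(-2\eta v_{\min}^\top\xi_j)$ has standard deviation $\gtrsim \eta r (1+\eta\gamma)^\Gamma / \sqrt{\eta\gamma}$, and by a Gaussian anti-concentration (half-normal) bound it exceeds a constant fraction of its standard deviation with probability $\ge 1/2$ (roughly); meanwhile, the perturbation term $\sum (1+\eta\gamma)^{\Gamma-j}\eta v_{\min}^\top \Delta_j$ is controlled — this is where one shows it is dominated by the main term provided $\rho\mathcal R$ is small enough, which is exactly guaranteed by the $\mathcal R \propto \iota^{-1.5}\sqrt{\alpha/\rho}$ scaling. \textbf{(5)} Conclude: on this good event $|a_\Gamma| \ge 2\mathcal R$, hence $\|w_\Gamma\| \ge 2\mathcal R$, and since $w_\Gamma = (x_\Gamma - \tilde x) - (x_\Gamma^\prime - \tilde x)$, the triangle inequality gives $\max\{\|x_\Gamma - \tilde x\|, \|x_\Gamma^\prime - \tilde x\|\} \ge \mathcal R$. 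The one subtlety is that the induction hypothesis in step (2) may fail because one sequence leaves the ball early — but that is a \emph{favorable} event, since leaving the ball is precisely $\|x_\tau - \tilde x\| \ge \mathcal R$; so one argues by contradiction: if neither sequence ever reaches distance $\mathcal R$, then the hypothesis holds throughout, forcing $\|w_\Gamma\|\ge 2\mathcal R$, a contradiction. The probability accounting ($1/4$) comes from intersecting the anti-concentration event with the high-probability events that the sub-Gaussian noise $\zeta_t$ and the perpendicular components stay bounded over all $\Gamma$ steps.

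The main obstacle I anticipate is step (4): carefully bounding the ``error'' accumulation $\sum_{j\le\Gamma}(1+\eta\gamma)^{\Gamma-j}\eta\,v_{\min}^\top\Delta_j$ (and the analogous contributions from the non-$v_{\min}$ directions feeding back into $\Delta_j$ through the Hessian-Lipschitz term) and showing it is a lower-order term compared to the amplified Gaussian signal. This requires a simultaneous induction controlling $\|w_t\|$ itself (not just $a_t$) — one shows $\|w_t\| \le O(|a_t|)$ throughout, i.e.\ the $v_{\min}$ direction dominates the difference process — which closes the loop on the estimate $\|\Delta_t\| \le \rho\mathcal R\|w_t\| \lesssim \rho\mathcal R |a_t|$. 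Getting the constants and the logarithmic factor $\mu$ to line up so that the amplification $(1+\eta\gamma)^\Gamma \approx e^{\iota}$ beats both $2\mathcal R/(\eta r/\sqrt{\eta\gamma})$ and the accumulated error is the delicate quantitative heart of the argument; everything else is bookkeeping with norm-sub-Gaussian concentration (Lemma-level facts from the appendix) and the triangle inequality.
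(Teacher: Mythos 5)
Your outline follows essentially the same coupling-based contradiction argument as the paper's proof: decompose $w_t = x_t - x_t'$ into an exponentially amplified $v_{\min}$-aligned perturbation term plus curvature and stochastic-gradient error terms, control the latter by induction under the hypothesis that both iterates remain inside the ball of radius $\mathcal{R}$, and invoke Gaussian anti-concentration to force $\|w_\Gamma\| > 2\mathcal{R}$ on a constant-probability event, contradicting the localization hypothesis. The one detail worth flagging is your proposed auxiliary induction $\|w_t\|\le O(|a_t|)$: the paper instead bounds $\|w_t\|$ a priori by a deterministic $\mathfrak{b}(t)$-type quantity (rather than by $|a_t|$, which can itself be atypically small on a non-negligible event), which is cleaner and is what you would likely end up doing once you tried to close the loop; but this is bookkeeping within the same strategy, not a different approach.
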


From this, we immediately obtain a corollary that applies to any PSGD sequence:

\begin{corollary}\label{co-escape}
    For any PSGD sequence $\{x_i\}$ starting at $\tilde{x}$ with $\|\nabla F(\tilde{x})\| \le \alpha$ and $\lambda_{\min}(\nabla^2 F(\tilde{x})) \le -\sqrt{\rho \alpha}$, with probability at least $1/8$, there exists $t \le \Gamma$ such that $\|x_t - \tilde{x}\| \ge \mathcal{R}$.
\end{corollary}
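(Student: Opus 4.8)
The plan is to obtain the single‑sequence bound from Lemma~\ref{le-escape} by a short symmetrization argument. The crucial observation is that each of the two coupled sequences, taken on its own, is itself a genuine PSGD run started at $\tilde{x}$. Indeed, the primed sequence is driven by $v_{\min}^\top \xi_t' = -v_{\min}^\top \xi_t$ in the $v_{\min}$ direction while keeping $\mathcal{P}_{-v_{\min}}\xi_t$ and $\zeta_t$ unchanged; since $\xi_t \sim \mathcal{N}(0, r^2 \mathbf{I}_d)$ is isotropic, its component along the fixed unit vector $v_{\min}$ is an independent centered Gaussian, so negating it leaves the joint law of $\big(\mathcal{P}_{-v_{\min}}\xi_t,\, v_{\min}^\top \xi_t\big)$ — hence of $\xi_t$ — invariant. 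The shared noise $\zeta_t$ is untouched and independent of $\xi_t$, so the recursion $x_t' = x_{t-1}' - \eta\big(\nabla F(x_{t-1}') + \zeta_t + \xi_t'\big)$ has exactly the same transition kernel as the one defining $\{x_i\}$. Therefore $\{x_i'\}$ is equal in distribution to a PSGD sequence initialized at $\tilde{x}$ satisfying the same hypotheses $\|\nabla F(\tilde{x})\| \le \alpha$ and $\lambda_{\min}(\nabla^2 F(\tilde{x})) \le -\sqrt{\rho\alpha}$.

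Given this, I would set $A \coloneqq \{\exists\, \tau \le \Gamma : \|x_\tau - \tilde{x}\| \ge \mathcal{R}\}$ and $A' \coloneqq \{\exists\, \tau \le \Gamma : \|x_\tau' - \tilde{x}\| \ge \mathcal{R}\}$. Lemma~\ref{le-escape} states precisely that $\Pr[A \cup A'] \ge 1/4$. A union bound gives $\Pr[A \cup A'] \le \Pr[A] + \Pr[A']$, and by the distributional identity established above, $\Pr[A'] = \Pr[A]$. Combining the two yields $2\Pr[A] \ge 1/4$, i.e. $\Pr[A] \ge 1/8$, which is exactly the assertion of the corollary.

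The only point that requires any care is the distributional equivalence between the primed sequence and an unconditioned PSGD run: one must check that reflecting the perturbation in the single fixed direction $v_{\min}$ preserves the law of $\xi_t$ (reflection symmetry of the isotropic Gaussian) and that $\zeta_t \sim \operatorname{nSG}(\sigma)$, being shared and independent of $\xi_t$, plays no role in the comparison. This is elementary and I would dispatch it in one or two lines; the rest is the union bound above. No further estimates from \eqref{eq-notation} are needed, since all quantitative work has already been absorbed into Lemma~\ref{le-escape}.
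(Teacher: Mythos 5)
Your argument is correct and is exactly the symmetrization that the paper leaves implicit when it says the corollary follows "immediately" from Lemma~\ref{le-escape}: you observe that the primed sequence $\{x_i'\}$ is marginally distributed as a PSGD run from $\tilde{x}$ (reflection invariance of the isotropic $\xi_t$, identical conditional law of the gradient-noise mechanism at each step), so $\Pr[A'] = \Pr[A]$, and then $1/4 \le \Pr[A\cup A'] \le \Pr[A]+\Pr[A'] = 2\Pr[A]$ yields $\Pr[A]\ge 1/8$. One cosmetic note: in the coupled run the paper's $\hat{\zeta}_t = \zeta_t - \zeta_t'$ is not identically zero (the shared mini-batch is evaluated at different points, cf.\ $\hat{\zeta}_t\mid\mathcal{F}_{t-1}\sim\operatorname{nSG}(M\|\hat{x}_t\|)$ in the proof of Lemma~\ref{le-error2}), so writing $\zeta_t$ rather than $\zeta_t'$ in the primed recursion is slightly imprecise; but this does not affect your argument, since what you actually use is that the marginal transition kernel of $\{x_i'\}$ coincides with that of $\{x_i\}$, which holds regardless.
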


To ensure a high-probability escape from a saddle point, we repeat \texttt{$\Gamma$-descent} for $Q$ rounds:

\begin{lemma}[Escape Amplification via Repetition]\label{le-repeat}
Given any $\omega_0\in(0,1)$, repeating \texttt{$\Gamma$-descent} independently for $Q = \frac{26}{5} \log(\frac{1}{\omega_0})$ rounds ensures escape with probability at least $1 - \omega_0$.
\end{lemma}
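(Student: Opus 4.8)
The plan is to reduce Lemma~\ref{le-repeat} to Corollary~\ref{co-escape} by a standard independent-trials amplification argument, and the only real content is bookkeeping on the failure probability across the $Q$ rounds. First I would fix a saddle point $\tilde{x}$ with $\|\nabla F(\tilde{x})\| \le \alpha$ and $\lambda_{\min}(\nabla^2 F(\tilde{x})) \le -\sqrt{\rho\alpha}$; this is exactly the regime in which the escape procedure (steps~6--20 of Algorithm~\ref{algo-psgd}) is invoked. Each of the $Q$ rounds of \texttt{$\Gamma$-descent} restarts from $\tilde{x}$ and draws fresh, independent perturbations $\xi_t$ and fresh independent oracle noise $\zeta_t$, so the $Q$ rounds are mutually independent trials. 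By Corollary~\ref{co-escape}, in any single round the event $E_q \coloneqq \{\exists\, t \le \Gamma : \|x_t - \tilde{x}\| \ge \mathcal{R}\}$ — which is precisely the event that sets $\mathsf{esc} \gets \mathrm{true}$ in step~13 — occurs with probability at least $1/8$.

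Next I would bound the probability that \emph{every} round fails. By independence, $\Pr[\text{no round escapes}] = \prod_{q=1}^{Q} \Pr[\overline{E_q}] \le (1 - 1/8)^Q = (7/8)^Q$. It then suffices to choose $Q$ so that $(7/8)^Q \le \omega_0$, i.e. $Q \ge \log(1/\omega_0) / \log(8/7)$. Since $1/\log(8/7) = 1/\log(1 + 1/7) < 1/(1/7 - 1/98) = 98/13 < 26/5 \cdot \ldots$ — more carefully, $\log(8/7) \approx 0.1335$, so $1/\log(8/7) \approx 7.49 < 26/5 = 5.2$? That is false, so I would instead verify the constant the authors actually use: with $Q = \frac{26}{5}\log(1/\omega_0)$ one needs $\frac{26}{5}\log(8/7) \ge 1$, and $\frac{26}{5} \cdot 0.1335 \approx 0.694 < 1$, which also fails. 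The likely resolution is that Corollary~\ref{co-escape}'s constant $1/8$ should be read together with a sharper per-round bound, or that the relevant per-round success probability in their accounting is larger (e.g. Lemma~\ref{le-escape} gives $1/4$ for the coupled pair, and a single sequence inherits a bound that, combined with a union-bound-free argument over the $Q$ independent runs of the \emph{pair}, yields the stated constant); I would therefore state the amplification cleanly as $\Pr[\text{all }Q\text{ rounds fail}] \le (1-c)^Q$ for the per-round success constant $c$ furnished by Corollary~\ref{co-escape}, and then plug in the exact $c$ to recover $Q = \frac{26}{5}\log(1/\omega_0)$, reconciling the constant with the precise statement of the corollary rather than the rounded $1/8$.

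Concretely, the write-up would be: (i) note independence of the $Q$ rounds from the fresh randomness in each call to \texttt{P\_Grad\_Oracle}; (ii) apply Corollary~\ref{co-escape} in each round to get per-round success probability $\ge c$; (iii) multiply failure probabilities, $\Pr[\mathsf{esc} = \mathrm{false} \text{ after } Q \text{ rounds}] \le (1-c)^Q$; (iv) use $\log(1/(1-c)) \ge c$ (or the exact value of $\log(1/(1-c))$) to conclude $(1-c)^Q \le \exp(-cQ) \le \omega_0$ for the stated choice of $Q$; hence escape occurs with probability at least $1 - \omega_0$.

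The main obstacle — and really the only one — is matching the numerical constant $\frac{26}{5}$ to the per-round success probability. If one uses the crude bound $c \ge 1/8$ and $\exp(-Q/8) \le \omega_0$, one needs $Q \ge 8\log(1/\omega_0)$, which is larger than $\frac{26}{5}\log(1/\omega_0)$; so the $\frac{26}{5}$ must come from a tighter per-round constant than $1/8$ (plausibly $c = 1/4$ from the coupled analysis, giving $Q \ge \frac{1}{\log(4/3)}\log(1/\omega_0) \approx 3.48\log(1/\omega_0)$, still smaller than $\frac{26}{5}$, leaving slack). I would resolve this by carrying through whichever per-round bound is cleanest to justify rigorously from Lemma~\ref{le-escape}/Corollary~\ref{co-escape} and verifying $\exp(-cQ) \le \omega_0$ holds with $Q = \frac{26}{5}\log(1/\omega_0)$ for that $c$; everything else is a one-line computation.
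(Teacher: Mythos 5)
Your approach is exactly the paper's: each of the $Q$ rounds of \texttt{$\Gamma$-descent} restarts from $\tilde{x}$ with fresh oracle and perturbation noise, Corollary~\ref{co-escape} gives per-round escape probability at least $1/8$, independence gives $\Pr[\text{all rounds fail}] \le (7/8)^Q$, and one chooses $Q$ so that $(7/8)^Q \le \omega_0$. That is the entirety of the paper's one-line argument.

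The only gap is that you could not reconcile the constant $26/5$, and you speculated (incorrectly) that a tighter per-round constant than $1/8$ must be in play. In fact $1/8$ is exactly what the paper uses; the resolution is simply the base of the logarithm. The condition $(7/8)^Q \le \omega_0$ is equivalent to $Q \ge \log_{8/7}(1/\omega_0)$. Writing $\log$ for $\log_2$ (as the paper does implicitly, and as is common in this line of work), one has $\log_{8/7}(1/\omega_0) = \log_2(1/\omega_0)/\log_2(8/7)$ with $\log_2(8/7) = 3 - \log_2 7 \approx 0.1926$, so $1/\log_2(8/7) \approx 5.19 < 26/5 = 5.2$. Hence $Q = \tfrac{26}{5}\log(1/\omega_0)$ suffices, with nothing smuggled in. Your computation $1/\ln(8/7) \approx 7.49$ is correct but corresponds to natural log, for which $26/5$ would indeed be too small; and your interim inequality ``$98/13 < 26/5$'' is reversed ($98/13 \approx 7.54 > 5.2$). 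Aside from that numerology, your steps (i)--(iv) are precisely the paper's proof, so once you fix the logarithm base nothing else is needed.
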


The proof is deferred to Appendix~\ref{sec-le-repeat}. We now analyze the total number of PSGD steps needed for convergence. Let $\nu_t \coloneqq \zeta_t + \xi_t$ denote the combined noise in the gradient estimate.

\begin{lemma}[Descent Lemma]\label{le-descent}
    For any $t_0$, the following holds:
    \begin{equation}
        F(x_{t_0+t})-F(x_{t_0})\!\le\! -\frac{\eta}{2}\sum_{i=0}^{t-1}\|\nabla F(x_{t_0+i})\|^2+\frac{\eta}{2}\sum_{i=1}^{t}\|\nu_{t_0+i}\|^2
    \end{equation}
\end{lemma}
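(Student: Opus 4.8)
The plan is to establish the descent inequality by a standard one-step analysis based on $M$-smoothness, applied iteratively from $t_0$ to $t_0+t$, and then absorbing the cross terms into the two sums appearing in the statement. The starting point is the descent update $x_{t_0+i}=x_{t_0+i-1}-\eta\hat g_{t_0+i}$, where by~\eqref{eq-oracle} we have $\hat g_{t_0+i}=\nabla F(x_{t_0+i-1})+\nu_{t_0+i}$ with $\nu_{t_0+i}=\zeta_{t_0+i}+\xi_{t_0+i}$. Note this must hold regardless of whether the step in question is a ``normal descent'' step or one taken inside \texttt{$\Gamma$-descent}, since both perform the same update~\eqref{eq-descent}; the lemma is purely about the update rule, not about which branch of Algorithm~\ref{algo-psgd} produced it.

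First I would invoke the quadratic upper bound implied by $M$-smoothness of $F$ (which follows from Assumption~\ref{assump-loss}(ii) transferring to the population risk $F=\mathbb{E}_z f(\cdot;z)$): for consecutive iterates,
\begin{equation*}
F(x_{t_0+i}) \le F(x_{t_0+i-1}) + \langle \nabla F(x_{t_0+i-1}), x_{t_0+i}-x_{t_0+i-1}\rangle + \frac{M}{2}\|x_{t_0+i}-x_{t_0+i-1}\|^2.
\end{equation*}
Substituting $x_{t_0+i}-x_{t_0+i-1}=-\eta(\nabla F(x_{t_0+i-1})+\nu_{t_0+i})$ and expanding gives a term $-\eta\|\nabla F(x_{t_0+i-1})\|^2$, a cross term $-\eta\langle\nabla F(x_{t_0+i-1}),\nu_{t_0+i}\rangle$, and a term $\frac{M\eta^2}{2}\|\nabla F(x_{t_0+i-1})+\nu_{t_0+i}\|^2$. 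The key step is then to bound the cross term and the quadratic term using Young's inequality, $\langle a,b\rangle \le \frac12\|a\|^2+\frac12\|b\|^2$, together with $\|a+b\|^2\le 2\|a\|^2+2\|b\|^2$, and to exploit that the step size $\eta$ is small — specifically $\eta = \sqrt{\rho\alpha}/(M^2\iota^2)\le 1/(2M)$ since $M\ge\sqrt{\rho\alpha}$ (the assumption stated right after Definition~\ref{def-SOSP}) and $\iota=s\mu\ge 1$ — so that the coefficient of $\|\nabla F(x_{t_0+i-1})\|^2$ collapses to at most $-\eta/2$ and the coefficient of $\|\nu_{t_0+i}\|^2$ is at most $\eta/2$. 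Concretely, after applying $\langle\nabla F,\nu\rangle\le\frac12\|\nabla F\|^2+\frac12\|\nu\|^2$ one gets per-step bound $F(x_{t_0+i})-F(x_{t_0+i-1})\le -\frac{\eta}{2}\|\nabla F(x_{t_0+i-1})\|^2+\frac{\eta}{2}\|\nu_{t_0+i}\|^2 + \frac{M\eta^2}{2}\|\hat g_{t_0+i}\|^2$; one then needs the last term to be dominated, which again follows from $\eta M\le 1$ after a second application of $\|\hat g\|^2\le 2\|\nabla F\|^2+2\|\nu\|^2$ and rebalancing constants. (If the constants do not close exactly with $\eta\le 1/(2M)$, a slightly more careful bookkeeping — e.g., $\eta\le 1/(4M)$, still comfortably implied by the parameter choices — will.)

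Finally I would telescope: summing the per-step inequality over $i=1,\dots,t$, the left-hand side collapses to $F(x_{t_0+t})-F(x_{t_0})$, the gradient terms sum to $-\frac{\eta}{2}\sum_{i=0}^{t-1}\|\nabla F(x_{t_0+i})\|^2$ (reindexing so the summand ranges over $x_{t_0},\dots,x_{t_0+t-1}$), and the noise terms sum to $\frac{\eta}{2}\sum_{i=1}^{t}\|\nu_{t_0+i}\|^2$, which is exactly the claimed bound. The main obstacle I anticipate is purely bookkeeping: verifying that the constants from the two Young-type inequalities together with the precise value of $\eta$ from~\eqref{eq-notation} indeed leave clean coefficients $-\eta/2$ and $+\eta/2$ with no leftover residual terms. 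This is not conceptually hard, but it is the one place where one has to be careful that the smoothness-induced $\frac{M\eta^2}{2}$ term and the cross term are both genuinely absorbed rather than merely hidden in $O(\cdot)$ notation. No high-probability or concentration argument is needed here — this is a deterministic inequality conditioned on the realized noise sequence $\{\nu_{t_0+i}\}$ — which is precisely why it is stated cleanly with exact constants.
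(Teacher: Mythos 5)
Your overall plan — one-step $M$-smoothness bound, substitute the update rule, then telescope — is exactly the paper's, and you are right that no concentration is needed and that the inequality holds per realized noise sequence regardless of which branch of Algorithm~\ref{algo-psgd} produced the step. However, the specific inequality you use to close the one-step bound does not give the stated constants, and this is not a bookkeeping issue that shrinking $\eta$ can fix.

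Concretely, after using $M\eta\le 1$ (which does hold: $\eta M=\sqrt{\rho\alpha}/(M\iota^2)\le 1/\iota^2\le 1$) you arrive at
\begin{equation*}
F(x_{t_0+i})-F(x_{t_0+i-1})\;\le\;-\eta\langle \nabla F(x_{t_0+i-1}),\hat g_{t_0+i}\rangle+\tfrac{\eta}{2}\|\hat g_{t_0+i}\|^2 .
\end{equation*}
At this point you expand $\hat g=\nabla F+\nu$ and apply Young's inequality to $-\eta\langle\nabla F,\nu\rangle$, which yields $-\tfrac{\eta}{2}\|\nabla F\|^2+\tfrac{\eta}{2}\|\nu\|^2$ plus the \emph{untouched} positive term $\tfrac{\eta}{2}\|\hat g\|^2$ (or $\tfrac{M\eta^2}{2}\|\hat g\|^2$ if you defer the $M\eta\le1$ step). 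A second application of $\|\hat g\|^2\le 2\|\nabla F\|^2+2\|\nu\|^2$ then \emph{weakens} both coefficients: you get $-\tfrac{\eta}{2}+M\eta^2$ on $\|\nabla F\|^2$ and $\tfrac{\eta}{2}+M\eta^2$ on $\|\nu\|^2$, and since $M\eta^2>0$ no choice of step size makes these equal to $-\tfrac{\eta}{2}$ and $\tfrac{\eta}{2}$; making $\eta$ smaller only shrinks the residual relative to $\eta$, it never removes it. So the Young route proves a qualitatively similar descent lemma (say with $-\eta/4$ and $3\eta/4$ when $M\eta\le 1/4$), which would still carry the downstream analysis after adjusting constants, but it does not prove Lemma~\ref{le-descent} as stated.

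The fix, which is what the paper actually does, is to use the exact polarization identity on the \emph{full} cross term instead of Young on the noise part:
\begin{equation*}
-\langle\nabla F,\hat g\rangle=\tfrac12\|\nabla F-\hat g\|^2-\tfrac12\|\nabla F\|^2-\tfrac12\|\hat g\|^2=\tfrac12\|\nu\|^2-\tfrac12\|\nabla F\|^2-\tfrac12\|\hat g\|^2 .
\end{equation*}
Multiplying by $\eta$ and adding the $\tfrac{\eta}{2}\|\hat g\|^2$ that came from the smoothness term, the $\pm\tfrac{\eta}{2}\|\hat g\|^2$ contributions cancel exactly, giving the clean one-step bound $F(x_{t_0+i})-F(x_{t_0+i-1})\le-\tfrac{\eta}{2}\|\nabla F(x_{t_0+i-1})\|^2+\tfrac{\eta}{2}\|\nu_{t_0+i}\|^2$, and telescoping finishes the proof. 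The lesson is that Young's inequality throws away information the identity retains; the smoothness-induced $\|\hat g\|^2$ term is not an obstacle to be dominated but precisely the piece needed to complete a square.
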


Since $\nu_t$ can be bounded with high probability, we have:

\begin{corollary}\label{co-descent}
    For any $t_0$ and some constant $c$, with probability at least $1 - 2e^{-\iota}$,
    \begin{equation}
        F(x_{t_0+t})-F(x_{t_0})\le-\frac{\eta}{2}\sum_{i=0}^{t-1}\|\nabla F(x_{t_0+i})\|^2+c\eta\psi^2(t+\iota).
    \end{equation}
\end{corollary}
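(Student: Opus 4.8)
The plan is to derive Corollary~\ref{co-descent} from the Descent Lemma (Lemma~\ref{le-descent}) by controlling the stochastic term $\frac{\eta}{2}\sum_{i=1}^{t}\|\nu_{t_0+i}\|^2$ via a high-probability bound. Recall $\nu_t = \zeta_t + \xi_t$, where $\zeta_t \sim \operatorname{nSG}(\sigma)$ and $\xi_t \sim \mathcal{N}(0, r^2\mathbf{I}_d)$ independently. The first step is to establish that $\nu_{t_0+i}$ is itself norm-sub-Gaussian with parameter $O(\psi)$, where $\psi = \sqrt{\sigma^2 + r^2 d}$. This follows because a Gaussian vector $\xi_t$ in $\mathbb{R}^d$ is $\operatorname{nSG}(O(r\sqrt{d}))$ (its norm concentrates around $r\sqrt{d}$ with sub-Gaussian tails), and the sum of independent norm-sub-Gaussian vectors is norm-sub-Gaussian with the parameters combining in quadrature — this is one of the ``technical lemmata'' on $\operatorname{nSG}$ distributions referenced in Appendix~\ref{sec-facts}. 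Hence $\|\nu_{t_0+i}\|^2$ has sub-exponential tails with mean $O(\psi^2)$.

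The second step is to apply a concentration inequality for the sum of $t$ independent (conditionally, given the filtration up to each step) sub-exponential random variables $\|\nu_{t_0+i}\|^2$. A Bernstein-type bound (or the norm-sub-Gaussian Hoeffding-type inequality from the appendix) gives that, with probability at least $1 - 2e^{-\iota}$,
\begin{equation*}
\sum_{i=1}^{t}\|\nu_{t_0+i}\|^2 \le c'\psi^2(t + \iota)
\end{equation*}
for an absolute constant $c'$: the $c'\psi^2 t$ term is the sum of the means plus the ``many-variables'' fluctuation, and the $c'\psi^2 \iota$ term absorbs the ``single-variable'' deviation coming from the heavier exponential tail, with $\iota = s\mu$ playing the role of the log-of-failure-probability parameter (so that $e^{-\iota}$ is the target failure probability). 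Plugging this into Lemma~\ref{le-descent} and setting $c = c'/2$ yields exactly the claimed inequality. One subtlety to handle carefully is that the $\nu_{t_0+i}$ across different $i$ need not be unconditionally independent if the oracle's noise depends on the current iterate; the resolution is to invoke the martingale/adaptive version of the norm-sub-Gaussian concentration bound (the oracle model \eqref{eq-oracle} guarantees $\zeta_t \mid \mathcal{F}_{t-1} \sim \operatorname{nSG}(\sigma)$ and $\xi_t$ is fresh Gaussian noise), which is the form typically stated in this literature (cf.\ \cite{jin2021nonconvex}).

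I expect the main obstacle to be purely bookkeeping rather than conceptual: getting the constant and the $\iota$-dependence to line up so that the failure probability is exactly $2e^{-\iota}$ (as opposed to, say, $e^{-\Omega(\iota)}$), which requires choosing the absolute constant $s$ in \eqref{eq-notation} large enough and tracking how $\iota = s\mu$ feeds into the Bernstein bound's deviation term. A secondary point requiring care is confirming that the Gaussian perturbation's contribution $r\sqrt{d}$ is correctly folded into $\psi$ rather than appearing as a separate additive term — this is where the definition $\psi = \sqrt{\sigma^2 + r^2 d}$ does the work, and one must make sure the $\operatorname{nSG}$ composition lemma is applied with the right parameters. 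Everything else is a direct substitution into the already-proven Lemma~\ref{le-descent}.
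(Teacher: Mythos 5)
Your proof is correct and reaches the same destination as the paper's via a slightly different route. The paper does \emph{not} compose $\zeta$ and $\xi$ into a single $\operatorname{nSG}(O(\psi))$ vector first; instead it splits via the elementary inequality $\|\zeta_i+\xi_i\|^2 \le 2(\|\zeta_i\|^2 + \|\xi_i\|^2)$, applies Lemma~\ref{le-concentration-norm-sum} separately to the $\zeta$-sum (using $\zeta_i \sim \operatorname{nSG}(\sigma)$) and to the $\xi$-sum (using Lemma~\ref{le-gauss-to-normSG} to get $\xi_i \sim \operatorname{nSG}(2\sqrt{2}r\sqrt{d})$), and union bounds the two $e^{-\iota}$ failure events — which is exactly where the stated $2e^{-\iota}$ comes from. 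Your route, composing first and applying the sum-of-squares concentration once, is also valid and would in fact give the stronger $1-e^{-\iota}$; the small price is that Corollary~\ref{co-nsg-composition} carries a $\sqrt{\log d}$ factor that must be absorbed into the constant $c$, whereas the paper's split avoids the composition step entirely. Both your identification of Lemma~\ref{le-concentration-norm-sum} as the relevant concentration tool (it is precisely the martingale, adaptive form you ask for) and your remark about $\iota$ playing the role of the log-failure-probability parameter match the paper.
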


Proofs of Lemma~\ref{le-descent} and Corollary~\ref{co-descent} are in Appendix~\ref{sec-le-descent} and~\ref{sec-co-descent}. These imply that large gradients lead to rapid function decrease. We next show in Lemma~\ref{le-decrease} that a successful saddle point escape via \texttt{$\Gamma$-descent} leads to a significant decrease in function value, whose proof is in Appendix~\ref{sec-le-decrease}.

\begin{lemma}[Value Decrease per Escape]\label{le-decrease}
    Let a \texttt{$\Gamma$-descent} starting from $x_{t_0}$ succeed after $\tau \le \Gamma$ steps. With probability at least $1 - 2e^{-\iota}$, $F(x_{t_0+\tau})-F(x_{t_0})\le-\frac{s}{8\iota^3}\sqrt{\frac{\alpha^3}{\rho}}=-\Phi$.
\end{lemma}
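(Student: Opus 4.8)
\textbf{Overall approach.} The plan is to combine the descent inequality (Corollary~\ref{co-descent}) with the escape condition $\|x_{t_0+\tau} - x_{t_0}\| \ge \mathcal{R}$ to lower-bound the cumulative squared gradient along the escape trajectory, and then turn that into an upper bound on $F(x_{t_0+\tau}) - F(x_{t_0})$. The key tension is that Corollary~\ref{co-descent} by itself only gives a decrease of order $-\tfrac{\eta}{2}\sum\|\nabla F\|^2 + c\eta\psi^2(\tau+\iota)$, and the additive noise term $c\eta\psi^2(\tau+\iota)$ is positive — so we need to show that the gradient-sum term dominates it by a wide margin whenever the model has drifted by at least $\mathcal{R}$. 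Intuitively, moving a distance $\mathcal{R}$ in at most $\Gamma$ steps of size $\eta$ forces the averaged step direction to be large, and since each step direction is $\hat g_t = \nabla F(x_{t-1}) + \nu_t$ with $\nu_t$ controlled with high probability, the gradients along the way cannot all be small.

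\textbf{Key steps, in order.}
First, I would write the telescoped displacement $x_{t_0+\tau} - x_{t_0} = -\eta\sum_{i=1}^{\tau}\hat g_{t_0+i} = -\eta\sum_{i=1}^{\tau}\big(\nabla F(x_{t_0+i-1}) + \nu_{t_0+i}\big)$, take norms, and apply the triangle inequality and Cauchy–Schwarz to get
\[
\mathcal{R} \le \|x_{t_0+\tau} - x_{t_0}\| \le \eta\sqrt{\tau}\Big(\textstyle\sum_{i=0}^{\tau-1}\|\nabla F(x_{t_0+i})\|^2\Big)^{1/2} + \eta\Big\|\textstyle\sum_{i=1}^{\tau}\nu_{t_0+i}\Big\|.
\]
Second, I would bound the noise sum $\big\|\sum_{i=1}^{\tau}\nu_{t_0+i}\big\|$ with high probability using the norm-sub-Gaussian concentration / martingale machinery already invoked for Corollary~\ref{co-descent} (the combined noise $\nu_t$ has parameter $\psi$), obtaining something like $\eta\big\|\sum\nu_{t_0+i}\big\| \le O(\eta\psi\sqrt{\tau\iota})$ with probability $\ge 1 - 2e^{-\iota}$. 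Third, I would verify via the parameter choices in~\eqref{eq-notation} that this noise contribution is at most $\mathcal{R}/2$ (this is where $\mathcal{R} = \iota^{-1.5}\sqrt{\alpha/\rho}$, $\Gamma = \iota/(s\eta\sqrt{\rho\alpha})$, $\eta = \sqrt{\rho\alpha}/(M^2\iota^2)$, $\chi = 4\sqrt C s\mu^2\psi$, $\alpha = 4\chi$, and the assumption $M \ge \sqrt{\rho\alpha}$ all get used — essentially $\psi$ is tied to $\alpha$ through $\chi$, so $\eta\psi\sqrt{\tau\iota}$ scales down relative to $\mathcal{R}$). Hence $\sum_{i=0}^{\tau-1}\|\nabla F(x_{t_0+i})\|^2 \ge \mathcal{R}^2/(4\eta^2\tau) \ge \mathcal{R}^2/(4\eta^2\Gamma)$. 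Fourth, I would plug this lower bound into Corollary~\ref{co-descent}:
\[
F(x_{t_0+\tau}) - F(x_{t_0}) \le -\frac{\eta}{2}\cdot\frac{\mathcal{R}^2}{4\eta^2\Gamma} + c\eta\psi^2(\Gamma + \iota) = -\frac{\mathcal{R}^2}{8\eta\Gamma} + c\eta\psi^2(\Gamma+\iota),
\]
and finally, I would substitute the closed forms of $\mathcal{R}, \Gamma, \eta$ and show the first term is of order $\Phi = \tfrac{s}{8\iota^3}\sqrt{\alpha^3/\rho}$ while the second term is a lower-order (in $\iota$, or in $s$) correction, so that for $s$ large enough the right-hand side is at most $-\Phi$. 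A union bound over the two high-probability events (displacement noise bound and Corollary~\ref{co-descent}) keeps the failure probability at $2e^{-\iota}$, possibly after adjusting the constant $c$.

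\textbf{Main obstacle.} The delicate part is the third and fifth steps: checking that all the terms line up numerically under the specific parameterization in~\eqref{eq-notation}, in particular that the positive noise term $c\eta\psi^2(\Gamma+\iota)$ is genuinely dominated by $\mathcal{R}^2/(8\eta\Gamma)$ — this requires tracking how $\psi$, $\chi$, $\alpha$, and the logarithmic factor $\mu$ interrelate (note $\alpha = 16\sqrt C s\mu^2\psi$, so $\psi \asymp \alpha/(s\mu^2)$), and using that $\mu \ge 1$ and $s$ is a large absolute constant to absorb the comparison. A secondary subtlety is making sure the concentration bound on $\|\sum\nu_{t_0+i}\|$ holds uniformly over the (random, data-dependent) stopping time $\tau \le \Gamma$, which I would handle either by a union bound over the at most $\Gamma$ possible values of $\tau$ (folding the extra $\log\Gamma$ into $\iota$ via $\mu$) or by a maximal inequality for norm-sub-Gaussian martingales.
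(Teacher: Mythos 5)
Your proposal takes the same high-level strategy as the paper (telescope the escape displacement into gradients plus noise, lower-bound the cumulative squared gradient by $\mathcal{R}$, then feed that into Corollary~\ref{co-descent}), but the algebraic route differs in a way worth noting. The paper works entirely with sums of squared norms: it bounds $\|x_{t_0+\tau}-x_{t_0}\|^2 \le 2\eta^2\tau\sum_t\bigl(\|\nabla F(x_{t_0+t-1})\|^2 + \|\nu_{t_0+t}\|^2\bigr)$, applies the same Lemma~\ref{le-concentration-norm-sum} concentration already used for Corollary~\ref{co-descent} to control $\sum\|\nu_t\|^2$, and then substitutes Corollary~\ref{co-descent}'s upper bound on $\sum\|\nabla F\|^2$ directly, so that the function decrease is obtained in one rearrangement without any intermediate ``noise is small relative to $\mathcal{R}$'' step. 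You instead apply the triangle inequality at the level of norms, bounding $\bigl\|\sum\nu_{t_0+i}\bigr\|$ as a martingale and then arguing it is at most $\mathcal{R}/2$ before solving for $\sum\|\nabla F\|^2$. Your route is valid, but it carries two extra burdens that the paper's avoids. First, because $\bigl\|\sum\nu_i\bigr\|$ is not monotone in $\tau$, you do need the maximal/union-bound step you flag for the random escape time $\tau$, whereas $\sum\|\nu_i\|^2$ is monotone and the paper never has to worry about this. Second, taking a factor $\mathcal{R}/2$ out by triangle inequality and then squaring gives $\sum\|\nabla F\|^2 \ge \mathcal{R}^2/(4\eta^2\tau)$, which after Corollary~\ref{co-descent} lands you at $-\mathcal{R}^2/(8\eta\Gamma) + c\eta\psi^2(\Gamma+\iota)$, i.e.\ exactly $-\Phi$ plus a strictly positive noise term — so with the advertised $\mathcal{R}/2$ margin your chain does not close. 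The fix is to take a smaller margin (e.g.\ $\mathcal{R}/4$ or $\mathcal{R}/10$; the parameter settings and $M\ge\sqrt{\rho\alpha}$ do give you that slack), which recovers a factor strictly better than $1/2$ in front of $\mathcal{R}^2/(\eta^2\tau)$ and makes the noise term subordinate. The paper's bound $-\mathcal{R}^2/(4\eta\Gamma) + c\eta\psi^2(\Gamma+\iota)$ already has the needed factor-of-two slack built in, which is why it reads more cleanly; otherwise the two arguments prove the same thing.
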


We bound the total number of PSGD steps required for convergence, based on the following estimate:

\begin{lemma}[Gradient Estimate Error Bound]\label{le-grad-est-error}
With probability at least $1 - \omega/2$, for all $t \in [T]$, $\|\nu_t\|\le C\sqrt{2\log\left(\frac{4T}{\omega}\right)}\psi\le\chi$.
\end{lemma}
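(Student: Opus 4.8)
The plan is to control $\|\nu_t\| = \|\zeta_t + \xi_t\|$ uniformly over $t \in [T]$ by combining a norm-sub-Gaussian tail bound for the inherent gradient noise $\zeta_t$ with a Gaussian tail bound for the injected perturbation $\xi_t$, and then taking a union bound over the $T$ iterations. First I would recall that $\zeta_t \sim \operatorname{nSG}(\sigma)$, so by the defining tail property of norm-sub-Gaussian distributions (Definition~\ref{def-normsg} in the appendix), for any single $t$ we have $\mathbb{P}[\|\zeta_t\| \ge a] \le 2\exp(-a^2/(2\sigma^2))$, which gives $\|\zeta_t\| \le \sigma\sqrt{2\log(8T/\omega)}$ with probability at least $1 - \omega/(4T)$. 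Second, since $\xi_t \sim \mathcal{N}(0, r^2\mathbf{I}_d)$, the norm $\|\xi_t\|$ concentrates around $r\sqrt{d}$; using a standard Gaussian-norm (chi) concentration inequality, $\|\xi_t\| \le r\sqrt{d} + r\sqrt{2\log(8T/\omega)} = O\big(r\sqrt{d} + r\sqrt{\log(T/\omega)}\big)$ with probability at least $1 - \omega/(4T)$. Adding the two bounds via the triangle inequality and absorbing constants, $\|\nu_t\| \le \|\zeta_t\| + \|\xi_t\| \le C\sqrt{2\log(4T/\omega)}\,\sqrt{\sigma^2 + r^2 d} = C\sqrt{2\log(4T/\omega)}\,\psi$ for an appropriate absolute constant $C$ (here I would be slightly careful that $\log(8T/\omega)$ and $\log(4T/\omega)$ differ only by an additive constant inside the square root, which is harmless after adjusting $C$).

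Third, I would take a union bound over all $t \in [T]$: since each of the two events above fails with probability at most $\omega/(4T)$, the total failure probability over $T$ iterations is at most $T \cdot 2 \cdot \omega/(4T) = \omega/2$, so with probability at least $1 - \omega/2$ the bound $\|\nu_t\| \le C\sqrt{2\log(4T/\omega)}\,\psi$ holds simultaneously for every $t \in [T]$. This establishes the first inequality in the lemma.

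Finally, the second inequality $C\sqrt{2\log(4T/\omega)}\,\psi \le \chi$ follows directly from the parameter choices in~\eqref{eq-notation} and the definition of $\mu$: recall $\chi \coloneqq 4\sqrt{C}\,s\mu^2\psi$, and the third term in the $\max$ defining $\mu$ is $\big(C\log(4T/\omega)\big)^{1/4}/(2^{3/4}\sqrt{s})$, so $s\mu^2 \ge s \cdot \big(C\log(4T/\omega)\big)^{1/2}/(2^{3/2}s) = \sqrt{C\log(4T/\omega)}/(2\sqrt{2})$; hence $\chi = 4\sqrt{C}\,s\mu^2\psi \ge 4\sqrt{C}\cdot \sqrt{C\log(4T/\omega)}/(2\sqrt{2})\cdot\psi$, and comparing with $C\sqrt{2\log(4T/\omega)}\,\psi$ shows the inequality holds (after reconciling the two uses of the symbol $C$, which the paper already notes "may change across expressions"). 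I expect the only genuine subtlety to be bookkeeping: making sure the logarithmic factors from the tail bounds are dominated by the definition of $\mu$, and that the floating absolute constant $C$ is threaded consistently through the chain of inequalities; the probabilistic content itself is routine.
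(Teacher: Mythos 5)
Your proposal is correct, and it reaches exactly the same quantitative conclusion as the paper, but via a slightly different route. The paper's proof first combines the two noise components into a single norm-sub-Gaussian random vector using the composition result (Corollary~\ref{co-nsg-composition} together with Lemma~\ref{le-gauss-to-normSG}), concluding $\nu_t \sim \operatorname{nSG}(C\psi)$, and then applies one tail bound followed by a union bound over $t$. You instead keep $\zeta_t$ and $\xi_t$ separate, bound each by its own tail inequality (the nSG tail for $\zeta_t$, a chi-type Gaussian concentration for $\|\xi_t\|$), join them by the triangle inequality, and union-bound over $2T$ events rather than $T$. The two routes are probabilistically equivalent; your bookkeeping (absorbing the shift from $\log(8T/\omega)$ to $\log(4T/\omega)$ into the constant, and checking that the third branch of $\mu$ makes $\chi$ dominate the resulting bound) is exactly right and reproduces $\chi \ge \sqrt{2}C\sqrt{\log(4T/\omega)}\,\psi$ as in the paper. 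One small advantage of your decomposition is that it sidesteps the $\sqrt{\log d}$ factor that Corollary~\ref{co-nsg-composition} formally carries along (the paper quietly absorbs it into the floating constant $C$); your argument never invokes that lemma and so is marginally cleaner on that point, though the difference is immaterial given that $\mu$ already hides polylogarithmic factors.
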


\begin{lemma}[Maximum Number of Descent Steps]\label{le-steps}
    Given failure probability $\omega$, set $Q\!=\!\frac{26}{5}\log\!\left(\!\frac{16\iota^3(F_0-F^*)}{s\omega}\!\sqrt{\frac{\rho}{\chi^3}}\right)$. \texttt{Gauss-PSGD} returns an $\alpha$-SOSP within at most $\tilde{O}(1/\alpha^{2.5})$ PSGD steps.
\end{lemma}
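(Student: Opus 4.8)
The plan is to bound the total number of PSGD steps by a potential-function (amortized) argument on the value of $F$, separating the run of \texttt{Gauss-PSGD} into "normal descent" segments and "escape" segments. First I would use Corollary~\ref{co-descent} together with the gradient-norm threshold in the algorithm: whenever \texttt{Gauss-PSGD} executes a normal descent step (the \texttt{else} branch, step~24), the current estimate satisfies $\|\hat g_t\| > 3\chi$, and by Lemma~\ref{le-grad-est-error} the combined noise obeys $\|\nu_t\| \le \chi$ on the good event, so $\|\nabla F(x_{t-1})\| \ge \|\hat g_t\| - \|\nu_t\| - \|\xi_t\|\gtrsim \chi$. Plugging this lower bound into Corollary~\ref{co-descent} shows that every block of consecutive normal steps drives $F$ down at an average rate of roughly $\Omega(\eta\chi^2)$ per step (the additive $c\eta\psi^2(t+\iota)$ term is dominated because $\chi = 4\sqrt{C}s\mu^2\psi \gg \psi$, using $\mu\ge 1$ and $s$ large). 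Hence the number of normal-descent steps is at most $O\big((F_0 - F^*)/(\eta\chi^2)\big)$.

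Next I would handle the escape segments. Each time the algorithm enters the escape procedure (steps 6–20) it is at a point $\tilde x$ with $\|\hat g_t\|\le 3\chi$, hence $\|\nabla F(\tilde x)\|\le\alpha$ on the good event (since $\alpha=4\chi$ and $\|\nu_t\|\le\chi$). Either $\tilde x$ is already an $\alpha$-SOSP — in which case, by Corollary~\ref{co-escape}/Lemma~\ref{le-repeat}, with the stated choice of $Q$ all $Q$ rounds of \texttt{$\Gamma$-descent} fail to move by $\mathcal{R}$ with high probability, and the algorithm correctly returns $\tilde x$ — or $\lambda_{\min}(\nabla^2F(\tilde x)) < -\sqrt{\rho\alpha}$, i.e.\ it is a strict saddle. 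In the latter case Lemma~\ref{le-repeat} guarantees that, with probability $\ge 1-\omega_0$ for the per-escape failure probability $\omega_0$ induced by $Q$, one of the $Q$ rounds escapes, and Lemma~\ref{le-decrease} then gives a function-value drop of at least $\Phi = \frac{s}{8\iota^3}\sqrt{\alpha^3/\rho}$ over the at most $\Gamma$ steps of the successful round. A successful escape therefore costs at most $Q\Gamma$ PSGD steps but buys a decrease of $\Phi$; the failed rounds within that escape also cost $\le Q\Gamma$ steps but (by Corollary~\ref{co-descent}) do not increase $F$ by more than a lower-order amount that can be absorbed. Thus the number of escape events is at most $O\big((F_0-F^*)/\Phi\big)$, and the total steps spent in escapes is $O\big(Q\Gamma(F_0-F^*)/\Phi\big)$.

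Combining, the total step count is $O\!\big(\frac{F_0-F^*}{\eta\chi^2} + \frac{Q\Gamma(F_0-F^*)}{\Phi}\big)$. I would then substitute the parameter choices from \eqref{eq-notation}: $\eta = \sqrt{\rho\alpha}/(M^2\iota^2)$, $\chi=\alpha/4$, $\Gamma = \iota/(s\eta\sqrt{\rho\alpha})$, $\Phi = \frac{s}{8\iota^3}\sqrt{\alpha^3/\rho}$, and $Q = \tilde O(1)$ (the $\log$ in the statement is $\tilde O(1)$ since its argument is polynomial in the problem parameters). A direct calculation gives $\frac{1}{\eta\chi^2} = \tilde O(1/\alpha^{2.5})$ and $\frac{Q\Gamma}{\Phi} = \tilde O\big(\frac{1}{\eta\rho\alpha}\cdot\sqrt{\rho/\alpha^3}\big) = \tilde O(1/\alpha^{2.5})$ as well, so both terms are $\tilde O(1/\alpha^{2.5})$, proving the bound. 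Finally I would collect the failure probabilities: a union bound over the $\tilde O(1/\alpha^{2.5})$ iterations for Lemma~\ref{le-grad-est-error} and Corollary~\ref{co-descent}, plus the per-escape failure probability $\omega_0$ times the $O((F_0-F^*)/\Phi)$ escape events, and the choice of $Q$ in the statement is exactly calibrated so that this total is at most $\omega$.

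The main obstacle I anticipate is the bookkeeping that makes the amortized argument rigorous: one must argue that "wasted" work inside failed escape rounds and the additive variance terms $c\eta\psi^2(t+\iota)$ from Corollary~\ref{co-descent} never overwhelm the guaranteed per-segment decreases $\Omega(\eta\chi^2)$ and $\Phi$ — this requires carefully using $\chi\gg\psi$ (equivalently $\mu\ge 1$ and $s$ large) and the fact that each \texttt{$\Gamma$-descent} round is bounded by $\Gamma$ steps so the spurious increase over a round is at most $O(\eta\psi^2(\Gamma+\iota))$, which one checks is $o(\Phi)$ under the parameter setting. The other delicate point is ensuring the good events of Lemmas~\ref{le-grad-est-error}, \ref{le-decrease} and \ref{le-repeat} can be intersected across all (a priori unknown number of) iterations and escapes while keeping the total failure probability $\le\omega$; this is handled by first establishing the $\tilde O(1/\alpha^{2.5})$ bound on the good event and then closing the loop with a union bound whose size is controlled by that very bound.
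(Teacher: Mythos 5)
Your proposal is correct and follows essentially the same route as the paper: bound the per-step function decrease ($\Omega(\eta\chi^2)$ in normal steps, $\Phi/\Gamma$ amortized during escapes), count the number of saddle escapes via $U/\Phi$, multiply by $Q$ to cover the repeated $\Gamma$-descent rounds, and substitute the parameter choices from \eqref{eq-notation} to land at $\tilde O(1/\alpha^{2.5})$. The only cosmetic differences are bookkeeping (you separate normal-step and escape-step budgets while the paper uses a uniform per-effective-step lower bound $\frac{2\chi^2\eta}{s^2\mu^4}$ and multiplies the whole effective count by $Q$; the two are numerically identical here since $\Gamma/\Phi=\frac{s^2\mu^4}{2\eta\chi^2}$) and a harmless redundancy where you subtract $\|\xi_t\|$ again even though $\nu_t=\zeta_t+\xi_t$ already absorbs it.
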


Proofs of Lemmas~\ref{le-grad-est-error} and~\ref{le-steps} are in Appendix~\ref{sec-le-grad-est-error} and~\ref{sec-le-steps}, respectively.

\begin{remark}[On Gradient Complexity]\label{rm-grad-complexity}
While Lemma~\ref{le-steps} appears to improve gradient complexity from $O(1/\alpha^4)$ in~\cite{jin2021nonconvex} to $O(1/\alpha^{2.5})$, the two results are not directly comparable. In~\cite{jin2021nonconvex}, the error target $\alpha$ is treated as an input and can be arbitrarily small, with gradient variance $\sigma$ typically treated as a constant. In contrast, in our setting, the perturbation $r$ and variance $\sigma$ are fixed by privacy constraints, and $\alpha$ emerges as a function of these. Thus, our gradient complexity fundamentally depends on $\sigma$ and $r$, though we express it in terms of $\alpha$ for clarity.
\end{remark}

Combining all the above, we obtain the final convergence guarantee:

\begin{theorem}[Convergence Guarantee of \texttt{Gauss-PSGD}]\label{th-framework}
Let Assumptions~\ref{assump-loss} and~\ref{assump-func} hold. For any failure probability $\omega \in (0,1)$, using the parameter settings in~\eqref{eq-notation} and setting $Q = \frac{26}{5} \log\left(\frac{16\iota^3 (F_0 - F^*)}{s\omega} \sqrt{\frac{\rho}{\chi^3}}\right)$, then with probability at least $1 - \omega$, \texttt{Gauss-PSGD} (Algorithm~\ref{algo-psgd}) returns an $\alpha$-SOSP of $F(\cdot)$, where $\alpha = 4\chi$, within at most $\tilde{O}(1/\alpha^{2.5})$ PSGD steps.
\end{theorem}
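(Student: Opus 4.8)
The plan is to assemble the theorem from the lemmas and corollaries already stated, treating the argument as a potential-function (energy) accounting over the entire run of \texttt{Gauss-PSGD}. First I would fix the high-probability event on which everything runs: by Lemma~\ref{le-grad-est-error}, with probability at least $1-\omega/2$ the combined noise satisfies $\|\nu_t\|\le\chi$ for all $t\in[T]$, where $T$ is the (still to be bounded) total step count; I would also reserve the remaining $\omega/2$ budget for the union of failure events of the finitely many escape episodes and the per-episode instances of Corollary~\ref{co-descent} and Lemma~\ref{le-decrease}. The key structural observation is that the run decomposes into \emph{normal-descent segments} and \emph{escape episodes}, and the function value $F$ acts as a potential that is driven down in both: during a normal-descent segment Corollary~\ref{co-descent} gives $F$ a net decrease proportional to $\sum\|\nabla F(x_{t_0+i})\|^2$ minus an $O(\eta\psi^2(t+\iota))$ slack term, and since we only stay in normal descent while $\|\hat g_t\|>3\chi$ — which, combined with $\|\nu_t\|\le\chi$, forces $\|\nabla F(x_{t-1})\|\ge 2\chi=\Omega(\chi)$ — each such step contributes a genuine $\Omega(\eta\chi^2)$ decrease that dominates the slack; during an escape episode that succeeds, Lemma~\ref{le-decrease} gives a decrease of exactly $\Phi=\frac{s}{8\iota^3}\sqrt{\alpha^3/\rho}$.

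Next I would argue termination and correctness together. Since $F$ is bounded below by $F^*$ (Assumption~\ref{assump-func} bounds the range by $U$), and every normal-descent step removes $\Omega(\eta\chi^2)$ while every successful escape removes $\Phi$, the number of normal-descent steps is $O\big((F_0-F^*)/(\eta\chi^2)\big)$ and the number of successful escape episodes is $O\big((F_0-F^*)/\Phi\big)$; plugging in the parameter choices from~\eqref{eq-notation} — $\eta=\sqrt{\rho\alpha}/(M^2\iota^2)$, $\chi=\alpha/4$, $\Phi=\Theta(\sqrt{\alpha^3/\rho}/\iota^3)$, and each episode costing at most $Q\Gamma$ steps with $\Gamma=\iota/(s\eta\sqrt{\rho\alpha})=\Theta(M^2\iota/(s\rho\alpha))$ — both counts, and hence $T$, come out to $\tilde O(1/\alpha^{2.5})$ (the polylog factors absorbed into $\mu,\iota$ and into $Q=\Theta(\log(\iota^3(F_0-F^*)\sqrt{\rho/\chi^3}/(s\omega)))$). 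This also closes the circular dependence: $T$ is $\tilde O(1/\alpha^{2.5})$, so the $\log(4T/\omega)$ appearing in Lemma~\ref{le-grad-est-error} and in $\mu$ is itself only polylogarithmic in the problem parameters, consistent with the definition of $\mu$. For correctness: the algorithm returns a point $\tilde x$ only when an escape episode fails, i.e. $Q$ independent \texttt{$\Gamma$-descent} rounds all fail to reach distance $\mathcal{R}$; by Corollary~\ref{co-escape} each round escapes with probability $\ge 1/8$ whenever $\tilde x$ has $\lambda_{\min}(\nabla^2F(\tilde x))\le-\sqrt{\rho\alpha}$, so by Lemma~\ref{le-repeat} (with $Q=\frac{26}{5}\log(1/\omega_0)$ and $\omega_0$ set to the per-episode budget so that the union over all $\le O((F_0-F^*)/\Phi)$ episodes is $\le\omega/2$) the event ``an episode fails despite $\tilde x$ being a strict saddle'' has total probability $\le\omega/2$. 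On the complementary event, a failed escape certifies $\lambda_{\min}(\nabla^2F(\tilde x))>-\sqrt{\rho\alpha}$; and $\|\hat g_{\tilde t}\|\le3\chi$ together with $\|\nu_{\tilde t}\|\le\chi$ gives $\|\nabla F(\tilde x)\|\le4\chi=\alpha$. Hence $\tilde x$ is an $\alpha$-SOSP in the sense of Definition~\ref{def-SOSP}.

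Finally I would take a union bound over all the reserved failure events — the global noise bound ($\omega/2$), the escape-amplification failures aggregated over all episodes ($\le\omega/4$, say), and the Corollary~\ref{co-descent}/Lemma~\ref{le-decrease} per-segment failures aggregated over the $\tilde O(1/\alpha^{2.5})$ segments ($\le\omega/4$) — to conclude that with probability at least $1-\omega$ the algorithm halts within $\tilde O(1/\alpha^{2.5})$ PSGD steps and outputs an $\alpha$-SOSP. The main obstacle I anticipate is the bookkeeping to make the union bound genuinely go through while the step count $T$ is simultaneously being bounded: one must choose $Q$ (and the split of $\omega$ across segments) \emph{before} knowing the exact number of segments, so I would first derive a crude a-priori bound on the number of escape episodes purely from $\Phi$ and the range $U$ (this is where Assumption~\ref{assump-func} is essential — it caps $F_0-F^*$ by $U$ independently of the run), use that to set $\omega_0$ and hence $Q$ as in Lemma~\ref{le-steps}, and only then run the refined accounting; verifying that this crude bound and the refined one are mutually consistent (no hidden blow-up in the logarithms) is the delicate part, but it is essentially the content already packaged in Lemma~\ref{le-steps}, so the theorem follows by combining that step-count bound with the correctness argument above.
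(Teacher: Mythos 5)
Your proposal is correct and takes essentially the same route as the paper: the paper's Theorem~\ref{th-framework} is exactly the assembly of Lemma~\ref{le-grad-est-error} (conditioning on $\|\nu_t\|\le\chi$), Lemma~\ref{le-steps} (the potential/case analysis giving $T=\tilde O(U/(\eta\chi^2))=\tilde O(1/\alpha^{2.5})$ with $N_{\text{saddle}}=U/\Phi$ used to set $\omega_0$ and hence $Q$), and the correctness argument via Corollary~\ref{co-escape} plus Lemma~\ref{le-repeat} (a failed escape episode certifies $\lambda_{\min}(\nabla^2F(\tilde x))>-\sqrt{\rho\alpha}$, and $\|\hat g_{\tilde t}\|\le3\chi$ with $\|\nu_{\tilde t}\|\le\chi$ certifies $\|\nabla F(\tilde x)\|\le\alpha$). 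You correctly flag and resolve the potential circularity in choosing $Q$ before knowing the episode count by using $U/\Phi$ as the a-priori cap (this is precisely what the paper does inside Lemma~\ref{le-steps}); the only nit is an arithmetic slip in your expansion $\Gamma=\Theta(M^2\iota/(s\rho\alpha))$, which should be $\Theta(M^2\iota^3/(s\rho\alpha))$, but since $\iota=s\mu$ is polylogarithmic this is absorbed into the $\tilde O$ and does not affect the conclusion.
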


\section{Rectified Error Rate for finding SOSP in DP Stochastic Optimization}\label{sec-DP}

\subsection{Adaptive Gradient Oracle: \texttt{Ada-DP-SPIDER}}

In this section, we derive the upper bound on the error rate for DP stochastic optimization by instantiating the \texttt{Gauss-PSGD} framework with a specific gradient oracle. We adopt an adaptive version of the DP-SPIDER estimator, referred to as \texttt{Ada-DP-SPIDER}, which is presented in Algorithm~\ref{algo-adpt-spider}. This adaptive version refines the original SPIDER by dynamically adjusting gradient queries based on model drift. Unlike standard SPIDER, which queries $\mathcal{O}_1$ at fixed intervals and may suffer from growing estimation error over time, \texttt{Ada-DP-SPIDER} tracks the cumulative model drift defined as
\begin{equation}
    \textsf{drift}_t\coloneqq\sum_{i=\tau(t)}^t\|x_i-x_{i-1}\|^2,
\end{equation}
where $\tau(t)$ is the last iteration at which the full gradient oracle $\mathcal{O}_1$ was queried. 

The intuition is that, for smooth functions, the error of $\mathcal{O}_2$, which estimates $\nabla F(x_{t-1}) - \nabla F(x_{t-2})$, is proportional to $\|x_{t-1} - x_{t-2}\|$. When the model drift is small, $\mathcal{O}_2$ remains accurate, allowing for continued use to reduce variance (steps~9-11). However, when the drift becomes large, further use of $\mathcal{O}_2$ can accumulate significant errors. To mitigate this, the algorithm triggers a fresh query to $\mathcal{O}_1$ (steps~4-7). A threshold $\kappa$ is used in step~3 to determine when the drift is large. This enables adaptive switching between oracles based on the model drift, ensuring the total error remains well controlled.

Our approach differs fundamentally from that of~\cite{liu2024private}. In their method, in addition to using model drift to trigger $\mathcal{O}_1$, they also invoke $\mathcal{O}_1$ when approaching potential saddle points and inject an additional Gaussian noise on top of the DP gradient estimator to escape. To prevent excessive noise injection, they introduce a \textsf{Frozen} state to restrict how frequently this occurs. In contrast, our method leverages the inherent Gaussian noise from the DP gradient estimator for saddle point escape and uses model drift as the sole trigger for querying $\mathcal{O}_1$. This results in a simpler, more efficient estimator without auxiliary state tracking or redundant noise injection.

\subsection{Error Rate Analysis for DP-SOSP with \texttt{Ada-DP-SPIDER}}

To minimize the error rate $\alpha$ for DP-SOSP using \texttt{Ada-DP-SPIDER}, we must carefully tune algorithmic parameters, including the mini-batch sizes $b_1$, $b_2$, and the drift threshold $\kappa$. These parameters directly influence the gradient estimation error, which, according to Theorem~\ref{th-framework}, dominates the learning error. The following lemma characterizes how these parameters affect the estimation quality:

\begin{lemma}\label{le-adpt-spider-error}
Let Assumption~\ref{assump-loss} hold. For all $t \in [T]$, the gradient estimate $\hat{g}_t$ given by \texttt{Ada-DP-SPIDER} satisfies: $\sigma \le O\left(\sqrt{\frac{G^2 \log^2 d}{b_1} + \frac{M^2 \log^2 d}{b_2} \kappa} \right), r \le O\left( \sqrt{ \frac{G^2 \log(1/\delta)}{b_1^2 \epsilon^2} + \frac{M^2 \log(1/\delta)}{b_2^2 \epsilon^2} \kappa } \right)$.
\end{lemma}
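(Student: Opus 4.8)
The plan is to bound separately the two sources of noise in the \texttt{Ada-DP-SPIDER} estimate $\hat g_t$: (i) the \emph{sampling noise} $\sigma$ coming from using mini-batches instead of the full population gradient, and (ii) the \emph{privacy noise} $r$ coming from the Gaussian mechanism applied to keep each released gradient private. The key structural fact I would exploit is the telescoping identity for SPIDER: between two consecutive $\mathcal{O}_1$-queries (at iteration $\tau(t)$ and the current $t$), the estimate satisfies $\hat g_t = \nabla\hat f_{\mathcal{B}_{\tau(t)}}(x_{\tau(t)-1}) + \sum_{i=\tau(t)+1}^t \big(\nabla\hat f_{\mathcal{B}_i}(x_{i-1}) - \nabla\hat f_{\mathcal{B}_i}(x_{i-2})\big)$ plus the accumulated injected Gaussian noises. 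So the error $\hat g_t - \nabla F(x_{t-1})$ decomposes into a sum of a "full-batch" fluctuation term and a sum of "difference" fluctuation terms, each over an independent fresh mini-batch, plus an independent Gaussian term.

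First I would handle the sampling noise $\sigma$. For the $\mathcal{O}_1$ term, each summand $\nabla f(x_{\tau(t)-1};z) - \nabla F(x_{\tau(t)-1})$ is a mean-zero vector of norm at most $2G$ (by $G$-Lipschitzness), hence norm-sub-Gaussian with parameter $O(G)$; averaging over a batch of size $b_1$ and applying the norm-sub-Gaussian concentration (the vector Azuma/Hoeffding-type bound from Appendix~\ref{sec-facts}) gives a contribution of order $G\log d / \sqrt{b_1}$. For each $\mathcal{O}_2$ term, by $M$-smoothness the per-sample difference $\big(\nabla f(x_{i-1};z)-\nabla f(x_{i-2};z)\big) - \big(\nabla F(x_{i-1})-\nabla F(x_{i-2})\big)$ has norm at most $2M\|x_{i-1}-x_{i-2}\|$, so averaging over $b_2$ samples yields a norm-sub-Gaussian term with parameter $O(M\|x_{i-1}-x_{i-2}\|/\sqrt{b_2})$. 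Summing these independent martingale-difference terms over $i$ from $\tau(t)+1$ to $t$ and again invoking the norm-sub-Gaussian concentration, the squared parameter adds, giving $O\!\big(\frac{M^2\log^2 d}{b_2}\sum_i \|x_{i-1}-x_{i-2}\|^2\big) = O\!\big(\frac{M^2\log^2 d}{b_2}\,\textsf{drift}_t\big)$. Because the algorithm's step~3 guarantees $\textsf{drift}_t \le \kappa$ at all times (a fresh $\mathcal{O}_1$ is triggered the moment the drift would exceed $\kappa$), this is $O\!\big(\frac{M^2\log^2 d}{b_2}\kappa\big)$. Adding the two pieces and taking square roots gives the claimed bound on $\sigma$.

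Next, the privacy noise $r$. Each $\mathcal{O}_1$ query releases $\nabla\hat f_{\mathcal{B}_{\tau(t)}}(x_{\tau(t)-1})$ with $\ell_2$-sensitivity $O(G/b_1)$ (one record changes one of the $b_1$ summands, each bounded by $G$), so a Gaussian mechanism calibrated for the per-query privacy budget adds noise of standard deviation $O\big(G\sqrt{\log(1/\delta)}/(b_1\epsilon)\big)$ per coordinate. Each $\mathcal{O}_2$ query releases a gradient difference with sensitivity $O(M\|x_{i-1}-x_{i-2}\|/b_2)$ by $M$-smoothness, giving injected noise of standard deviation $O\big(M\|x_{i-1}-x_{i-2}\|\sqrt{\log(1/\delta)}/(b_2\epsilon)\big)$ per coordinate. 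Here I must be slightly careful about privacy accounting: the overall $(\epsilon,\delta)$ budget is split across the $T$ queries (via advanced composition / the moments accountant), and each sample participates in only $O(1)$ queries within a SPIDER epoch under the sampling scheme — this is where the per-query $\epsilon$ in the displayed bound comes from, and I would state it as already folded into the $O(\cdot)$ up to log factors, consistent with how the paper writes things. Since the injected Gaussians are independent across queries, the variances in the $v_{\min}$ and remaining directions add; the total per-coordinate variance is $O\big(\frac{G^2\log(1/\delta)}{b_1^2\epsilon^2}\big)$ from the single $\mathcal{O}_1$ term plus $O\big(\frac{M^2\log(1/\delta)}{b_2^2\epsilon^2}\sum_i\|x_{i-1}-x_{i-2}\|^2\big) = O\big(\frac{M^2\log(1/\delta)}{b_2^2\epsilon^2}\textsf{drift}_t\big) \le O\big(\frac{M^2\log(1/\delta)}{b_2^2\epsilon^2}\kappa\big)$ from the $\mathcal{O}_2$ terms. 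Writing $\xi_t\sim\mathcal N(0,r^2\mathbf I_d)$ with $r^2$ equal to this total per-coordinate variance yields the claimed bound on $r$.

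\textbf{Main obstacle.} The routine parts are the Lipschitz/smoothness sensitivity computations and the norm-sub-Gaussian concentration. The genuinely delicate step is controlling the accumulated $\mathcal{O}_2$ error \emph{uniformly over all $t\in[T]$} by the single quantity $\kappa$: one must argue that the adaptive trigger in step~3 of \texttt{Ada-DP-SPIDER} indeed enforces $\textsf{drift}_t\le\kappa$ for every $t$ (and reset drift to $0$ exactly when $\mathcal{O}_1$ fires), and that the high-probability bounds hold simultaneously across all iterations via a union bound absorbed into the $\log d$ and $\log(1/\delta)$ factors. A secondary subtlety is the privacy-composition bookkeeping: making precise that splitting the budget across $T$ queries, combined with the fact that each data record is touched $O(1)$ times per epoch, leaves the per-query noise at the stated scale without extra $\sqrt{T}$ or $\sqrt{\#\text{epochs}}$ factors leaking into $r$ — this requires invoking the specific amplification/composition lemma for DP-SPIDER-type estimators (cf.\ \cite{arora2023faster}) rather than naive advanced composition.
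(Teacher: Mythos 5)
Your proposal matches the paper's proof essentially step for step: decompose $\hat g_t - \nabla F(x_{t-1})$ via the SPIDER telescoping between the last $\mathcal{O}_1$-query $\tau(t)$ and $t$, bound the $\mathcal{O}_1$ sampling term via $G$-Lipschitzness (giving $\operatorname{nSG}(G\sqrt{\log d}/\sqrt{b_1})$), bound each $\mathcal{O}_2$ sampling term via $M$-smoothness (giving $\operatorname{nSG}(M\|x_{i-1}-x_{i-2}\|\sqrt{\log d}/\sqrt{b_2})$), compose via the norm-sub-Gaussian Hoeffding corollary, and then bound $\sum_i \|x_{i-1}-x_{i-2}\|^2$ by $\kappa$ using the drift trigger; the injected Gaussian variances likewise add and are bounded by the same drift control. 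This is exactly what the paper does in Appendix~\ref{sec-le-adpt-spider-error}, split into the two cases $t=\tau(t)$ and $t>\tau(t)$.

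One small correction to your framing of the ``secondary subtlety'': the privacy-composition bookkeeping you worry about is not part of this lemma and does not affect the stated bounds on $r$. The lemma merely records the noise magnitudes that Algorithm~\ref{algo-adpt-spider} \emph{injects} (steps~5 and~10 of the pseudocode), so $r$ follows by direct inspection of the algorithm plus the drift bound. The question of whether this noise level \emph{suffices} for $(\epsilon,\delta)$-DP is deferred to Theorem~\ref{th-adapt-spider}, and there the paper uses \emph{parallel} composition (each sample appears in exactly one disjoint mini-batch, since $b_1|\mathcal T| + b_2 T \le O(n)$), not advanced composition or a moments accountant across $T$ queries. So no per-query budget splitting and no $\sqrt T$ factor arise; the per-query $\epsilon$ in Algorithm~\ref{algo-adpt-spider} \emph{is} the global $\epsilon$. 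Your proof of the lemma itself is otherwise sound.
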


The proof is given in Appendix~\ref{sec-le-adpt-spider-error}. To ensure that $b_1$ and $b_2$ remain valid mini-batch sizes under a fixed sample budget, we must control the number of times $\mathcal{O}_1$ is queried. Lemma~\ref{le-driftnum} bounds the count:

\begin{lemma}\label{le-driftnum}
    Let Assumption~\ref{assump-loss} and~\ref{assump-func} hold. Define $\mathcal{T}\coloneqq\{t\in[T]: \operatorname{drift}_{t}\ge\kappa\}$ as the set of rounds where the drift exceeds the threshold $\kappa$. With high probability (as in Theorem~\ref{th-framework}), $|\mathcal{T}|\le O\left(U\eta/\kappa\right)$.
\end{lemma}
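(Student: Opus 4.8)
The plan is to use a potential/amortization argument that ties each ``large drift'' event to a guaranteed decrease in the population risk $F(\cdot)$, then invoke the global range bound $\max_x F(x) - F^* \le U$ from Assumption~\ref{assump-func} to cap the number of such events. Concretely, fix a window between two consecutive queries to $\mathcal{O}_1$, say from iteration $\tau(t)$ to the iteration $t$ at which $\operatorname{drift}_t \ge \kappa$ first occurs (triggering a fresh $\mathcal{O}_1$ call). Over this window the algorithm performs ordinary PSGD descent steps $x_{i} = x_{i-1} - \eta \hat g_i$, so I would apply the Descent Lemma (Lemma~\ref{le-descent}) over that window, together with its high-probability corollary (Corollary~\ref{co-descent}) controlling $\sum \|\nu_{i}\|^2$ by $c\eta\psi^2(\text{window length} + \iota)$. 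The key observation is that $\operatorname{drift}_t = \sum_{i=\tau(t)}^{t}\|x_i - x_{i-1}\|^2 = \eta^2 \sum \|\hat g_i\|^2$, so a drift of at least $\kappa$ forces $\sum\|\hat g_i\|^2 \ge \kappa/\eta^2$, i.e. the accumulated squared gradient-estimate norm over the window is large; combined with the descent lemma this yields $F(x_t) - F(x_{\tau(t)}) \le -\Omega(\kappa/\eta) + (\text{noise and lower-order terms})$.

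The steps in order: (1) partition $[T]$ into the (disjoint, consecutive) windows delimited by the times in $\mathcal{T}$ together with the $\mathcal{O}_1$-queries triggered by them, noting $|\mathcal{T}|$ equals the number of drift-triggered $\mathcal{O}_1$ calls; (2) on each such window apply Corollary~\ref{co-descent} to get $F(x_{\text{end}}) - F(x_{\text{start}}) \le -\frac{\eta}{2}\sum_{i}\|\nabla F(x_i)\|^2 + c\eta\psi^2(L+\iota)$ where $L$ is the window length; (3) relate $\sum\|\nabla F(x_i)\|^2$ to $\operatorname{drift}$: since $\hat g_i = \nabla F(x_{i-1}) + \nu_i$ and $\|\nu_i\|\le\chi$ w.h.p.\ (Lemma~\ref{le-grad-est-error}), we get $\|\nabla F(x_{i-1})\|^2 \ge \tfrac12\|\hat g_i\|^2 - \|\nu_i\|^2 \ge \tfrac12\|\hat g_i\|^2 - \chi^2$, hence $\eta\sum\|\nabla F(x_{i-1})\|^2 \gtrsim \operatorname{drift}/\eta - \eta L\chi^2 \ge \kappa/\eta - \eta L \chi^2$ on a triggering window; (4) since $\eta$, $\chi$, $\psi$, $\kappa$ are all pinned down by the parameter settings in~\eqref{eq-notation} and the choice of $\kappa$ in the analysis, verify that the ``bad'' terms $c\eta\psi^2(L+\iota)$ and $\eta^2 L\chi^2$ are dominated by a constant fraction of $\kappa/\eta$ — this is where one checks the drift threshold $\kappa$ was chosen large enough relative to $\eta^2\psi^2$ so that each triggering window nets a net decrease of at least $\Omega(\kappa/\eta)$; (5) sum over all windows: the total decrease telescopes and is bounded below by $-(\max_x F(x) - F(x^*)) \ge -U$ (using also that non-triggering windows and the escape phases do not increase $F$ beyond controllable amounts — the escape phases are handled by Lemma~\ref{le-decrease}, which only ever \emph{decreases} $F$), so $|\mathcal{T}| \cdot \Omega(\kappa/\eta) \le U + (\text{lower order})$, giving $|\mathcal{T}| \le O(U\eta/\kappa)$.

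The main obstacle I anticipate is step~(4)/(5): cleanly accounting for \emph{all} contributions to the change in $F$ across the whole run — not just the drift-triggering windows but also the windows that end because the iteration budget or the small-gradient/escape branch intervened, and the $\Gamma$-descent escape episodes themselves — and showing they don't spoil the telescoping lower bound by $-U$. One has to argue that $F$ is essentially monotone non-increasing up to the accumulated noise budget $c\eta\psi^2 T$, and that this noise budget is itself $O(U)$ (or at least does not dominate), which requires feeding back the final iteration bound $T = \tilde O(1/\alpha^{2.5})$ from Lemma~\ref{le-steps} and the relation $\alpha = 4\chi \propto \psi$; keeping these interdependencies non-circular is the delicate part. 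A secondary technical point is ensuring the high-probability events from Corollary~\ref{co-descent} and Lemma~\ref{le-grad-est-error} hold \emph{simultaneously} across all $O(T)$ windows, which is handled by a union bound absorbed into the polylogarithmic factors, consistent with the ``with high probability (as in Theorem~\ref{th-framework})'' qualifier in the statement.
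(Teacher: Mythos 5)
Your overall strategy matches the paper's: amortize each drift-triggered $\mathcal{O}_1$ call against a guaranteed decrease in $F$, telescope over the windows between resets, and invoke the global range bound $U$ together with $T=O(U/(\eta\chi^2))$ from Lemma~\ref{le-steps}. The paper's route to the per-step inequality is more direct, though. Rather than applying Corollary~\ref{co-descent} (which is phrased in terms of $\|\nabla F\|^2$) and then converting via $\|\nabla F(x_{i-1})\|^2\ge\tfrac12\|\hat g_i\|^2-\|\nu_i\|^2$, the paper writes a fresh smoothness inequality
\[
F(x_t)-F(x_{t-1})\le \eta\|\nabla F(x_{t-1})-\hat g_t\|\,\|\hat g_t\|-\tfrac{\eta}{2}\|\hat g_t\|^2,
\]
applies $\|\nabla F(x_{t-1})-\hat g_t\|\le\chi$ from Lemma~\ref{le-grad-est-error}, and splits into the two cases $\|\nabla F(x_{t-1})\|\ge 4\chi$ (giving $F(x_t)-F(x_{t-1})\le-\tfrac{\eta}{6}\|\hat g_t\|^2$) and $\|\nabla F(x_{t-1})\|\le 4\chi$ (giving $F(x_t)-F(x_{t-1})\le 5\eta\chi^2$). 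This yields the per-step bound $F(x_t)-F(x_{t-1})\le-\tfrac{\eta}{6}\|\hat g_t\|^2+5\eta\chi^2$ directly in the quantity $\|\hat g_t\|^2$ that defines $\operatorname{drift}_t$, so summing a triggering window immediately gives $-\tfrac{\kappa}{6\eta}+5\eta\chi^2\cdot(\text{window length})$. Your version is equally correct (and even has a marginally better constant $\tfrac{1}{4\eta}$ versus $\tfrac{1}{6\eta}$), but it routes through $\nabla F$ only to convert back to $\hat g$, which the paper avoids.

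One place where you are over-cautious: in your step~(4) you try to ensure each triggering window nets a decrease of $\Omega(\kappa/\eta)$ by choosing $\kappa$ large relative to $\eta^2\psi^2$. This is not needed and the paper does not do it. The paper simply accumulates the $O(\eta\chi^2)$ slack over all $T$ steps to get $5T\eta\chi^2$, then after writing $-U\le -\tfrac{|\mathcal{T}|\kappa}{6\eta}+5T\eta\chi^2$ uses $T=O(U/(\eta\chi^2))$ to conclude $5T\eta\chi^2=O(U)$, hence $|\mathcal{T}|\le O(U\eta/\kappa)$ without any per-window dominance condition. This also defuses your worry about circularity: $T=\tilde O(U/(\eta\chi^2))$ comes from Lemma~\ref{le-steps}/Theorem~\ref{th-framework}, which are proved independently of Lemma~\ref{le-driftnum}. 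Your remark about the escape phases potentially spoiling the telescoping is a legitimate subtlety that the paper's proof glosses over (it sums the per-step bound over all pairs of reset times without explicitly handling the iterate resets in the $Q$ repeated $\Gamma$-descent rounds); the implicit resolution, which you correctly identify, is that only the successful escape path survives and by Lemma~\ref{le-decrease} it decreases $F$, so the telescoping applies to the effective iterate sequence.
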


Proof is in Appendix~\ref{sec-le-driftnum}. Guided by Lemmas~\ref{le-adpt-spider-error} and~\ref{le-driftnum}, we now derive the error bound for $\alpha$ via appropriate choices of $b_1$, $b_2$, and $\kappa$ in Theorem~\ref{th-adapt-spider}. The proof is provided in Appendix~\ref{sec-th-adapt-spider}.

\begin{theorem}\label{th-adapt-spider}
    Let Assumption~\ref{assump-loss} and~\ref{assump-func} hold. Define $b_1=\frac{n\kappa}{2U\eta}$, $b_2=\frac{n\eta\chi^2}{2U}$ and $\kappa=\max\left\{\frac{G^{3/2}U^{1/2}\rho^{1/2}}{M^{5/2}n^{1/2}}, \frac{G^{14/15}d^{2/5}U^{4/5}\rho^{8/15}}{M^{34/15}(n\epsilon)^{4/5}}\right\}$. Then, running \texttt{Gauss-PSGD} with gradient oracle instantiated by \texttt{Ada-DP-SPIDER} ensures $(\epsilon,\delta)$-DP for constants $c_1, c_2$ and returns an $\alpha$-SOSP with $\alpha=\tilde{O}\left(\frac{1}{n^{1/3}} + \left(\frac{\sqrt{d}}{n\epsilon}\right)^{2/5}\right)$.
\end{theorem}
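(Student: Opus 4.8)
\textbf{Proof proposal for Theorem~\ref{th-adapt-spider}.}

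The plan is to combine the generic convergence guarantee of Theorem~\ref{th-framework} with the oracle-specific estimates in Lemmas~\ref{le-adpt-spider-error} and~\ref{le-driftnum}, treating the problem as an optimization over the free parameters $b_1$, $b_2$, and $\kappa$ subject to a total sample budget of $n$. First I would establish privacy: each query to $\mathcal{O}_1$ touches a fresh mini-batch of size $b_1$ and each query to $\mathcal{O}_2$ a mini-batch of size $b_2$, with Gaussian noise calibrated to the respective $\ell_2$-sensitivities ($G/b_1$ for a full gradient, $M\|x_{t-1}-x_{t-2}\|/b_2 \le M\sqrt{\kappa}/b_2$ for an increment, using $M$-smoothness and the drift cap $\kappa$). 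Since disjoint mini-batches are used across the rounds that query $\mathcal{O}_1$, and likewise for $\mathcal{O}_2$, parallel composition plus advanced composition (or the moments accountant, as in the DP-SPIDER literature) over the $O(U\eta/\kappa)$ restarts and the $O(T)$ increment steps yields $(\epsilon,\delta)$-DP for the stated constants $c_1,c_2$; this is where the factors $\log(1/\delta)$ and the $1/\epsilon$ dependence in the $r$-bound of Lemma~\ref{le-adpt-spider-error} come from. The sample-budget constraint is that the number of $\mathcal{O}_1$-queries times $b_1$ must not exceed $n$ (and similarly the $\mathcal{O}_2$-queries times $b_2$); by Lemma~\ref{le-driftnum} the former is $O(U\eta/\kappa)\cdot b_1$, which the choice $b_1 = n\kappa/(2U\eta)$ makes exactly $n/2$, and by Lemma~\ref{le-steps} the total step count is $\tilde O(1/\alpha^{2.5})$, whose product with $b_2 = n\eta\chi^2/(2U)$ must stay within $n/2$.

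Next I would substitute these choices into Lemma~\ref{le-adpt-spider-error} to get $\sigma^2 = \tilde O\big(G^2 U\eta/(n\kappa)\cdot\kappa/\eta \cdot\tfrac{1}{\cdots}\big)$ — more carefully, $\sigma^2 \lesssim \tfrac{G^2\log^2 d}{b_1} + \tfrac{M^2\log^2 d}{b_2}\kappa$ and $r^2 d \lesssim d\big(\tfrac{G^2\log(1/\delta)}{b_1^2\epsilon^2} + \tfrac{M^2\log(1/\delta)}{b_2^2\epsilon^2}\kappa\big)$, so that the effective noise $\psi^2 = \sigma^2 + r^2 d$ becomes an explicit function of $\kappa$ (and of $\alpha$ through $\eta$, $\chi$). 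Recalling from \eqref{eq-notation} that $\alpha = 4\chi = \tilde\Theta(\psi)$ up to the logarithmic factor $\mu$ and the constant $\sqrt{C}s\mu^2$, this gives a self-consistent (fixed-point) equation for $\alpha$ in terms of $\kappa$. I would then solve it: the first term of $\psi^2$ contributes a statistical term scaling like $n^{-2/3}$ (after balancing with the drift-threshold terms), and the privacy term contributes $(\sqrt d/(n\epsilon))^{4/5}$; taking $\kappa$ to be the stated maximum of the two candidate expressions is precisely what balances the statistical regime against the privacy regime, and back-substituting yields $\alpha = \tilde O\big(n^{-1/3} + (\sqrt d/(n\epsilon))^{2/5}\big)$. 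The exponents $1/3$ and $2/5$ should fall out of the algebra: $\alpha \sim \psi$, $\psi^2 \sim \kappa/\eta \sim \kappa M^2\iota^2/\sqrt{\rho\alpha}$ in the $\mathcal{O}_2$-dominated term, so $\alpha^{2.5}\sim \kappa$ up to constants, and matching with the $\mathcal{O}_1$ term $G^2/b_1 = G^2\cdot 2U\eta/(n\kappa)$ and with the privacy terms pins down $\kappa$ and hence $\alpha$.

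The main obstacle I anticipate is the bookkeeping of the fixed-point/self-referential structure: $\eta$, $\chi$, $\Gamma$, $\mathcal{R}$, and the logarithmic factor $\mu$ in \eqref{eq-notation} all depend on $\alpha$ and on $\psi$, while $\psi$ in turn depends on $b_1,b_2,\kappa$ which depend on $\alpha$ and $\eta$; one must verify this system has a consistent solution with the claimed rate and that all the side conditions ($b_1,b_2 \ge 1$, $M\ge\sqrt{\rho\alpha}$, the validity range of $\mu$, and the requirement that $r$ is not so small as to violate any lower bound used in the escape analysis) hold in both the statistically-dominated and privacy-dominated regimes. A secondary technical point is making the privacy accounting tight: one must ensure the adaptivity of $\kappa$-triggered restarts does not break the disjointness of mini-batches (the algorithm must pre-partition the dataset, or the number of restarts must be bounded deterministically, not just with high probability) — I would handle this by using the high-probability bound of Lemma~\ref{le-driftnum} to cap the batch consumption and, if necessary, halting the algorithm once the budget is exhausted, which happens only on the low-probability failure event already absorbed into $\omega$. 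Once the regime analysis is organized cleanly, the remaining steps are routine substitution and simplification, and the two terms in the bound correspond transparently to the two arguments of the $\max$ defining $\kappa$.
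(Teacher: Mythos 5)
Your proposal follows essentially the same route as the paper's proof: verify the per-oracle sample budget $b_1\cdot|\mathcal{T}| + b_2\cdot T \le O(n)$ via Lemma~\ref{le-driftnum}, obtain privacy from the Gaussian mechanism plus parallel composition over disjoint mini-batches, substitute the choices of $b_1$, $b_2$ into Lemma~\ref{le-adpt-spider-error}, and solve the self-consistent relation $\alpha = \tilde O(\psi) = \tilde O(\sqrt{\sigma^2 + r^2 d})$ by picking $\kappa$ to balance the statistical and privacy terms. One small correction: since each sample is consumed by exactly one mini-batch (which the budget check guarantees), parallel composition alone suffices; invoking advanced composition or the moments accountant on top of it, as you suggest, is unnecessary and would only introduce spurious $\sqrt{T}$ or extra $\log(1/\delta)$ factors.
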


\begin{remark}[No Cyclic Dependency Among Parameters]
All algorithmic parameters are consistently defined in terms of the problem parameters $n$, $d$, and $\epsilon$. Specifically, \texttt{Gauss-PSGD} parameters such as the step size $\eta$ and the noise scale $\chi$ depend on the target error $\alpha$ (see \eqref{eq-notation}), and the gradient oracle parameters $b_1$ and $b_2$ are defined through $\eta$ and $\chi$, and thus also indirectly depend on $\alpha$. In the proof of Theorem~\ref{th-adapt-spider}, by utilizing the relationship $\alpha = \tilde{O}(\sqrt{\sigma^2 + r^2 d})$, we obtain the closed-form expression of $\alpha$ that depends solely on the problem parameters $n$, $d$, and $\epsilon$. As a result, all algorithm parameters are ultimately determined by $n$, $d$, and $\epsilon$, and there is no cyclic dependency in the parameter design.
\end{remark}

\begin{algorithm}[t]
\caption{\texttt{Ada-DP-SPIDER}}
\label{algo-adpt-spider}

\KwIn{DP budget $\epsilon$ and $\delta$, horizon $T$, model iterates $\{x_{t-1}\}_{t=1}^T$, drift threshold $\kappa$}

$t\gets 1$, $\textsf{drift}\gets\kappa$\;
\While{$t\le T$}{
    \eIf{$\textsf{drift}\ge\kappa$}{
        \tcc{Using oracle $ \mathcal{O}_1$}
        Sample mini-batch $\mathcal{B}_t$ of size $b_1$ from $\mathcal{D}$\;
        Sample $\xi_t\sim\mathcal{N}(0, c_1\frac{G^2\log\frac{1}{\delta}}{b_1^2\epsilon^2}\mathbf{I}_d)$\;
        $\hat{g}_{t}\gets\mathcal{O}_1(x_{t-1}, \mathcal{B}_t)+\xi_t$\;
        $\textsf{drift}\gets 0$\;
    }{
        \tcc{Using oracle $\mathcal{O}_2$}
        Sample mini-batch $\mathcal{B}_t$ of size $b_2$ from $\mathcal{D}$\;
        Sample $\xi_t\sim\mathcal{N}(0,$ $c_2\frac{M^2\log\frac{1}{\delta}}{b_2^2\epsilon^2}\|x_{t\!-\!1}\!-\!x_{t\!-\!2}\|^2\mathbf{I}_d)$\;
        $\hat{g}_{t}\gets\hat{g}_{t-1}+\mathcal{O}_2(x_{t-1}, x_{t-2}, \mathcal{B}_t)+\xi_t$\;
    }
    $\textsf{drift}\gets \textsf{drift}+\eta^2\|\hat{g}_t\|^2$\;
    $t\gets t+1$\;
}
\KwOut{$\hat{g}_1, \hat{g}_2, \cdots, \hat{g}_T$}
\end{algorithm}

\begin{algorithm}[t]
\caption{Distributed \texttt{Ada-DP-SPIDER}}
\label{algo-dis-adpt-spider}

\KwIn{DP budget $\epsilon$ and $\delta$, horizon $T$, model iterates $\{x_{t-1}\}_{t=1}^T$, drift threshold $\kappa$}

$t\gets 1$, $\textsf{drift}\gets\kappa$\;
\While{$t\le T$}{
    \eIf{$\textsf{drift}\ge\kappa$}{
        \For{\textbf{\textup{every}} \textbf{client} $j$ \textbf{\textup{in parallel}}}{
            Sample mini-batch $\mathcal{B}_{j,t}$ of size $b_1$ from $\mathcal{D}_j$\;
            Sample $\xi_{j,t}\sim\mathcal{N}(0, c_1\frac{G^2\log\frac{1}{\delta}}{b_1^2\epsilon^2}\mathbf{I}_d)$\;
            $\hat{g}_{j,t}\gets\mathcal{O}_1(x_{t-1}, \mathcal{B}_{j,t})+\xi_{j,t}$\;
            Send $\hat{g}_{j,t}$ to the server\;
        }        $\textsf{drift}\gets 0$\;
    }{
        \For{\textbf{\textup{every}} \textbf{client} $i$ \textbf{\textup{in parallel}}}{
            Sample mini-batch $\mathcal{B}_{j,t}$ of size $b_2$ from $\mathcal{D}_j$\;
            Sample $\xi_{j,t}\sim\mathcal{N}(0,$ $c_2\frac{M^2\log\frac{1}{\delta}}{b_2^2\epsilon^2}\|x_{t\!-\!1}\!-\!x_{t\!-\!2}\|^2\mathbf{I}_d)$\;
            $\hat{g}_{j,t}\gets\hat{g}_{j,t-1}+\mathcal{O}_2(x_{t-1}, x_{t-2}, \mathcal{B}_{j,t})+\xi_{j,t}$\;
            Send $\hat{g}_{j,t}$ to the server\;
        }
    }
    $\hat{g}_t\gets\frac{1}{m}\sum_{j=1}^m \hat{g}_{j,t}$\;
    $\textsf{drift}\gets \textsf{drift}+\eta^2\|\hat{g}_t\|^2$\;
    $t\gets t+1$\;
}
\KwOut{$\hat{g}_1, \hat{g}_2, \cdots, \hat{g}_T$}
\end{algorithm}

\section{Extension to Distributed SGD}\label{sec-distributed}


By adapting the centralized gradient oracle \texttt{Ada-DP-SPIDER} (Algorithm~\ref{algo-adpt-spider}) to the distributed setting, we obtain \texttt{Distributed Ada-DP-SPIDER} (Algorithm~\ref{algo-dis-adpt-spider}), enabling our \texttt{Gauss-PSGD} framework to extend seamlessly to distributed learning scenarios. The primary difference lies in the computation and communication scheme: in the distributed variant, each client performs local gradient estimation with private noise and communicates the privatized estimate to the server, which then aggregates the results. This avoids centralized access to raw data while still leveraging collective information.

The learning algorithm using \texttt{Distributed Ada-DP-SPIDER} can be viewed as an adaptive extension of the DIFF2 algorithm~\cite{murata2023diff2}, which uses standard SPIDER and is limited to convergence to DP-FOSP under \textit{homogeneous} data. To the best of our knowledge, our method is the first to achieve convergence to a DP-SOSP in a distributed setting with \textit{heterogeneous} data.

Following the same analytical strategy as in Section~\ref{sec-DP}, we first quantify in Lemma~\ref{le-dis-adpt-spider-error} the gradient estimation quality in the distributed case. The proof is provided in Appendix~\ref{sec-le-dis-adpt-spider-error}.

\begin{lemma}\label{le-dis-adpt-spider-error}
    Let Assumption~\ref{assump-loss} hold. For all $t\in[T]$, the distributed \texttt{Ada-DP-SPIDER} ensures that the gradient estimate $\hat{g}_t$ satisfies $\sigma\le O\left(\sqrt{\frac{G^2\log^2 d}{m\cdot b_1}+\frac{M^2\log^2 d}{m \cdot b_2}\kappa}\right),  r\le O\left(\sqrt{\frac{G^2\log\frac{1}{\delta}}{m\cdot b_1^2\epsilon^2}+\frac{M^2\log\frac{1}{\delta}}{m\cdot b_2^2\epsilon^2}\kappa}\right)$.
\end{lemma}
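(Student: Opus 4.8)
The plan is to mirror the analysis of the centralized Lemma~\ref{le-adpt-spider-error} and track how averaging $m$ independent client estimates rescales both the statistical variance $\sigma^2$ and the privacy-noise scale $r^2$. Recall the aggregated estimate is $\hat g_t = \frac{1}{m}\sum_{j=1}^m \hat g_{j,t}$, where each $\hat g_{j,t}$ follows the same recursion as in the centralized case but with its own fresh mini-batch $\mathcal{B}_{j,t}$ drawn from $\mathcal{D}_j$ and its own Gaussian noise $\xi_{j,t}$. Write $\hat g_{j,t} = \nabla F_j(x_{t-1}) + \zeta_{j,t} + \xi_{j,t}$ by unrolling the SPIDER recursion (with the $\mathcal{O}_1$ step resetting the accumulated error, exactly as in the proof of Lemma~\ref{le-adpt-spider-error}). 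Averaging gives $\hat g_t = \frac{1}{m}\sum_j \nabla F_j(x_{t-1}) + \frac{1}{m}\sum_j \zeta_{j,t} + \frac{1}{m}\sum_j \xi_{j,t} = \nabla F(x_{t-1}) + \zeta_t + \xi_t$, where the crucial point is that the global population risk is \emph{defined} as $F = \frac{1}{m}\sum_j F_j$, so the deterministic part is unbiased for $\nabla F$ with \emph{no} heterogeneity bias term — this is why arbitrary heterogeneity is tolerated.

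First I would handle the privacy noise $\xi_t = \frac{1}{m}\sum_j \xi_{j,t}$. Since the $\xi_{j,t}$ are independent Gaussians, in an $\mathcal{O}_1$ round each has covariance $c_1 \frac{G^2\log(1/\delta)}{b_1^2\epsilon^2}\mathbf{I}_d$, so the average has covariance $\frac{1}{m}\cdot c_1 \frac{G^2\log(1/\delta)}{b_1^2\epsilon^2}\mathbf{I}_d$; similarly in an $\mathcal{O}_2$ round the averaged covariance picks up a factor $\frac{1}{m}$ times the per-client $c_2 \frac{M^2\log(1/\delta)}{b_2^2\epsilon^2}\|x_{t-1}-x_{t-2}\|^2\mathbf{I}_d$. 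Unrolling the recursion over a block of length at most $\kappa/\eta^2$ worth of drift (the drift threshold triggers an $\mathcal{O}_1$ reset), the accumulated $\mathcal{O}_2$ contributions sum to $O\!\big(\frac{M^2\log(1/\delta)}{m b_2^2 \epsilon^2}\kappa\big)$ by the same telescoping/drift-bound argument used centrally, now with the extra $1/m$. Taking square roots yields the claimed bound on $r$. For the statistical part $\zeta_t = \frac{1}{m}\sum_j \zeta_{j,t}$, each $\zeta_{j,t}$ is norm-sub-Gaussian with parameter matching the centralized single-client bound: $O\!\big(\sqrt{G^2\log^2 d/b_1 + (M^2\log^2 d/b_2)\kappa}\big)$ after unrolling. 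Averaging $m$ independent norm-sub-Gaussian vectors reduces the effective parameter by $\sqrt{m}$ (via the standard concentration for sums of independent nSG random vectors, e.g.\ the lemma in Appendix~\ref{sec-facts}), giving variance proxy $O\!\big(\frac{G^2\log^2 d}{m b_1} + \frac{M^2\log^2 d}{m b_2}\kappa\big)$, i.e.\ the stated bound on $\sigma$ after taking the square root.

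The main obstacle is the $\mathcal{O}_2$-accumulation / drift-control step: one must argue that between consecutive $\mathcal{O}_1$ refreshes the sum of the squared per-step displacements is bounded by $\kappa$ (this is exactly what the \textsf{drift} counter enforces in step~3 of Algorithm~\ref{algo-dis-adpt-spider}), and that the $M$-smoothness error of $\mathcal{O}_2$ plus the injected noise telescopes cleanly despite the averaging. Concretely, the per-round $\mathcal{O}_2$ bias is controlled by $M\|x_{t-1}-x_{t-2}\|$ (uniformly over clients, since every client evaluates at the \emph{same} iterates $x_{t-1}, x_{t-2}$ broadcast by the server), and its variance is controlled by $M^2\|x_{t-1}-x_{t-2}\|^2/b_2$ per client, hence $/(m b_2)$ after averaging; summing these over a block and invoking $\sum \|x_{t-1}-x_{t-2}\|^2 \le \kappa$ closes the argument. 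A minor subtlety worth stating explicitly is that the high-probability event on which the drift bound and the nSG concentration hold is the same one as in Theorem~\ref{th-framework}, so the union bound over $t\in[T]$ costs only the polylogarithmic $\log d$ (and $\log(T/\omega)$) factors already absorbed into the $O(\cdot)$.
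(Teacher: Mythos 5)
Your proposal is correct and follows essentially the same route as the paper's proof: decompose $\hat g_t$ into the client-averaged SPIDER statistical noise $\zeta_t$ and the client-averaged privacy noise $\xi_t$, unroll the recursion back to the last $\mathcal{O}_1$ query, apply norm-sub-Gaussian concentration (Corollary~\ref{co-nsg-composition}) to the $\zeta$ contributions and Gaussian averaging to the $\xi$ contributions, and invoke the drift counter to bound $\sum_i\|x_{i-1}-x_{i-2}\|^2\le\kappa$ between refreshes. Both arguments obtain the extra $1/m$ in $\sigma^2$ and $r^2$ by averaging over $m$ independent clients, and both rely on the fact that $\frac{1}{m}\sum_j\nabla F_j=\nabla F$ makes the aggregate unbiased irrespective of heterogeneity.
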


Based on this, we derive the error bound for $\alpha$ in the distributed setting. The proof is in Appendix~\ref{sec-th-dis-adapt-spider}.

\begin{theorem}\label{th-dis-adapt-spider}
    Let Assumption~\ref{assump-loss} and~\ref{assump-func} hold. Define $b_1=\frac{n\kappa}{2U\eta}$, $b_2=\frac{n\eta\chi^2}{2U}$ and $\kappa=\max\left\{\frac{G^{3/2}U^{1/2}\rho^{1/2}}{M^{5/2}(mn)^{1/2}}, \frac{G^{14/15}d^{2/5}U^{4/5}\rho^{8/15}}{M^{34/15}(\sqrt{m}n\epsilon)^{4/5}}\right\}$. Then, running \texttt{Gauss-PSGD} with gradient oracle instantiated by distributed \texttt{Ada-DP-SPIDER} ensures $(\epsilon,\delta)$-ICRL-DP for some constants $c_1, c_2$, and returns an $\alpha$-SOSP with $\alpha=\tilde{O}\left(\frac{1}{(mn)^{1/3}}+\left(\frac{\sqrt{d}}{\sqrt{m}n\epsilon}\right)^{2/5}\right)$.
\end{theorem}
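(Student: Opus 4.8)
\textbf{Proof proposal for Theorem~\ref{th-dis-adapt-spider}.}

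The plan is to follow exactly the template of the proof of Theorem~\ref{th-adapt-spider}, replacing the single-machine gradient-oracle bounds of Lemma~\ref{le-adpt-spider-error} with their distributed counterparts from Lemma~\ref{le-dis-adpt-spider-error} and tracking how the extra factor $m$ propagates through the parameter choices. Concretely, I would first invoke Lemma~\ref{le-dis-adpt-spider-error} to write $\sigma^2 \le O\big(\frac{G^2\log^2 d}{m b_1} + \frac{M^2\log^2 d}{m b_2}\kappa\big)$ and $r^2 d \le O\big(\frac{dG^2\log(1/\delta)}{m b_1^2\epsilon^2} + \frac{dM^2\log(1/\delta)}{m b_2^2\epsilon^2}\kappa\big)$, so that the effective noise $\psi^2 = \sigma^2 + r^2 d$ is the sum of these four terms. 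Then, exactly as in the centralized proof, I would substitute the sample-budget-feasible choices $b_1 = \frac{n\kappa}{2U\eta}$ and $b_2 = \frac{n\eta\chi^2}{2U}$; the constraint that these be valid mini-batch sizes is guaranteed by Lemma~\ref{le-driftnum} (which bounds $|\mathcal{T}| \le O(U\eta/\kappa)$, so the total $\mathcal{O}_1$-sample consumption $|\mathcal{T}|\cdot b_1 = O(n)$ and, using the descent bound on $\sum\|x_i-x_{i-1}\|^2$, the total $\mathcal{O}_2$ consumption is also $O(n)$ per client — this carries over verbatim since Lemma~\ref{le-driftnum} is about the \texttt{Gauss-PSGD} iterates, not the oracle internals). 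Note the $m$ only appears as a multiplicative $1/m$ in every variance term of Lemma~\ref{le-dis-adpt-spider-error} while $b_1, b_2$ are still per-client sizes, so the substitution produces the same algebraic expressions as the centralized case but with $n$ effectively replaced by $mn$ in the statistical term and $n\epsilon$ by $\sqrt{m}n\epsilon$ in the privacy term (the $\sqrt{m}$ rather than $m$ in the privacy term comes from the $1/m$ prefactor multiplying a $1/b_i^2$ quantity, i.e. averaging $m$ independent noises of the same scale reduces variance by $m$, hence standard deviation by $\sqrt{m}$).

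After the substitution, $\psi^2$ becomes a function of $\eta$, $\chi$, $\kappa$ and the problem parameters only. Using the notation relations in~\eqref{eq-notation} — in particular $\chi = \Theta(s\mu^2\psi)$, $\alpha = 4\chi$, and $\eta = \sqrt{\rho\alpha}/(M^2\iota^2)$ — together with the self-referential identity $\alpha = \tilde{O}(\sqrt{\sigma^2 + r^2 d}) = \tilde{O}(\psi)$ (up to the logarithmic $\mu$-factors, which are absorbed by $\tilde{O}$), I would solve for $\alpha$ in closed form. The balancing of the two terms inside the $\max$ defining $\kappa$ is precisely what makes the $\kappa$-dependent contributions to $\sigma^2$ and $r^2d$ no larger than the $\kappa$-independent (i.e. $b_1$-driven) ones, so the final rate is governed by the $G^2\log^2 d/(mb_1)$ and $dG^2\log(1/\delta)/(mb_1^2\epsilon^2)$ terms; plugging $b_1 = n\kappa/(2U\eta)$ and simplifying yields the statistical term $\tilde O((mn)^{-1/3})$ and the privacy term $\tilde O\big((\sqrt d/(\sqrt m n\epsilon))^{2/5}\big)$. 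The specific exponents $1/3$ and $2/5$ and the precise form of $\kappa$ are inherited unchanged from Theorem~\ref{th-adapt-spider}; only the $m$-scaling inside is new, and it enters in the clean way just described.

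For the privacy claim, I would argue $(\epsilon,\delta)$-ICRL-DP by the composition/amplification analysis already used for \texttt{Ada-DP-SPIDER}: fix a client $j$ and a neighboring change in $D_j$; the transcript client $j$ sends is exactly the sequence $\{\hat g_{j,t}\}$, each of which is a Gaussian mechanism applied to a bounded-sensitivity query (sensitivity $2G/b_1$ for the $\mathcal{O}_1$ rounds by $G$-Lipschitzness, and $2M\|x_{t-1}-x_{t-2}\|/b_2$ for the $\mathcal{O}_2$ rounds by $M$-smoothness), with noise variances calibrated by the constants $c_1, c_2$; since the model iterates $x_{t-1}$ depend on other clients' data and on already-privatized messages (post-processing), the per-round sensitivity for client $j$ is as stated, and subsampling amplification plus advanced composition over the $O(n/b_1)$ $\mathcal{O}_1$-rounds and $O(n/b_2)$ $\mathcal{O}_2$-rounds gives the overall $(\epsilon,\delta)$ guarantee for suitable $c_1,c_2$ — this is identical to the centralized argument applied per client, which is exactly the definition of ICRL-DP. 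The main obstacle I anticipate is not conceptual but bookkeeping: correctly tracking that it is $\sqrt{m}$ and not $m$ that appears in the privacy term, and verifying that the feasibility constraints on $b_1, b_2$ (Lemma~\ref{le-driftnum} and the descent-based bound on total drift) still hold with the $m$-dependent $\kappa$ — i.e. that $b_1, b_2 \ge 1$ and that $|\mathcal{T}| b_1 \le n$ and $\sum_t b_2 \le O(mn)$ total across the horizon — which requires re-checking the ranges of $n, m, d, \epsilon$ for which the theorem is non-vacuous, just as in the centralized case.
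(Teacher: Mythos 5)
Your error-rate analysis follows exactly the paper's route: invoke Lemma~\ref{le-dis-adpt-spider-error} in place of Lemma~\ref{le-adpt-spider-error}, substitute the same $b_1,b_2$, confirm per-client sample feasibility via Lemma~\ref{le-driftnum}, close the loop through $\alpha=\tilde O(\psi)$, and balance the four terms by the stated $\kappa$. Your accounting of why the $1/m$ prefactor turns $n\mapsto mn$ in the sampling term but only $n\epsilon\mapsto\sqrt{m}\,n\epsilon$ in the privacy term is also exactly right.

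The one place you part ways with the paper is the privacy argument, and the route you sketch does not actually match the algorithm. You invoke ``subsampling amplification plus advanced composition over the $O(n/b_1)$ $\mathcal{O}_1$-rounds and $O(n/b_2)$ $\mathcal{O}_2$-rounds,'' and claim this is identical to the centralized argument. It is not: the paper's argument (both for Theorem~\ref{th-adapt-spider} and here) is \emph{parallel composition}. The mini-batches $\mathcal{B}_{j,t}$ are disjoint --- the whole point of the feasibility check $b_1|\mathcal{T}|+b_2 T = O(n)$ is that each local data point is touched in at most one round --- so each per-round Gaussian mechanism is $(\epsilon,\delta)$-DP on its own slice of the data and parallel composition gives $(\epsilon,\delta)$-ICRL-DP with no further loss. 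You can see this directly from the noise scale in Algorithm~\ref{algo-dis-adpt-spider}: $\xi_{j,t}\sim\mathcal{N}\bigl(0,\,c_1 G^2\log(1/\delta)/(b_1^2\epsilon^2)\,\mathbf{I}_d\bigr)$ carries no factor of $T$ (contrast Algorithm~\ref{algo-select}, where each $x_t$ reuses the same held-out set $T$ times and the noise variance indeed scales with $T\log(1/\delta)$). If one instead used advanced composition across $O(n/b_1)$ adaptively composed rounds, the per-round budget would need to shrink roughly as $\epsilon/\sqrt{n/b_1}$, forcing the per-round noise variance up by a factor $\Theta(n/b_1)$; and subsampling amplification is not applicable in its usual form when batches are a fixed disjoint partition rather than independent random subsamples. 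So while your conclusion (the mechanism is $(\epsilon,\delta)$-ICRL-DP) is correct, the reasoning you give would not produce the noise calibration the theorem assumes; you should replace it with the disjoint-batch / parallel-composition argument the paper actually uses.
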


\begin{remark}
    The error rate shown in Theorem~\ref{th-dis-adapt-spider} highlights the collaborative synergy among clients, indicating the learning performance benefits from distributed learning. Specifically, the first non-private term of $\alpha$ exhibits a linear dependence on $m$ before $n$, while the second term, which accounts for the privacy cost, demonstrates a square root dependence $\sqrt{m}$ before $n$. This separation reflects the impact of data heterogeneity in distributed setting. The benefit of distributed collaboration under DP constraints is consistent with prior results in heterogeneous federated learning~\cite{gao2024private}.
\end{remark}

\begin{algorithm}[t]
    \caption{Private Model Selection in Distributed Learning}
    \label{algo-select}
    \LinesNumbered
    \KwIn{Model iterates $\{x_t\}_{t=1}^T$, DP budget $\epsilon, \delta$}
    \For{$t\gets 1, \cdots, T$}{
        \For{\textbf{\textup{every}} \textbf{client} $j$ \textbf{\textup{in parallel}}}{
            Compute $\nabla \bar{F}_j(x_t)\gets \nabla \hat{f}_{S_j}(x_t)+\theta_{i,t}$, where $\theta_{i,t}\sim\mathcal{N}\left(0, c_1 \frac{G^2T\log({1}/{\delta})}{n^2\epsilon^2}\mathbf{I}_d\right)$ \;
            Compute $\nabla^2 \bar{F}_j(x_t)\gets \nabla^2 \hat{f}_{S_j}(x_t)+\mathbf{H}_{j,t}$, where $\mathbf{H}_{j,t}$ is a symmetric matrix with its upper triangle (including the diagonal) being i.i.d. samples from $\mathcal{N}\left(0, c_2\frac{M^2d T\log({1}/{\delta})}{n^2\epsilon^2}\right)$ and each lower triangle entry is copied from its upper triangle counterpart\;
            Send $\nabla \bar{F}_j(x_t)$ and $\nabla^2 \bar{F}_j(x_t)$ to the server\;
        }
        $\nabla \bar{F}(x_t)\gets\frac{1}{m}\sum_{j=1}^m \nabla \bar{F}_j(x_t)$, $\nabla^2 \bar{F}(x_t)\gets\frac{1}{m}\sum_{j=1}^m \nabla^2 \bar{F}_j(x_t)$\;
        \If{$\|\nabla \bar{F}(x_t)\|_2\le \alpha +\frac{G\log\left({8d}/{\omega^\prime}\right)}{\sqrt{mn}}+ \frac{G\sqrt{dT\log\left({1}/{\delta}\right)\log\left({16}/{\omega^\prime}\right)}}{\sqrt{m}n\epsilon}$ \normalfont{\textbf{and}} $\lambda_{\min}\left(\nabla^2 \bar{F}(x_t)\right)\ge-\left(\sqrt{\rho \alpha}+M\sqrt{\frac{\log\left({8d}/{\omega^\prime}\right)}{mn}}+\frac{M d\sqrt{T\log({1}/{\delta})\log\left({32}/{\omega^\prime}\right)}}{\sqrt{m}n\epsilon}\right)$}{
            \textbf{Return} $x_{t}$
        }
    }
\end{algorithm}


We conclude by demonstrating the advantages of our \texttt{Gauss-PSGD} framework in distributed learning by eliminating the need for a separate private model selection procedure. Without the guarantee of directly outputting an $\alpha$-SOSP, one must resort to evaluating all model iterates generated during the learning process and privately selecting an approximate SOSP from them. As discussed in Appendix~\ref{sec-gap}, the AboveThreshold mechanism used in~\cite{liu2024private} for the single-machine case is not applicable in distributed settings due to decentralized data access. To overcome this, we adapt~\cite[Algorithm 5]{wang2019differentially} to the distributed setting, resulting in Algorithm~\ref{algo-select}. In this scheme, each client computes privatized gradients and Hessian estimates using additional local data, which are then aggregated by the server to evaluate the stationary point conditions. Suppose a distributed learning algorithm produces a sequence $\{x_t\}_{t \in [T]}$ that contains at least one $\alpha$-DP-SOSP. The following result characterizes the quality of the point selected by Algorithm~\ref{algo-select}, whose proof is provided in Appendix~\ref{sec-th-select}:

\begin{theorem}\label{th-select}
Algorithm~\ref{algo-select} satisfies $(\epsilon, \delta)$-ICRL-DP. Let Assumption~\ref{assump-loss} hold and $mn \ge \frac{4}{9} \log \frac{8d}{\omega^\prime}$, then with probability at least $1 - \omega'$, if there exists an $\alpha$-SOSP $x_p \in \{x_t\}_{t = 1}^T$, then the selected point $x_o$ is an $\alpha^\prime$-SOSP with $\alpha^\prime\!=\!\tilde{O}\left(\!\alpha\!+\!\frac{1}{mn}\!+\!\frac{1}{\sqrt{mn}}\!+\!\frac{\alpha}{\sqrt{mn}}\!+\!\frac{\sqrt{d}}{\sqrt{m}n\epsilon\alpha^{5/4}}\!+\!\frac{d}{\sqrt{m}n\epsilon\alpha^{3/4}}\!+\!\frac{d^2}{mn^2\epsilon^2\alpha^{5/2}}\!\right)$.
\end{theorem}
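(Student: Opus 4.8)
The plan is to proceed in three stages: establish the privacy guarantee, establish the two concentration bounds (empirical-to-population deviation on each client, plus Gaussian-noise deviation after aggregation), and then check that the thresholds in Algorithm~\ref{algo-select} are simultaneously (a) loose enough that a true $\alpha$-SOSP $x_p$ passes the test, and (b) tight enough that any point passing the test is an $\alpha'$-SOSP with the claimed $\alpha'$. For privacy: each client runs $T$ rounds, and in each round releases a privatized gradient (Gaussian mechanism with sensitivity $G/n$ on the $\ell_2$ norm, since $f(\cdot;z)$ is $G$-Lipschitz and $|S_j|=\Theta(n)$) and a privatized Hessian (Gaussian mechanism on the symmetric matrix, sensitivity $M/n$ in Frobenius norm by $M$-smoothness, with the $d$-scaling absorbed in the per-entry variance). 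Calibrating the noise variances with the $T$ factor and composing over $T$ rounds via the advanced composition theorem (or the moments accountant) yields $(\epsilon,\delta)$ for the full transcript of each client, hence $(\epsilon,\delta)$-ICRL-DP; this is the routine part.

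For the concentration step I would first invoke the matrix/vector Bernstein-type bounds already quoted in the excerpt from~\cite[Lemma 4.5]{liu2024private}: for each client $j$, with probability $1-\omega'/(\text{poly})$, $\|\nabla F_j(x_t)-\nabla\hat f_{S_j}(x_t)\|\le O(G\log(d/\omega')/\sqrt n)$ and $\|\nabla^2 F_j(x_t)-\nabla^2\hat f_{S_j}(x_t)\|_{\mathrm{op}}\le O(M\log(d/\omega')/\sqrt n)$, then average over $j$ — crucially, because the $D_j$ are independent, averaging $m$ such deviations gains a $\sqrt m$ in the denominator (a second application of Bernstein to the sum of independent centered terms), giving $O(G\log(d/\omega')/\sqrt{mn})$ and $O(M\log(d/\omega')/\sqrt{mn})$ for the aggregated empirical-vs-population gap. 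Separately, the injected noise: $\frac1m\sum_j\theta_{j,t}\sim\mathcal N(0,c_1\frac{G^2T\log(1/\delta)}{mn^2\epsilon^2}\mathbf I_d)$, whose norm is $O(G\sqrt{dT\log(1/\delta)\log(1/\omega')}/(\sqrt m n\epsilon))$ with high probability by a $\chi^2$ tail bound; similarly $\lambda_{\min}$ of the aggregated Hessian noise matrix is controlled by $O(Md\sqrt{T\log(1/\delta)\log(1/\omega')}/(\sqrt m n\epsilon))$ using a bound on the operator norm of a symmetric Gaussian matrix with the stated per-entry variance. A union bound over the $T$ iterates and the $m$ clients (which only inflates the logarithms) makes all of these hold simultaneously.

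Given these bounds, the two-sided argument is then bookkeeping. For soundness of selection: if $x_t$ passes, then $\|\nabla F(x_t)\|\le\|\nabla\bar F(x_t)\|+(\text{emp gap})+(\text{noise})\le\alpha+O(\text{stuff})$, and likewise $\lambda_{\min}(\nabla^2 F(x_t))\ge-\sqrt{\rho\alpha}-O(\text{stuff})$; the extra terms beyond $\alpha$ and $\sqrt{\rho\alpha}$ are exactly the deviation terms, and converting the Hessian bound into the $\alpha'$-SOSP form $-\sqrt{\rho\alpha'}\mathbf I_d$ via $\sqrt{\rho\alpha}+\text{(dev)}=\sqrt{\rho\alpha'}$ and solving for $\alpha'$ produces the terms $\frac{\sqrt d}{\sqrt m n\epsilon\alpha^{5/4}}$, $\frac{d}{\sqrt m n\epsilon\alpha^{3/4}}$, $\frac{d^2}{mn^2\epsilon^2\alpha^{5/2}}$ (squaring the Hessian deviation and dividing by $\rho$, with $\sqrt{\rho\alpha}$ substituted, generates the $\alpha^{-5/2}$ from the square of the $\alpha^{-1}$-free noise term divided by $\rho\sqrt{\rho\alpha}$-type denominators — I'd track the powers of $\alpha$ carefully here, since $\alpha$ itself enters the noise variance only through $T=\tilde O(\alpha^{-2.5})$ via Theorem~\ref{th-framework}). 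For completeness of selection: if $x_p$ is a genuine $\alpha$-SOSP, then $\|\nabla\bar F(x_p)\|\le\|\nabla F(x_p)\|+(\text{emp gap})+(\text{noise})\le\alpha+(\text{dev})$, which is precisely the threshold used in step~6, and similarly for the eigenvalue test — so AboveThreshold-style selection returns \emph{some} point, and by soundness that point is an $\alpha'$-SOSP.

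The main obstacle is the careful propagation of the $\alpha$-dependence through the horizon $T=\tilde O(1/\alpha^{2.5})$: because the per-round privacy budget is split over $T$ rounds, every injected-noise term carries a hidden $\sqrt T=\tilde O(\alpha^{-5/4})$, and it is this substitution — combined with the squaring-and-dividing-by-$\rho$ step when rewriting the Hessian bound into $-\sqrt{\rho\alpha'}\mathbf I_d$ form — that produces the unusual negative powers of $\alpha$ in $\alpha'$. Getting those exponents to land exactly as stated ($\alpha^{-5/4}$, $\alpha^{-3/4}$, $\alpha^{-5/2}$) is where essentially all the real work lies; the privacy and concentration steps are standard.
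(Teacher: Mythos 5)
Your proposal is correct and follows essentially the same route as the paper's proof: privacy via Gaussian mechanism plus adaptive composition, norm-sub-Gaussian and matrix Bernstein concentration for the empirical-to-population gap, sub-Gaussian and symmetric-Gaussian-matrix operator-norm bounds for the injected noise, a two-sided completeness/soundness argument around the acceptance thresholds, and finally the substitution $T=\tilde O(\alpha^{-5/2})$ together with squaring the Hessian deviation (to convert $\sqrt{\rho\alpha}+\text{dev}$ into $\sqrt{\rho\alpha'}$ form) to produce the $\alpha^{-5/4}$, $\alpha^{-3/4}$, and $\alpha^{-5/2}$ exponents. The only cosmetic difference is that you bound each client's empirical deviation and then aggregate to gain the $\sqrt m$, whereas the paper applies the concentration inequalities directly over the combined dataset $\mathcal{S}=\bigsqcup_j S_j$ of size $mn$; the two are equivalent.
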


\begin{remark}
To ensure that the selected model's error $\alpha^\prime$ does not exceed the training error $\alpha$, the following must hold: $\frac{\sqrt{d}}{\sqrt{m}n\epsilon\alpha^{5/4}} \!+\! \frac{d}{\sqrt{m}n\epsilon\alpha^{3/4}} \!+\! \frac{d^2}{mn^2\epsilon^2\alpha^{5/2}}\!\le\! \tilde{O}(\alpha)$. This implies a constraint on the model dimension: $d\le\min\{(\sqrt{m}n\epsilon)^2, (\sqrt{m}n\epsilon)^{6/13}\}$. Thus, in high-dimensional regimes, private model selection degrades the overall error rate, marking the limitation of selection-based approaches.
\end{remark}

\begin{remark}
     The error bound $\alpha^\prime$ in Theorem~\ref{th-select} can be improved by estimating the smallest eigenvalue of the Hessian via Hessian-vector products using iterative methods such as the power method~\cite{lanczos1950iteration}. This reduces the dimensional dependence in the noise scale from $O(d)$ to $O(\sqrt{d})$. However, the remaining $\sqrt{d}$ factor is sill problematic in high-dimensional settings. In contrast, in the single-machine case, private model selection only requires perturbing scalar quantities, making the error independent of dimension, preserving the error guarantee of the learning algorithm. In distributed settings, sharing perturbed vectors becomes unavoidable. This emphasizes the necessity and superiority of our \texttt{Gauss-PSGD} framework that inherently avoids the need for any separate model selection step.
\end{remark}
\section{Limitation Discussion}\label{sec-limitation}

One of the primary objective of this work is to rectify a key analytical error in~\cite{liu2024private} by presenting the correct error rates for DP stochastic non-convex optimization. Our proposed framework, \texttt{Gauss-PSGD}, is designed to be broadly applicable beyond the DP setting, offering a versatile optimization tool for general non-convex problems. Furthermore, this work makes the first attempt to extend DP-SOSP analysis to the distributed learning setting, establishing state-of-the-art utility guarantees.

To maintain consistency with prior work~\cite{liu2024private}, we assume access to an unbiased gradient oracle. This assumption is fundamental in theoretical analysis and is also adopted by many recent studies in DP optimization and distributed learning, such as~\cite{arora2023faster, gao2024private}. However, it may not fully reflect the behavior of practical optimizers that employ biased and noisy gradients, particularly those using gradient clipping---a standard technique in DP implementations.

Nevertheless, our \texttt{Gauss-PSGD} framework can be extended to handle biased oracles induced by clipping. The main challenge lies in the analysis: incorporating clipping introduces bias, requiring a refined characterization of the descent dynamics. In particular, Lemma~\ref{le-descent} (the descent lemma) must be adapted to reflect the bias–variance trade-off. Techniques for bias reduction in clipped DP learning—such as those developed in~\cite{xiao2023theory}---could offer a promising foundation for such an extension.

The saddle point escaping analysis (Lemma~\ref{le-escape}) can also be generalized. As shown in our proof, the key mechanism enabling escape is the injection of symmetric Gaussian noise, which drives the divergence in the coupling sequence. This mechanism remains valid under clipping, provided the Gaussian noise is appropriately calibrated. However, the number of steps required for escape may change due to the altered noise structure and bias, and a more delicate analysis would be required to quantify this behavior accurately.

We consider this as a promising direction for future work and leave its full exploration to subsequent studies.

\section{Conclusion}

In this work, we investigated the problem of finding second-order stationary points (SOSP) in differentially private (DP) stochastic non-convex optimization. We proposed a novel framework that leverages perturbed stochastic gradient descent (SGD) with Gaussian noise and introduces a novel criterion based on model drift distance to ensure provable saddle point escape and efficient convergence. By incorporating an adaptive SPIDER as the gradient oracle, we developed a new DP algorithm that rectifies existing error rates. Furthermore, we extended our approach to distributed learning scenarios with heterogeneous data, providing the first theoretical guarantees for finding DP-SOSP in such settings. Through rigorous analysis, we demonstrated that our framework not only avoids the pitfalls of private model selection but also remains effective in high-dimensional distributed learning environments.

Our work opens several promising directions for future research. A key challenge is bridging the gap between our upper bound and the existing DP lower bound for stochastic optimization, as established in \cite{arora2023faster}. The current lower bound is derived from convex loss functions and first-order stationary points, wheras finding DP-SOSP in non-convex optimization is inherently more difficult. We conjecture that the existing lower bound is not tight for the non-convex case. Establishing a tighter lower bound remains a critical open problem. Additionally, exploring whether our upper bounds can be further improved is another intriguing direction that warrants in-depth investigation.

\bibliographystyle{plain}
\bibliography{main}

\newpage
\appendix

\section{Useful Facts for Analysis}\label{sec-facts}

\subsection{Probability Tools}

\begin{definition}[{Sub-Gaussian random vector~\cite[Definition~2]{jin2019short}}]\label{def-sg}
    A random vector $v\in\mathbb{R}^d$ is $\zeta$-\textit{sub-Gaussian} (or $\operatorname{SG}(\zeta)$), if there exists a positive constant $\zeta$ such that
    \begin{equation}
        \mathbb{E}[\exp(\langle u, v-\mathbb{E} [v]\rangle)]\le\exp\left(\frac{\|u\|_2^2\zeta^2}{2}\right), \qquad\forall u\in\mathbb{R}^d.
    \end{equation}
\end{definition}

\begin{definition}[{Norm-sub-Gaussian random vector~\cite[Definition~3]{jin2019short}}]\label{def-normsg}
    A random vector $v\in\mathbb{R}^d$ is $\zeta$-\textit{norm-sub-Gaussian} (or $\operatorname{nSG}(\zeta)$), if there exists a positive constant $\zeta$ such that
    \begin{equation}
        \mathbb{P}\left[\left\|v-\mathbb{E}[v]\right\|\ge t\right]\le2\exp\left(-\frac{t^2}{2\zeta^2}\right), \qquad\forall t\in\mathbb{R}.
    \end{equation}
\end{definition}

Note that norm-sub-Gaussian random vectors (Definition~\ref{def-normsg}) are more general than sub-Gaussian random vectors (Definition~\ref{def-sg}), as sub-Gaussian distributions require \textit{isotropy}, whereas norm-sub-Gaussian distributions do not impose this condition.

\begin{lemma}[{\cite[Lemma~1]{jin2019short}}]\label{le-gauss-to-normSG}
    A $\operatorname{SG}(r)$ random vector $v\in\mathbb{R}^d$ is also $\operatorname{nSG}(2\sqrt{2}\cdot r\sqrt{d})$.
\end{lemma}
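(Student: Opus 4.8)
The plan is to use the standard $\epsilon$-net reduction that upgrades the directional (scalar) sub-Gaussian tail of $v$ to a concentration bound for the Euclidean norm, and then to absorb the resulting $C^{d}$ union-bound factor into the exponent at the price of inflating the variance proxy by a factor $O(d)$. Since Definitions~\ref{def-sg} and~\ref{def-normsg} both refer only to $v-\mathbb{E}[v]$, we may and do assume $\mathbb{E}[v]=0$ throughout.

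First I would record the one-dimensional estimate: for any unit vector $u$, substituting $\lambda u$ for $u$ in Definition~\ref{def-sg} gives $\mathbb{E}[\exp(\lambda\langle u,v\rangle)]\le\exp(\lambda^{2}r^{2}/2)$ for all $\lambda\in\mathbb{R}$, and a Chernoff bound then yields $\mathbb{P}[\langle u,v\rangle\ge s]\le\exp(-s^{2}/(2r^{2}))$ for all $s\ge0$. Next, fix a $1/3$-net $\mathcal{N}$ of the unit sphere $S^{d-1}$ with $|\mathcal{N}|\le 7^{d}$. Writing $\|v\|=\langle v/\|v\|,v\rangle$ and approximating $v/\|v\|$ by a net point, the approximation error is at most $\tfrac13\|v\|$, which yields the covering estimate $\|v\|\le\tfrac32\max_{u\in\mathcal{N}}\langle u,v\rangle$. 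A union bound over $\mathcal{N}$ therefore gives, for every $t\ge0$,
\[
\mathbb{P}\big[\|v\|\ge t\big]\ \le\ 7^{d}\exp\!\Big(-\frac{(2t/3)^{2}}{2r^{2}}\Big)\ =\ 7^{d}\exp\!\Big(-\frac{2t^{2}}{9r^{2}}\Big).
\]

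Finally, I would combine this with the trivial bound $\mathbb{P}[\|v\|\ge t]\le1$ to reach the target $2\exp(-t^{2}/(16r^{2}d))$, noting that $(2\sqrt2\,r\sqrt d)^{2}=8r^{2}d$. Split on $t$: if $t^{2}\le16r^{2}d\log2$, the target bound is $\ge1$ and holds trivially; if $t^{2}>16r^{2}d\log2$, then taking logarithms reduces the desired inequality $7^{d}\exp(-2t^{2}/(9r^{2}))\le2\exp(-t^{2}/(16r^{2}d))$ to $d\log7-\log2\le t^{2}\big(\tfrac{2}{9r^{2}}-\tfrac{1}{16r^{2}d}\big)$. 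The coefficient on the right is strictly positive for $d\ge1$, so the right-hand side is increasing in $t^{2}$, and it suffices to check the inequality at $t^{2}=16r^{2}d\log2$, where it collapses to the numeric fact $\log7\le\tfrac{32}{9}\log2$, which is true. Hence $\mathbb{P}[\|v\|\ge t]\le2\exp(-t^{2}/(16r^{2}d))$, i.e., $v\in\operatorname{nSG}(2\sqrt2\,r\sqrt d)$.

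I expect the only delicate point to be the constant bookkeeping in this last step: the net must be fine enough that the trivial small-$t$ bound reaches up to the scale where the exponential term begins to dominate, yet coarse enough that $|\mathcal{N}|$ does not overwhelm the exponent. The choice $\epsilon=1/3$ threads this needle and produces exactly the stated constant $2\sqrt2$; a coarser $1/2$-net would instead require the false inequality $\log5\le2\log2$ and deliver only a weaker proxy. Everything else—the Chernoff bound, the covering estimate, and the union bound—is routine.
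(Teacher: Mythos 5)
The paper does not prove this lemma itself --- it is quoted directly from Jin et al.\ (2019), so there is no in-paper proof to compare against. Your $\epsilon$-net argument is correct and self-contained: the covering estimate $\|v\|\le\frac{3}{2}\max_{u\in\mathcal{N}}\langle u,v\rangle$ from a $1/3$-net with $|\mathcal{N}|\le 7^d$, the union-bounded tail $7^d\exp(-2t^2/(9r^2))$, the case split at $t^2=16r^2d\log 2$ (where $2\exp(-t^2/(16r^2d))\ge 1$ makes the small-$t$ regime trivial), and the boundary check $\log 7\le\tfrac{32}{9}\log 2\approx 2.46$ all verify. It is worth noting that the more common route to this lemma is not the $\epsilon$-net: one typically either (a) applies a Bernstein-type bound to $\|v\|^2=\sum_i v_i^2$ viewed as a sum of sub-exponential variables, or (b) uses the Gaussian integration trick $\mathbb{E}[e^{\|v\|^2/(4r^2)}]=\mathbb{E}_v\mathbb{E}_{g\sim\mathcal{N}(0,I_d)}[e^{\langle g,v\rangle/(\sqrt{2}r)}]\le\mathbb{E}_g[e^{\|g\|^2/4}]=2^{d/2}$ followed by Markov, which yields $\mathbb{P}[\|v\|\ge t]\le 2^{d/2}e^{-t^2/(4r^2)}$ and then the same case-split produces the constant $2\sqrt{2}$ with less bookkeeping. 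Your approach works and is a legitimate alternative; it pays a slightly larger combinatorial factor $7^d$ versus $2^{d/2}$, which you correctly absorb, but the Gaussian-dual argument is the one most likely underlying the cited reference and is cleaner in constant-tracking since the entire computation reduces to a single Markov inequality.
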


We are interested in the properties of norm-subGaussian martingale difference sequences. Concretely, they are sequences satisfying the following properties.

\begin{condition}\label{con-martingale}
    Consider random vectors $v_1, \cdots, v_p\in\mathbb{R}^d$, and corresponding filtrations $\mathcal{F}_i=\sigma(v_1, \cdots, v_i)$ for $i\in[n]$, such that $v_i|\mathcal{F}_{i-1}$ is zero-mean $\operatorname{nSG}(\zeta_i)$ with $\zeta_i\in\mathcal{F}_{i-1}$. That is,
    \begin{equation}
        \mathbb{E}[v_i|\mathcal{F}_{i-1}]=0, 
        \qquad \mathbb{P}\left[\left\|v_i\right\|\ge t|\mathcal{F}_{i-1}\right]\le2\exp\left(-\frac{t^2}{2\zeta^2}\right), \qquad\forall t\in\mathbb{R}, \forall i\in[p].
    \end{equation}
\end{condition}

\begin{lemma}[{Hoeffding type inequality for norm-sub-Gaussian~\cite[Corollary 7]{jin2019short}}]\label{le-hoeffding}
    Let random vectors $v_1, \cdots, v_p\in\mathbb{R}^d$, and corresponding filtrations $\mathcal{F}_i=\sigma(v_1, \cdots, v_i)$ for $i\in[k]$ satisfy condition~\ref{con-martingale} with fixed $\{\zeta_i\}$. Then for any $\iota>0$, there exists an absolute constant $C$ such that, with probability at least $1-2d\cdot e^{-\iota}$,
    \begin{equation}
        \left\|\sum_{i=1}^p v_i\right\|_2\le C\cdot\sqrt{\sum_{i=1}^p\zeta_i^2\cdot\iota}.
    \end{equation}
\end{lemma}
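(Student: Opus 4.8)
Since this is \cite[Corollary~7]{jin2019short}, I would cite it directly, but to be self-contained the plan is to control the \emph{scalar} quantity $\|S_p\|^2$, where $S_k\coloneqq\sum_{i=1}^k v_i$, rather than argue coordinate by coordinate. First I would extract two elementary consequences of $v_i\mid\mathcal F_{i-1}\sim\operatorname{nSG}(\zeta_i)$ being zero-mean with $\zeta_i$ deterministic (Definition~\ref{def-normsg}, Condition~\ref{con-martingale}): (i) a conditional second-moment bound $\mathbb E[\|v_i\|^2\mid\mathcal F_{i-1}]\le C_0\zeta_i^2$, obtained by integrating the tail $\mathbb P[\|v_i\|\ge t\mid\mathcal F_{i-1}]\le 2e^{-t^2/2\zeta_i^2}$; and (ii) a conditional MGF bound for linear functionals, $\mathbb E[\exp(\langle\theta,v_i\rangle)\mid\mathcal F_{i-1}]\le\exp(C_1\|\theta\|^2\zeta_i^2)$ for every fixed $\theta$, which holds because $|\langle\theta,v_i\rangle|\le\|\theta\|\,\|v_i\|$ inherits a sub-Gaussian tail and has zero conditional mean (equivalently, $v_i$ is conditionally $\operatorname{SG}(O(\zeta_i))$ in the sense of Definition~\ref{def-sg}); I would also record that $\|v_i\|^2$ is conditionally sub-exponential with parameter $O(\zeta_i^2)$. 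Item (i) plus the martingale orthogonality already gives $\mathbb E[\|S_p\|^2]=\sum_i\mathbb E[\|v_i\|^2]\le C_0\sum_i\zeta_i^2$, so the whole task is to upgrade this to a high-probability statement.

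The core step is an exponential-supermartingale (Ville / optional-stopping) argument on $\|S_k\|^2$. Expanding $\|S_k\|^2=\|S_{k-1}\|^2+2\langle S_{k-1},v_k\rangle+\|v_k\|^2$ and conditioning on $\mathcal F_{k-1}$, item (ii) with $\theta=2\lambda S_{k-1}$ controls the cross term while item (i) controls the last term, giving $\mathbb E[\exp(\lambda\|S_k\|^2)\mid\mathcal F_{k-1}]\le\exp\big(\lambda\|S_{k-1}\|^2(1+O(\lambda\zeta_k^2))+O(\lambda\zeta_k^2)\big)$ for $\lambda\lesssim 1/\max_k\zeta_k^2$. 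Taking $\lambda\asymp 1/\sum_i\zeta_i^2$ keeps the accumulated factor $\prod_k(1+O(\lambda\zeta_k^2))$ bounded, so iterating shows $M_k\coloneqq\exp(\lambda\|S_k\|^2)\exp\big(-O(\lambda\sum_{i\le k}\zeta_i^2)\big)$ is a supermartingale, and Markov/Ville then yields $\mathbb P[\|S_p\|^2\ge C\sum_i\zeta_i^2\cdot\iota]\lesssim e^{-\iota}$, which is the claimed bound after taking square roots. The delicate point is that the cross-term MGF parameter scales with the \emph{random} $\|S_{k-1}\|$, so to avoid forcing $\lambda$ to be too small one runs the argument only up to the stopping time $\tau=\inf\{k:\|S_k\|^2>C\sum_i\zeta_i^2\cdot\iota\}$, on which $\|S_{k-1}\|$ is a priori bounded; alternatively one truncates each $v_i$ at level $O(\zeta_i\sqrt\iota)$, recenters (the aggregated recentering drift is summed and shown negligible by the nSG tail), and invokes a Pinelis-type martingale concentration inequality in Hilbert space, which is dimension-free. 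The factor $2d$ in the failure probability is then absorbed by a union bound in this last sub-step (over the $d$ coordinate directions, or over the per-summand truncation events).

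The main obstacle I anticipate is precisely this vector-to-scalar reduction \emph{while keeping the leading constant dimension-free}: the obvious shortcut---treat each coordinate $S_p^{(j)}$ as a scalar sub-Gaussian martingale via $|v_i^{(j)}|\le\|v_i\|$ and then combine through $\|S_p\|_2\le\sqrt d\,\|S_p\|_\infty$---introduces a spurious $\sqrt d$ factor that the statement does not permit, so one is forced to work with $\|S_k\|^2$ itself, exploiting that its conditional increment has mean at most $C_0\zeta_k^2$ with no dimensional inflation and handling the cross term without passing through a net of directions. I would also note that the bound is meaningful only for $\iota\gtrsim 1$ (as $\iota\to 0$ it degenerates), which is harmless here since every invocation takes $\iota=s\mu=\Theta(\log(\cdot))$.
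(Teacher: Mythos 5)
The paper does not prove this lemma --- it is quoted verbatim as~\cite[Corollary~7]{jin2019short} in the ``Useful Facts'' appendix, so there is no in-paper argument to compare against. That said, your sketch is a genuinely different route from the cited source, which establishes the bound through a symmetric-matrix dilation: embedding each $v_i$ into the $(d{+}1)\times(d{+}1)$ Hermitian matrix $\left(\begin{smallmatrix}0 & v_i^\top \\ v_i & 0\end{smallmatrix}\right)$, bounding the matrix MGF of these dilations, and invoking a Tropp-style matrix Azuma/master bound; the $2d$ in the failure probability is exactly the trace (dimension) factor from that matrix inequality, not a union bound over coordinates. Your scalar supermartingale argument on $\|S_k\|^2$ is sound in outline and, carried through carefully, actually yields the \emph{stronger} dimension-free tail $e^{-\iota}$ (which of course implies $2d\,e^{-\iota}$). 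The delicate point you flag --- that the cross-term MGF parameter scales with the random $\|S_{k-1}\|$ --- is real, and the cleanest resolution is the ``backward'' adaptive-$\lambda$ device rather than a stopping time per se: set $\lambda_p=\lambda$, $\lambda_{k-1}=\lambda_k(1+c\lambda_k\zeta_k^2)$, and check by backward induction that $\lambda_k\le 2\lambda$ for $\lambda\le c_0/\sum_i\zeta_i^2$, after which the telescoping is immediate. Two small corrections to the write-up: (a) ``item (i) controls the last term'' is imprecise --- controlling $\mathbb E[\exp(\lambda\|v_k\|^2)\mid\mathcal F_{k-1}]$ needs the sub-exponential MGF bound you record later, not merely the second moment; and (b) your proposed fallback of ``truncate at $O(\zeta_i\sqrt\iota)$ then apply Pinelis'' does not give the stated scaling as written, because the truncation level and the target radius both carry the same $\sqrt\iota$, so the Pinelis exponent saturates at a constant; a Pinelis--Bernstein form with a truncation threshold decoupled from $\iota$ would be needed to make that alternative work. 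The remark that the $2d$ is ``absorbed by a union bound over the $d$ coordinate directions'' should simply be dropped --- your argument neither needs nor produces that factor, and, as you yourself observe earlier, a coordinate-wise reduction would instead cost an unwanted $\sqrt d$ in the radius.
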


Lemma~\ref{le-hoeffding} implies that the sum of norm-sub-Gaussian random vectors is till norm-sub-Gaussian.

\begin{corollary}\label{co-nsg-composition}
    Let random vectors $v_1, \cdots, v_p\in\mathbb{R}^d$, and corresponding filtrations $\mathcal{F}_i=\sigma(v_1, \cdots, v_i)$ for $i\in[k]$ satisfy condition~\ref{con-martingale} with fixed $\{\zeta_i\}$. Then $\sum_{i=1}^p v_i$ is $\operatorname{nSG}\left(C\cdot\sqrt{\log(d)\sum_{i=1}^k\zeta_i^2}\right)$.
\end{corollary}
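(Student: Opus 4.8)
The statement to prove is Corollary~\ref{co-nsg-composition}: that under Condition~\ref{con-martingale}, the sum $\sum_{i=1}^p v_i$ is $\operatorname{nSG}\big(C\sqrt{\log(d)\sum_{i=1}^k\zeta_i^2}\big)$. The plan is to derive this directly from the Hoeffding-type tail bound in Lemma~\ref{le-hoeffding} by converting the stated tail probability into the norm-sub-Gaussian parameter form of Definition~\ref{def-normsg}. First I would recall that Lemma~\ref{le-hoeffding} gives, for any $\iota>0$, $\mathbb{P}\big[\|\sum_{i=1}^p v_i\| \ge C\sqrt{\iota\sum_i\zeta_i^2}\big]\le 2d\,e^{-\iota}$. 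The goal is to rewrite the right-hand side as $2\exp(-t^2/(2\zeta^2))$ with $t = C\sqrt{\iota\sum_i\zeta_i^2}$ and some effective parameter $\zeta$ proportional to $\sqrt{\log(d)\sum_i\zeta_i^2}$.

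The key algebraic step is the substitution: given a target deviation $t$, I would solve $t = C\sqrt{\iota\sum_i\zeta_i^2}$ for $\iota$, obtaining $\iota = t^2/(C^2\sum_i\zeta_i^2)$. Plugging into the tail bound yields $\mathbb{P}[\|\sum v_i\|\ge t]\le 2d\exp(-t^2/(C^2\sum_i\zeta_i^2))$. Now I need to absorb the factor $2d$ into the exponential. Write $2d = 2\exp(\log d)$, so the bound becomes $2\exp(\log d - t^2/(C^2\sum_i\zeta_i^2))$. To get a clean $\operatorname{nSG}$ bound of the form $2\exp(-t^2/(2\zeta^2))$, I would split the exponent: for $t$ large enough that $t^2/(C^2\sum_i\zeta_i^2) \ge 2\log d$, we have $\log d - t^2/(C^2\sum_i\zeta_i^2) \le -t^2/(2C^2\sum_i\zeta_i^2)$, which matches the desired form with $\zeta^2 = C^2\sum_i\zeta_i^2$. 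For the small-$t$ regime, the bound $2\exp(-t^2/(2\zeta^2))$ with $\zeta^2 = \Theta(\log(d)\sum_i\zeta_i^2)$ can be made to hold trivially because the right-hand side stays above $1$ (or a constant close to it) when $t^2 \lesssim \log(d)\sum_i\zeta_i^2$, while a probability is always at most $1$; one only needs the absolute constant $C$ in the statement to be chosen large enough to cover the crossover. Combining the two regimes gives $\mathbb{P}[\|\sum v_i\|\ge t]\le 2\exp\big(-t^2/(2\zeta^2)\big)$ for all $t$ with $\zeta = C'\sqrt{\log(d)\sum_i\zeta_i^2}$, which is exactly the claimed $\operatorname{nSG}$ parameter (after renaming $C'$ to $C$, noting the paper's convention that $C$ may change between expressions). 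I should also note $\mathbb{E}[\sum v_i] = 0$ follows from the zero-mean condition on each $v_i|\mathcal{F}_{i-1}$ via the tower rule, so the centering in Definition~\ref{def-normsg} is automatic.

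The main obstacle — really the only subtlety — is the regime-splitting needed to swallow the multiplicative $d$ into a sub-Gaussian-style exponent without the bound becoming vacuous or losing more than a $\log d$ factor. This is a standard maneuver (the $2d$ prefactor in concentration inequalities costs a $\log d$ in the effective variance proxy), but one must be careful to state it for \emph{all} $t\in\mathbb{R}$ as Definition~\ref{def-normsg} demands, handling the small-$t$ case by the trivial probability bound and choosing the absolute constant appropriately. Everything else is direct substitution and does not require any new probabilistic input beyond Lemma~\ref{le-hoeffding}.
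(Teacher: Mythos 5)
Your proof is correct and takes essentially the same approach as the paper's: both derive the claim from Lemma~\ref{le-hoeffding} and absorb the $2d$ prefactor into a $\log d$ factor in the effective norm-sub-Gaussian parameter. The paper phrases the conversion in terms of the confidence level $\omega$ (using the inequality $\log(2d/\omega)/\log(2/\omega) \le 2\log d$), while you phrase it in terms of the deviation $t$ with an explicit large-$t$/small-$t$ split, but these are the same underlying calculation.
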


\begin{proof}
    Let $\zeta_+\coloneqq\sqrt{C\log(d)\sum_{i=1}^k\zeta_i}$. According to Definition~\ref{def-normsg}, we aim to show that, for any $\omega\in(0,1)$, with probability at least $1-\omega$, $\|\sum_{i=1}^p v_i\|\le\sqrt{2\zeta_+^2\ln\frac{2}{\omega}}$. By Lemma~\ref{le-hoeffding}, we have known that, with probability at least $1-\omega$, $\|\sum_{i=1}^p v_i\|\le C\cdot\sqrt{\sum_{i=1}^p\zeta_i^2\ln\frac{2d}{\omega}}$. Next, we show that $\sqrt{2\zeta_+^2\ln\frac{2}{\omega}}\ge C\cdot\sqrt{\sum_{i=1}^p\zeta_i^2\ln\frac{2d}{\omega}}$, which, by re-arranging the terms, is equivalent to show $\zeta_+^2\ge\frac{C^2}{2}(\sum_{i=i}^p\zeta_i^2)\frac{\log\frac{2d}{\omega}}{\log\frac{2}{\omega}}$. This follows directly from the fact that $\frac{\log\frac{2d}{\omega}}{\log\frac{2}{\omega}}\le 2\log d, \forall\omega\in(0,1)$.
\end{proof}

\begin{lemma}[{\cite[Lemma~C.6]{jin2021nonconvex}}]\label{le-concentration-varying-variance}
    Let random vectors $v_1, \cdots, v_p\in\mathbb{R}^d$, and corresponding filtrations $\mathcal{F}_i=\sigma(v_1, \cdots, v_i)$ for $i\in[k]$ satisfy condition~\ref{con-martingale}, then for any $\iota>0$, and $B>b>0$, there exists an absolute constant $C$ such that, with probability at least $1-2d\log\left(\frac{B}{b}\right)\cdot e^{-\iota}$,
    \begin{equation}
        \sum_{i=1}^p\zeta_i^2\ge B \qquad\text{or}\qquad \left\|\sum_{i=i}^p v_i\right\|\le C\cdot\sqrt{\max\left\{\sum_{i}^p \zeta_i^2, b\right\}\cdot\iota}.
    \end{equation}
\end{lemma}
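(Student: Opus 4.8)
The plan is a dyadic \emph{peeling} (stratification) argument over the random cumulative variance proxy $\Lambda_p \coloneqq \sum_{i=1}^p \zeta_i^2$, reducing each stratum to the fixed-variance Hoeffding-type bound of Lemma~\ref{le-hoeffding} by means of a stopping-time truncation.

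\textbf{Step 1 (reduction at a fixed level).} Fix a deterministic threshold $\beta > 0$, let $k^\ast(\beta) \coloneqq \min\{k : \sum_{i=1}^k \zeta_i^2 > \beta\}$ (set to $p+1$ if the sum never exceeds $\beta$), and put $\tau_\beta \coloneqq \min\{k^\ast(\beta)-1,\, p\}$. Since $\zeta_i \in \mathcal{F}_{i-1}$ by Condition~\ref{con-martingale}, the event $\{i \le \tau_\beta\}$ is $\mathcal{F}_{i-1}$-measurable; hence the truncated process $\tilde v_i \coloneqq v_i\,\mathbf{1}\{i \le \tau_\beta\}$ again satisfies Condition~\ref{con-martingale}, with $\mathcal{F}_{i-1}$-measurable parameters $\tilde\zeta_i \coloneqq \zeta_i\,\mathbf{1}\{i \le \tau_\beta\}$ whose total satisfies $\sum_{i=1}^p \tilde\zeta_i^2 = \sum_{i=1}^{\tau_\beta} \zeta_i^2 \le \beta$ \emph{deterministically}. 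Applying Lemma~\ref{le-hoeffding} to $\{\tilde v_i\}$ with the deterministic budget $\beta$ gives: with probability at least $1 - 2d\,e^{-\iota}$,
\[ \Bigl\| \sum_{i=1}^{\tau_\beta} v_i \Bigr\| \;\le\; C\sqrt{\beta\,\iota}. \]

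\textbf{Step 2 (union over a dyadic grid) and Step 3 (selecting the stratum).} Apply Step 1 to the levels $\beta_j \coloneqq 2^j b$ for $j = 0,1,\dots,N$ with $N \coloneqq \lceil \log_2(B/b)\rceil$ (so $\beta_0 = b$ and $\beta_N \ge B$). A union bound shows that, with probability at least $1 - 2d(N+1)\,e^{-\iota}$, which we absorb into $1 - 2d\log(B/b)\,e^{-\iota}$ after adjusting the constant, the bounds $\|\sum_{i=1}^{\tau_{\beta_j}} v_i\| \le C\sqrt{\beta_j\iota}$ hold for all $j$ simultaneously; call this event $\mathcal{E}$. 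On $\mathcal{E}$, assume $\Lambda_p < B$ (otherwise the first alternative of the lemma already holds). If $\Lambda_p \le b$ then $\tau_{\beta_0} = p$, and the $j = 0$ bound yields $\|\sum_{i=1}^p v_i\| \le C\sqrt{b\,\iota} = C\sqrt{\max\{\Lambda_p,b\}\,\iota}$. If $b < \Lambda_p < B$, pick $j$ with $\beta_{j-1} < \Lambda_p \le \beta_j$; then $\tau_{\beta_j} = p$ and $\beta_j \le 2\Lambda_p$, so $\|\sum_{i=1}^p v_i\| \le C\sqrt{2\Lambda_p\,\iota}$. Absorbing the extra $\sqrt 2$ into the absolute constant gives $\|\sum_{i=1}^p v_i\| \le C\sqrt{\max\{\Lambda_p, b\}\,\iota}$ in all cases, which is the claim.

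\textbf{Expected main obstacle.} The delicate point is Step 1: one must rigorously verify that the truncated process $\{\tilde v_i\}$ still obeys Condition~\ref{con-martingale}, i.e. that $\tilde\zeta_i$ stays $\mathcal{F}_{i-1}$-measurable — which is precisely where the hypothesis $\zeta_i \in \mathcal{F}_{i-1}$ is used — so that Lemma~\ref{le-hoeffding} may legitimately be invoked with a \emph{deterministic} variance budget rather than the random $\Lambda_p$. Mishandling this adaptivity (e.g. applying the fixed-variance bound directly with the random $\Lambda_p$) is the classic pitfall here. Everything else — counting the $\lceil\log_2(B/b)\rceil$ strata to match the stated $\log(B/b)$ factor and tracking the loss of a constant factor — is routine bookkeeping.
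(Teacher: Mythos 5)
The paper does not actually prove this lemma; it is imported verbatim as Lemma~C.6 from Jin et al.\ (2021), whose proof (in turn from Jin et al.\ 2019, the cited ``jin2019short'') is precisely the stopping-time truncation plus dyadic peeling you describe. So the strategy is right, and the stratification, the choice of stopping time $\tau_\beta$, the verification that $\{i\le\tau_\beta\}\in\mathcal{F}_{i-1}$ (crucially exploiting $\zeta_i\in\mathcal{F}_{i-1}$), and the stratum-selection bookkeeping at the end are all correct.

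One technical point deserves more care than you give it. You invoke Lemma~\ref{le-hoeffding} on the truncated process, but Lemma~\ref{le-hoeffding} as stated in this paper requires the parameters $\{\zeta_i\}$ to be \emph{fixed (deterministic)}, whereas your $\tilde\zeta_i=\zeta_i\mathbf{1}\{i\le\tau_\beta\}$ are random (merely $\mathcal{F}_{i-1}$-measurable) with only a \emph{deterministic upper bound} on $\sum_i\tilde\zeta_i^2$. Predictability plus a deterministic cap on the total is indeed enough, but that is a strictly stronger statement than Lemma~\ref{le-hoeffding} as quoted: what is actually needed is the underlying MGF/supermartingale estimate from Jin et al.\ (2019) (the lemma from which their Corollary~7 is derived), which gives an inequality of the form $\mathbb{E}\exp\bigl(\theta\|\sum_i v_i\|-C\theta^2\sum_i\zeta_i^2\bigr)\le 2d$ for predictable $\zeta_i$; when $\sum_i\zeta_i^2\le\beta$ almost surely this yields the tail bound you want. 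You flag ``deterministic variance budget'' as the delicate point, which is the right instinct, but the write-up should make explicit that you are appealing to this stronger, predictable-variance version rather than the fixed-variance Lemma~\ref{le-hoeffding} verbatim. Aside from this (and the harmless constant slack in replacing $\lceil\log_2(B/b)\rceil+1$ by $\log(B/b)$), the proposal is sound and matches the source proof.
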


\begin{lemma}[{\cite[Lemma~C.7]{jin2021nonconvex}}]\label{le-concentration-norm-sum}
    Let random vectors $v_1, \cdots, v_p\in\mathbb{R}^d$, and corresponding filtrations $\mathcal{F}_i=\sigma(v_1, \cdots, v_i)$ for $i\in[k]$ satisfy condition~\ref{con-martingale} with fixed $\zeta_1=\zeta_2=\cdots=\zeta_p=\zeta$, then there exists an absolute constant $C$ such that, for any $\iota>0$, with probability at least $1-e^{-\iota}$,\
    \begin{equation}
        \sum_{i=1}^p\|v_i\|^2\le C\cdot\zeta^2\cdot(p+\iota).
    \end{equation}
\end{lemma}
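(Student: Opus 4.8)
The plan is to run a standard exponential-supermartingale (Chernoff) argument on $\sum_{i=1}^p\|v_i\|^2$, exploiting the fact that, by Condition~\ref{con-martingale}, each squared summand $\|v_i\|^2$ is conditionally sub-exponential with parameter $O(\zeta^2)$, with the \emph{same} parameter for every $i$.

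\emph{Step 1: a uniform conditional MGF bound.} First I would show that there exist absolute constants $c_1,c_2>0$ such that for every $i\in[p]$ and every $\lambda$ with $0\le\lambda\le c_2/\zeta^2$,
\[
\mathbb{E}\!\left[\exp(\lambda\|v_i\|^2)\,\middle|\,\mathcal{F}_{i-1}\right]\le\exp\!\left(c_1\lambda\zeta^2\right).
\]
This follows from the tail bound in Condition~\ref{con-martingale} by the layer-cake formula: since $\mathbb{P}[\|v_i\|^2\ge u\mid\mathcal{F}_{i-1}]=\mathbb{P}[\|v_i\|\ge\sqrt{u}\mid\mathcal{F}_{i-1}]\le 2e^{-u/(2\zeta^2)}$, we have $\mathbb{E}[e^{\lambda\|v_i\|^2}\mid\mathcal{F}_{i-1}]=1+\int_0^\infty\lambda e^{\lambda u}\,\mathbb{P}[\|v_i\|^2\ge u\mid\mathcal{F}_{i-1}]\,du\le 1+2\lambda\int_0^\infty e^{(\lambda-1/(2\zeta^2))u}\,du$, and for $\lambda<1/(2\zeta^2)$ this equals $1+4\lambda\zeta^2/(1-2\lambda\zeta^2)$. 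Restricting to $\lambda\le 1/(4\zeta^2)$ gives $1-2\lambda\zeta^2\ge 1/2$, hence the bound $1+8\lambda\zeta^2\le e^{8\lambda\zeta^2}$, i.e.\ $c_1=8$, $c_2=1/4$. The crucial point is that $\zeta_1=\dots=\zeta_p=\zeta$, so both the admissible range of $\lambda$ and the constants $c_1,c_2$ are uniform across the sequence.

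\emph{Step 2: supermartingale and Markov.} Fix $\lambda=c_2/\zeta^2$ and define $M_0=1$ and $M_t=\exp\!\big(\lambda\sum_{i=1}^t\|v_i\|^2-c_1\lambda\zeta^2 t\big)$ for $t\in[p]$. Each $M_t$ is $\mathcal{F}_t$-measurable, and Step~1 gives $\mathbb{E}[M_t\mid\mathcal{F}_{t-1}]=M_{t-1}\,\mathbb{E}[\exp(\lambda\|v_t\|^2-c_1\lambda\zeta^2)\mid\mathcal{F}_{t-1}]\le M_{t-1}$, so $(M_t)$ is a nonnegative supermartingale with $\mathbb{E}[M_p]\le M_0=1$. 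By Markov's inequality, for any $s>0$,
\[
\mathbb{P}\!\left[\sum_{i=1}^p\|v_i\|^2\ge c_1\zeta^2 p+s\right]=\mathbb{P}\!\left[M_p\ge e^{\lambda s}\right]\le e^{-\lambda s}\,\mathbb{E}[M_p]\le e^{-\lambda s}.
\]
Choosing $s=\iota/\lambda=\zeta^2\iota/c_2$ makes the right-hand side exactly $e^{-\iota}$, so with probability at least $1-e^{-\iota}$ we get $\sum_{i=1}^p\|v_i\|^2\le c_1\zeta^2 p+(1/c_2)\zeta^2\iota\le(c_1+1/c_2)\,\zeta^2(p+\iota)$, which is the claim with $C=c_1+1/c_2$ (e.g.\ $C=12$).

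\emph{Main obstacle.} The only nonroutine ingredient is Step~1: correctly turning the Gaussian-type norm tail into an MGF estimate for the \emph{squared} norm and pinning down the interval of admissible $\lambda$; once that is in hand, the Chernoff/supermartingale machinery in Step~2 is entirely standard. A minor point worth flagging is that the bound must be uniform in the horizon $p$ and the level $\iota$, which is automatic here precisely because $c_1$ and $c_2$ depend only on the common parameter $\zeta$ and not on $p$ — this is where the hypothesis $\zeta_1=\dots=\zeta_p$ is used.
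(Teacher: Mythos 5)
Your proof is correct and is essentially the standard argument for this result: convert the norm-sub-Gaussian tail of $\|v_i\|$ into a uniform conditional MGF bound for $\|v_i\|^2$, then run a supermartingale/Chernoff bound, exactly as in the cited Lemma C.7 of Jin et al.\ (2021). The paper itself does not reprove the lemma but imports it by citation, so there is nothing to compare against beyond the source; your Step~1 constants ($c_1=8$, $c_2=1/4$) and Step~2 supermartingale are both in order, and the hypothesis that the $\zeta_i$ are equal is used, as you flag, precisely to make the admissible $\lambda$-range uniform in $i$.
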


\begin{lemma}[{Matrix Bernstein inequality~\cite[Theorem~1.4]{tropp2012user}}]\label{le-bernstein}
    Consider a finite sequence $\{\mathbf{M}_i\}_{i\in[k]}$ of independent, random, self-adjoint matrices with dimension $d\times d$. Assume that each random matrix satisfies $\mathbb{E} [\mathbf{M}_i]=\mathbf{0}$, $\|\mathbf{M}_i\|_2\le B $, then for all $t\ge 0$, we have
    \begin{equation}
        \mathbb{P}\left[\left\|\sum_{i\in[k]}\mathbf{M}_i\right\|_2\ge t\right]\le d\exp\left(-\frac{t^2}{2(\sigma^2+Bt/3)}\right),
    \end{equation}
    where $\sigma^2=\left\|\sum_{i\in[k]}\mathbb{E}[\mathbf{M}_i^2]\right\|_2$.
\end{lemma}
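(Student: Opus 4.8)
The plan is to follow the matrix Laplace transform method (the Ahlswede--Winter--Tropp scheme). Write $Y \coloneqq \sum_{i\in[k]} \mathbf{M}_i$, which is a random self-adjoint $d\times d$ matrix with $\mathbb{E}[Y]=\mathbf{0}$. The first step reduces the tail bound to controlling a matrix moment generating function: for any $\theta>0$, apply Markov's inequality to the scalar $e^{\theta\lambda_{\max}(Y)}$ and use the spectral mapping identity $\lambda_{\max}(e^{\theta Y})=e^{\theta\lambda_{\max}(Y)}$ together with $\lambda_{\max}(A)\le\operatorname{tr}(A)$ for positive semidefinite $A$, to obtain $\mathbb{P}[\lambda_{\max}(Y)\ge t]\le e^{-\theta t}\,\mathbb{E}[\operatorname{tr}\exp(\theta Y)]$. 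It then suffices to bound $\mathbb{E}[\operatorname{tr}\exp(\theta\sum_i\mathbf{M}_i)]$ and optimize over $\theta$.

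The second step is subadditivity of matrix cumulant generating functions. Invoking Lieb's concavity theorem --- the map $A\mapsto\operatorname{tr}\exp(H+\log A)$ is concave on the positive-definite cone for each fixed self-adjoint $H$ --- and applying Jensen's inequality while conditioning on and peeling off the independent summands $\mathbf{M}_1,\dots,\mathbf{M}_k$ one at a time, I would obtain
\begin{equation}
\mathbb{E}\Big[\operatorname{tr}\exp\Big(\theta\textstyle\sum_{i\in[k]}\mathbf{M}_i\Big)\Big]\le\operatorname{tr}\exp\Big(\textstyle\sum_{i\in[k]}\log\mathbb{E}[e^{\theta\mathbf{M}_i}]\Big),
\end{equation}
using operator monotonicity of $\log$ and monotonicity of $\operatorname{tr}\exp$ under the Loewner order along the way.

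The third step bounds the matrix mgf of each summand from the hypotheses $\mathbb{E}[\mathbf{M}_i]=\mathbf{0}$ and $\|\mathbf{M}_i\|_2\le B$. Expanding $e^{\theta x}=1+\theta x+\sum_{\ell\ge2}\theta^\ell x^\ell/\ell!$ and using $\ell!\ge 2\cdot 3^{\ell-2}$, the higher-order terms are dominated by $x^2 g(\theta)$ with $g(\theta)\coloneqq\frac{\theta^2/2}{1-\theta B/3}$ for $0<\theta<3/B$; since the spectrum of $\mathbf{M}_i$ lies in $[-B,B]$, transferring this scalar inequality to matrices gives $e^{\theta\mathbf{M}_i}\preceq\mathbf{I}+\theta\mathbf{M}_i+g(\theta)\mathbf{M}_i^2$. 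Taking expectations, using $\mathbb{E}[\mathbf{M}_i]=\mathbf{0}$ and $\mathbf{I}+A\preceq e^A$, yields $\mathbb{E}[e^{\theta\mathbf{M}_i}]\preceq\exp(g(\theta)\mathbb{E}[\mathbf{M}_i^2])$, hence $\log\mathbb{E}[e^{\theta\mathbf{M}_i}]\preceq g(\theta)\mathbb{E}[\mathbf{M}_i^2]$. Plugging this into the previous step and using $\operatorname{tr}\exp(A)\le d\,e^{\lambda_{\max}(A)}$ with $\lambda_{\max}(\sum_i\mathbb{E}[\mathbf{M}_i^2])=\|\sum_i\mathbb{E}[\mathbf{M}_i^2]\|_2=\sigma^2$ (each $\mathbb{E}[\mathbf{M}_i^2]\succeq\mathbf{0}$), I get $\mathbb{P}[\lambda_{\max}(Y)\ge t]\le d\exp(-\theta t+g(\theta)\sigma^2)$ for all $0<\theta<3/B$.

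The last step is scalar optimization: taking $\theta=\frac{t}{\sigma^2+Bt/3}\in(0,3/B)$ makes $1-\theta B/3=\sigma^2/(\sigma^2+Bt/3)$, and a short algebraic simplification collapses the exponent to exactly $-\frac{t^2}{2(\sigma^2+Bt/3)}$, giving the claimed bound for $\lambda_{\max}(Y)$; applying the same argument to $-Y=\sum_i(-\mathbf{M}_i)$ (which has the same $B$ and the same $\sigma^2$) and combining via $\|Y\|_2=\max\{\lambda_{\max}(Y),\lambda_{\max}(-Y)\}$ yields the operator-norm tail inequality (the routine union-bound factor of $2$ being suppressed in the one-sided statement). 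I expect the main obstacle to be Steps~2 and~3: Lieb's concavity theorem is the genuinely deep matrix-analytic input, and the peeling argument must be set up carefully so that every matrix whose logarithm or exponential is taken is positive definite; moreover the scalar-to-matrix transfer of the mgf estimate must be carried out with the precise constant $g(\theta)=\frac{\theta^2/2}{1-\theta B/3}$ so that the final exponent matches $-t^2/(2(\sigma^2+Bt/3))$ rather than a weaker Bernstein-type bound.
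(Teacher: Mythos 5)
The paper does not actually prove Lemma~\ref{le-bernstein}: it is imported verbatim from \cite[Theorem~1.4]{tropp2012user} and used as a black-box probabilistic tool, so there is no internal proof to compare your argument against. Your proposal is a faithful reconstruction of Tropp's own proof of matrix Bernstein --- the matrix Laplace transform bound $\mathbb{P}[\lambda_{\max}(Y)\ge t]\le e^{-\theta t}\mathbb{E}[\operatorname{tr}\exp(\theta Y)]$, subadditivity of the trace-mgf via Lieb's concavity theorem and the peeling/Jensen argument, the scalar mgf estimate $e^{\theta\mathbf{M}_i}\preceq\mathbf{I}+\theta\mathbf{M}_i+g(\theta)\mathbf{M}_i^2$ with $g(\theta)=\frac{\theta^2/2}{1-\theta B/3}$ followed by $\mathbf{I}+A\preceq e^A$, and the optimization $\theta=t/(\sigma^2+Bt/3)$ --- and each step, including the closing algebra, checks out. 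The one genuine caveat you already flag yourself is real: Tropp's Theorem~1.4 is a one-sided bound on $\lambda_{\max}(\sum_i\mathbf{M}_i)$ with leading constant $d$, and passing to $\bigl\|\sum_i\mathbf{M}_i\bigr\|_2=\max\{\lambda_{\max}(Y),\lambda_{\max}(-Y)\}$ by a union bound over $Y$ and $-Y$ gives a constant $2d$, not $d$. The paper's restatement in terms of $\|\cdot\|_2$ with constant $d$ is therefore slightly loose; the factor of $2$ is harmless for how the lemma is used downstream (inside an $O(\cdot)$ in Theorem~\ref{th-select}), but a strictly correct derivation of the statement as written should either keep $\lambda_{\max}$ on the left-hand side or carry the factor~$2$.
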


\begin{lemma}[{Norm of symmetric matrices with sub-gaussian entries~\cite[Corollary~4.4.8]{vershynin2020high}}]\label{le-norm-symatrix}
    Let $\mathbf{M}$ be an $d\times d$ symmetric random matrix whose entries $\mathbf{M}_{i,j} $ on and above the diagonal are independent, mean zero, sub-gaussian random variables. Then, with probability at least $1-4\exp(-t^2)$, for any $t>0$ we have
    \begin{equation}
        \|\mathbf{M}\|_2\le C\cdot \max_{i,j}\|\mathbf{M}_{i,j}\|_{\psi_2}\cdot (\sqrt{d}+t),
    \end{equation}
    where $C$ is a universal constant.
\end{lemma}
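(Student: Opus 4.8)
The plan is to prove Lemma~\ref{le-norm-symatrix} by the standard $\varepsilon$-net plus scalar-concentration argument for symmetric random matrices. Write $K \coloneqq \max_{i,j}\|\mathbf{M}_{i,j}\|_{\psi_2}$. Since $\mathbf{M}$ is symmetric, its operator norm has the variational form $\|\mathbf{M}\|_2 = \sup_{x\in S^{d-1}}|\langle \mathbf{M}x,x\rangle|$. First I would discretize: fix a $\tfrac14$-net $\mathcal{N}$ of $S^{d-1}$ with $|\mathcal{N}|\le 9^d$, and invoke the standard approximation fact that for symmetric $\mathbf{M}$ one has $\|\mathbf{M}\|_2 \le 2\max_{x\in\mathcal{N}}|\langle \mathbf{M}x,x\rangle|$ (this needs only symmetry of $\mathbf{M}$ and the triangle inequality applied to the quadratic form $x\mapsto\langle \mathbf{M}x,x\rangle$).

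Next I would control the quadratic form at a fixed $x\in S^{d-1}$. Decomposing $\langle \mathbf{M}x,x\rangle = \sum_{i}\mathbf{M}_{ii}x_i^2 + 2\sum_{i<j}\mathbf{M}_{ij}x_ix_j$ exhibits it as a linear combination of the independent, mean-zero, sub-gaussian variables $\{\mathbf{M}_{ij}\}_{i\le j}$. The general sub-gaussian Hoeffding inequality then gives that $\langle \mathbf{M}x,x\rangle$ is sub-gaussian with parameter $O(K)\sqrt{\sum_i x_i^4 + 4\sum_{i<j}x_i^2x_j^2}\le O(K)\,\|x\|_2^2 = O(K)$, hence $\mathbb{P}\bigl[|\langle \mathbf{M}x,x\rangle|\ge s\bigr] \le 2\exp(-cs^2/K^2)$ for an absolute constant $c>0$.

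Then I would union-bound over $\mathcal{N}$: $\mathbb{P}\bigl[\max_{x\in\mathcal{N}}|\langle \mathbf{M}x,x\rangle|\ge s\bigr] \le 2\cdot 9^d\exp(-cs^2/K^2)$. Taking $s = C'K(\sqrt d + t)$ with $C'$ a sufficiently large absolute constant forces $cs^2/K^2 \ge cC'^2(d+t^2) \ge d\ln 9 + t^2 + \ln 2$, so the right-hand side is at most $\exp(-t^2)\le 4\exp(-t^2)$. On the complementary event, the approximation fact yields $\|\mathbf{M}\|_2 \le 2s = 2C'K(\sqrt d+t)$, which is the claimed bound after renaming $2C'$ as $C$.

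The only non-mechanical points are the approximation fact $\|\mathbf{M}\|_2 \le 2\max_{x\in\mathcal{N}}|\langle \mathbf{M}x,x\rangle|$ and the sub-gaussian bookkeeping for the quadratic form: one must check that the diagonal part $\sum_i \mathbf{M}_{ii}x_i^2$ and the off-diagonal part $2\sum_{i<j}\mathbf{M}_{ij}x_ix_j$ each have squared-coefficient sum at most $\|x\|_2^4 = 1$, so no spurious dimension factor enters the sub-gaussian parameter. Tracking the absolute constants so that the failure probability takes exactly the form $4\exp(-t^2)$ uniformly in $t>0$ is then routine; for small $t$ the inequality is vacuous, which is consistent with the statement. (This is, in essence, the argument underlying~\cite[Corollary~4.4.8]{vershynin2020high}.)
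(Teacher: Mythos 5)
The paper does not prove Lemma~\ref{le-norm-symatrix}; it cites it directly from Vershynin~\cite[Corollary~4.4.8]{vershynin2020high}, so there is no ``paper's proof'' to compare against. Your $\varepsilon$-net argument is correct and complete: the variational identity for symmetric matrices, the $\tfrac14$-net of size at most $9^d$ with the $\|\mathbf{M}\|\le 2\max_{x\in\mathcal{N}}|\langle \mathbf{M}x,x\rangle|$ approximation bound, the observation that $\langle \mathbf{M}x,x\rangle$ is a linear combination of the independent upper-triangular entries so general sub-gaussian Hoeffding applies with $O(K)$ variance proxy, and the union bound all go through, and your final arithmetic showing $2\cdot 9^d\exp(-cs^2/K^2)\le 4\exp(-t^2)$ for $s=C'K(\sqrt d+t)$ is sound.

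Two small remarks. First, your approach differs mildly from Vershynin's own: he proves the rectangular version (Theorem~4.4.5) by a two-net argument on $\langle Ax,y\rangle$ and then deduces the symmetric corollary by splitting $\mathbf{M}$ into triangular pieces, whereas you apply a single-net argument to the quadratic form directly, exploiting the independence of $\{\mathbf{M}_{ij}\}_{i\le j}$ at the source; for the symmetric case your route is arguably cleaner. Second, a tiny bookkeeping slip: the off-diagonal squared-coefficient sum is $4\sum_{i<j}x_i^2x_j^2\le 2\|x\|_2^4=2$, not $\le 1$ as you wrote (only the diagonal part $\sum_i x_i^4$ is $\le 1$). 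This changes nothing except the absolute constant, which is absorbed into $C$.
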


\subsection{Privacy Preliminaries}

\begin{definition}[Gaussian Mechanism~\cite{dwork2014algorithmic}]\label{def-gaussian}
    Given any input data $D\in\mathcal{X}^n$ and a query function $q : \mathcal{X}^n\rightarrow \mathbb{R}^d$, the Gaussian mechanism $\mathcal{M}_G$ is defined as $q(D)+\nu$ where $\nu\sim\mathcal{N}(0,\sigma_G^2\mathbf{I}_d)$. Let $\Delta_2(q)$ be the $\ell_2$-sensitivity of $q$, \emph{i.e.}, $\Delta_2 (q)\coloneqq\sup_{D\sim D^\prime}\|q(D)-q(D^\prime)\|_2$. For any $\sigma, \delta>0$, $\mathcal{M}_G$ guarantees $(\frac{\Delta_2 (q)}{\sigma_G}\sqrt{2\log\frac{1.25}{\delta}}, \delta)$-DP. That is, if we want the output of $q$ to be $(\epsilon, \delta)$-DP for any $0<\epsilon, \delta<1$, then $\sigma_G$ should be set to $\frac{\Delta_2(q)}{\epsilon}\sqrt{2\log\frac{1.25}{\delta}}$.
\end{definition}

\begin{lemma}[Adaptive Composition Theorem~\cite{dwork2014algorithmic}]
    Given target privacy parameters $0<\epsilon <1$ and $0<\delta<1$, to ensure $(\epsilon, \delta)$-DP over $k$-fold adaptive mechanisms, it suffices that each mechanism is $(\epsilon^\prime,\delta^\prime)$-DP, where $\epsilon^\prime=\frac{\epsilon}{2\sqrt{2k\ln(2/\delta)}}$ and $\delta^\prime=\frac{\delta}{2k}$. 
\end{lemma}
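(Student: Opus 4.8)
The plan is to prove this via the classical ``privacy loss random variable plus martingale concentration'' argument (advanced composition, due to Dwork–Rothblum–Vadhan, as presented in Dwork–Roth), tracking constants so that the final parameters match the stated $\epsilon'$ and $\delta'$. The first step is a reduction to pure differential privacy. The standard fact is that for any $(\epsilon',\delta')$-DP mechanism $M$ and any pair of adjacent datasets $D,D'$, one can identify a ``bad event'' of probability at most $\delta'$ (on each of the two sides) whose removal leaves output distributions with privacy loss bounded by $\epsilon'$ in absolute value. Because the $k$ mechanisms are applied sequentially and adaptively, I would define these bad events round by round, measurable with respect to the natural filtration $\mathcal{F}_j = \sigma(y_1,\dots,y_j)$ of the first $j$ outputs; their union over the $k$ rounds contributes at most $k\delta'$ to the overall $\delta$-slack. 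Setting $k\delta' = \delta/2$, i.e.\ $\delta' = \delta/(2k)$, I then condition on the complementary good event and argue pure-DP-style from there.

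On the good event, for each round $i\in[k]$ define the privacy loss $L_i = \ln\frac{\Pr[M_i(D;y_{<i})=y_i\mid y_{<i}]}{\Pr[M_i(D';y_{<i})=y_i\mid y_{<i}]}$, where $y_{<i}$ are the previous outputs and $M_i$ is the (adaptively chosen) $i$-th mechanism. The two key per-step estimates are: (i) $|L_i|\le\epsilon'$ almost surely, by the pure-DP bound surviving the reduction; and (ii) $\mathbb{E}[L_i\mid\mathcal{F}_{i-1}] \le \epsilon'(e^{\epsilon'}-1) \le 2\epsilon'^2$ for $\epsilon'\le 1$, which is the standard computation that the KL-type quantity $\mathbb{E}_{y\sim M(D)}\big[\ln\tfrac{\Pr[M(D)=y]}{\Pr[M(D')=y]}\big]$ is at most $\epsilon'(e^{\epsilon'}-1)$ for an $(\epsilon',0)$-DP mechanism. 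Since the composition is sequentially adaptive, the partial sums $\sum_{i\le j}\big(L_i - \mathbb{E}[L_i\mid\mathcal{F}_{i-1}]\big)$ form a martingale with respect to $\{\mathcal{F}_j\}$ when the input is $D$. Applying a martingale concentration inequality (Azuma–Hoeffding, with the standard refinement exploiting $|L_i|\le\epsilon'$) yields that, with probability at least $1-\delta/2$ over $y\sim M(D)$, the total privacy loss satisfies $\sum_{i=1}^k L_i \le k\epsilon'(e^{\epsilon'}-1) + \epsilon'\sqrt{2k\ln(2/\delta)}$.

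The last step is to collect the parameters. The composed mechanism is $(\epsilon_g,\delta_g)$-DP with $\delta_g \le k\delta' + \delta/2 = \delta$ and $\epsilon_g \le k\epsilon'(e^{\epsilon'}-1) + \epsilon'\sqrt{2k\ln(2/\delta)}$. Substituting $\epsilon' = \frac{\epsilon}{2\sqrt{2k\ln(2/\delta)}}$, the second term equals exactly $\epsilon/2$; and since $\epsilon'\le\epsilon\le 1$, the first term is $k\epsilon'(e^{\epsilon'}-1)\le 2k\epsilon'^2 = \frac{\epsilon^2}{4\ln(2/\delta)} \le \epsilon/2$, using $\ln(2/\delta)\ge\ln 2$ together with $\epsilon\le 1$. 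Hence $\epsilon_g\le\epsilon$, which is exactly the claimed conclusion. (The factor $2$ in the denominator of $\epsilon'$ provides ample slack to absorb the precise constant coming out of the Azuma step.)

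I expect the main obstacle to be the clean simultaneous handling of the $(\epsilon',\delta')$-to-pure-DP reduction and the adaptivity: one must check that the probability-$\le\delta'$ bad events can be defined so as to be $\mathcal{F}_{i-1}$-measurable, so that conditioning on the good event preserves both the martingale structure of the privacy-loss sum and the almost-sure bound $|L_i|\le\epsilon'$, and that the $\delta'$-leakage accumulates only additively across the $k$ rounds. Once this reduction is set up correctly, the expectation bound and the Azuma computation are routine.
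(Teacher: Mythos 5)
The paper does not give its own proof of this lemma; it is stated verbatim as a cited result from Dwork and Roth's monograph, and your proof reconstructs exactly the standard advanced-composition argument in that reference (the approximate-to-pure reduction, the privacy-loss martingale, the KL-type bound $\mathbb{E}[L_i\mid\mathcal{F}_{i-1}]\le\epsilon'(e^{\epsilon'}-1)$, and the Azuma step), with the constants correctly assembled so that $\delta'=\delta/(2k)$ uses half the $\delta$-budget on the reduction and $\ln(2/\delta)$ absorbs the remaining $\delta/2$ in the tail bound. Your flagged subtlety — making the bad events $\mathcal{F}_{i-1}$-measurable so that the martingale structure and the a.s.\ bound $|L_i|\le\epsilon'$ survive the conditioning — is indeed where the real technical work in the original proof lies, and you handle it the same way the reference does.
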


\begin{lemma}[Parallel Composition of DP~\cite{mcsherry2009privacy}]
    Suppose there are $n$ $(\epsilon, \delta)$-differentially private mechanisms $\{\mathcal{M}_i\}_{i=1}^n$ and $n$ disjoint datasets denoted by $\{D_i\}_{i=1}^n$. Then the algorithm, which applies each $\mathcal{M}_i$ on the corresponding $D_i$, preserves $(\epsilon, \delta)$-DP in total.
\end{lemma}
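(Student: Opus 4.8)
The plan is to exploit the structural fact that makes parallel composition work: since the blocks $D_1,\dots,D_n$ are disjoint, a single-record change to the global input touches at most one block. First I would fix adjacent global inputs $D=(D_1,\dots,D_n)$ and $D'=(D_1',\dots,D_n')$ differing in at most one record. By disjointness there is a unique index $k$ with $D_k\neq D_k'$, the pair $D_k,D_k'$ is itself adjacent, and $D_i=D_i'$ for every $i\neq k$. Let $\mathcal{M}(D)=(\mathcal{M}_1(D_1),\dots,\mathcal{M}_n(D_n))$, where the mechanisms draw mutually independent internal randomness, and let $S$ be an arbitrary measurable subset of the product output space; the goal is $\mathbb{P}[\mathcal{M}(D)\in S]\le e^{\epsilon}\mathbb{P}[\mathcal{M}(D')\in S]+\delta$.

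Next I would condition on the outputs of all mechanisms other than the $k$-th. Writing $Y_{-k}=(\mathcal{M}_i(D_i))_{i\neq k}$, the law of $Y_{-k}$ is the same under $D$ and $D'$ (because $D_i=D_i'$ for $i\neq k$) and is independent of $\mathcal{M}_k$. For each realization $y_{-k}$ I would define the section $S_{y_{-k}}:=\{y_k:(y_k,y_{-k})\in S\}$, which is measurable in the $k$-th output space, and apply the $(\epsilon,\delta)$-DP guarantee of $\mathcal{M}_k$ to the adjacent pair $D_k,D_k'$ to get, for every fixed $y_{-k}$,
\[
\mathbb{P}[\mathcal{M}_k(D_k)\in S_{y_{-k}}]\le e^{\epsilon}\,\mathbb{P}[\mathcal{M}_k(D_k')\in S_{y_{-k}}]+\delta.
\]
Integrating this pointwise bound against the common law of $Y_{-k}$ and using Tonelli/Fubini together with the independence of $\mathcal{M}_k$ from $Y_{-k}$ yields
\[
\mathbb{P}[\mathcal{M}(D)\in S]=\mathbb{E}_{Y_{-k}}\!\big[\mathbb{P}[\mathcal{M}_k(D_k)\in S_{Y_{-k}}]\big]\le\mathbb{E}_{Y_{-k}}\!\big[e^{\epsilon}\mathbb{P}[\mathcal{M}_k(D_k')\in S_{Y_{-k}}]+\delta\big]=e^{\epsilon}\mathbb{P}[\mathcal{M}(D')\in S]+\delta,
\]
which is the desired inequality.

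The main point to get right — rather than a genuine obstacle — is that the $\delta$ slack must not accumulate across the $n$ mechanisms: because only block $k$ differs, the other $n-1$ mechanisms contribute identical conditional distributions and hence zero privacy loss, so a single application of $\mathcal{M}_k$'s guarantee suffices. The remaining items are routine measure theory: measurability of the sections $S_{y_{-k}}$ in the product $\sigma$-algebra, legitimacy of the conditioning/integration step, and a case check that the ``differ by at most one record'' adjacency (add/remove or replace) still leaves only one affected block that is itself adjacent. If one wishes to sidestep the sectioning argument, an alternative packaging is to verify the inequality first for product sets $S=S_1\times\dots\times S_n$ via independence and then extend to arbitrary measurable $S$ by a $\pi$-$\lambda$/monotone-class argument; I would note this but treat the conditioning proof as the primary route.
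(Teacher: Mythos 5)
The paper merely cites this lemma from \cite{mcsherry2009privacy} and does not give a proof, so there is no in-paper argument to compare against; I will therefore just assess your proof on its own terms. Your argument is a correct and standard proof of parallel composition: you localize the adjacency to the unique affected block $k$, observe that the other mechanisms' outputs $Y_{-k}$ have identical law under $D$ and $D'$ and are independent of $\mathcal{M}_k$, apply $\mathcal{M}_k$'s $(\epsilon,\delta)$-guarantee pointwise to the measurable sections $S_{y_{-k}}$, and integrate against the common law of $Y_{-k}$, so that the $\delta$ slack appears exactly once rather than accumulating to $n\delta$. The measure-theoretic details you flag (section measurability, Fubini/Tonelli, or alternatively a $\pi$-$\lambda$ extension from product sets) are indeed the only routine checks. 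The one hypothesis you rely on but leave implicit is that the partitioning of the global dataset into blocks $D_1,\dots,D_n$ is fixed and data-independent; otherwise a one-record change could reshuffle records across blocks and the claim that exactly one $D_k$ changes while remaining adjacent to $D_k'$ would fail. With that made explicit, the proof is complete.
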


\section{Omitted Proofs in Section~\ref{sec-framework}}

\subsection{Proof of Lemma~\ref{le-escape}}~\label{sec-le-escape}

\begin{proof}[Proof of Lemma~\ref{le-escape}]
    We begin by introducing the following notations:
    \begin{align}
        \hat{x}_t &\coloneqq x_t-x_t^\prime, \\
        \hat{\zeta}_t &\coloneqq \zeta_t-\zeta_t^\prime, \\
        \hat{\xi}_t &\coloneqq \xi_t-\xi_t^\prime, \\
        \Delta_t &\coloneqq \int_0^1 \nabla^2F(y\cdot x_t+(1-y)\cdot x_t^\prime)\,\diff y -\mathcal{H}
    \end{align}

    The proof strategy is to derive a contradiction by showing that if the model remains localized (i.e., stays within a radius $\mathcal{R}$ around the saddle point) with high probability, then the coupling sequence must still diverge with non-negligible probability. 
    
    We first characterize the dynamics of $\hat{x}_t$ in the following Lemma~\ref{le-dynamics}. At a high level, we decompose the difference of the coupling sequence $x_t$ into three components: \textbf{(i)} a curvature-dependent term $\mathscr{P}_h(t)$, \textbf{(ii)} a stochastic gradient noise term $\mathscr{P}_{sg}(t)$, \textbf{(iii)} a perturbation-driven term $\mathscr{P}_p(t)$.

\begin{lemma}[Coupling Dynamics]\label{le-dynamics}
    For any $t \ge 0$, the difference between the two coupled iterates satisfies:
    \begin{equation}\label{eq-dynamics}
        \hat{x}_t=-\underbrace{\eta\sum_{i=1}^{t}(\mathbf{I}_d-\eta\mathcal{H})^{t-i}\Delta_{i-1}\hat{x}_{i-1}}_{\mathscr{P}_h(t)}
        -\underbrace{\eta\sum_{i=1}^{t}(\mathbf{I}_d-\eta\mathcal{H})^{t-i}\hat{\zeta}_i}_{\mathscr{P}_{sg}(t)}
        -\underbrace{\eta\sum_{i=1}^{t}(\mathbf{I}_d-\eta\mathcal{H})^{t-i}\hat{\xi}_i }_{\mathscr{P}_p(t)}.
    \end{equation}
\end{lemma}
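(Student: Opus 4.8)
\textbf{Proof proposal for Lemma~\ref{le-dynamics} (Coupling Dynamics).}

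The plan is to unroll the recursion for the coupled iterates $x_t$ and $x_t^\prime$ and subtract. First I would write down the one-step updates from \eqref{eq-descent} and \eqref{eq-oracle}: for the unprimed sequence, $x_t = x_{t-1} - \eta\big(\nabla F(x_{t-1}) + \zeta_t + \xi_t\big)$, and similarly for the primed sequence with $\zeta_t^\prime, \xi_t^\prime$. Subtracting gives $\hat{x}_t = \hat{x}_{t-1} - \eta\big(\nabla F(x_{t-1}) - \nabla F(x_{t-1}^\prime)\big) - \eta\hat{\zeta}_t - \eta\hat{\xi}_t$. The key algebraic move is the fundamental theorem of calculus applied along the segment joining $x_{t-1}^\prime$ to $x_{t-1}$: $\nabla F(x_{t-1}) - \nabla F(x_{t-1}^\prime) = \left(\int_0^1 \nabla^2 F\big(y x_{t-1} + (1-y) x_{t-1}^\prime\big)\diff y\right)\hat{x}_{t-1} = (\mathcal{H} + \Delta_{t-1})\hat{x}_{t-1}$, using the definition of $\Delta_{t-1}$. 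Substituting, the recursion becomes $\hat{x}_t = (\mathbf{I}_d - \eta\mathcal{H})\hat{x}_{t-1} - \eta\Delta_{t-1}\hat{x}_{t-1} - \eta\hat{\zeta}_t - \eta\hat{\xi}_t$.

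Next I would iterate this linear recursion from the initial condition $\hat{x}_0 = \tilde{x} - \tilde{x} = 0$ (both sequences start at $\tilde x$). A standard induction on $t$ shows that unrolling the affine recursion $\hat{x}_t = (\mathbf{I}_d - \eta\mathcal{H})\hat{x}_{t-1} + w_t$ with $w_i \coloneqq -\eta\Delta_{i-1}\hat{x}_{i-1} - \eta\hat{\zeta}_i - \eta\hat{\xi}_i$ yields $\hat{x}_t = \sum_{i=1}^t (\mathbf{I}_d - \eta\mathcal{H})^{t-i} w_i$ (the $(\mathbf{I}_d-\eta\mathcal{H})^t \hat x_0$ term vanishes). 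Splitting $w_i$ into its three pieces and distributing the sum gives exactly the three groups $\mathscr{P}_h(t)$, $\mathscr{P}_{sg}(t)$, $\mathscr{P}_p(t)$ in \eqref{eq-dynamics}, with the overall minus sign pulled out front.

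This lemma is essentially a bookkeeping identity, so there is no serious obstacle; the only things to be careful about are (i) correctly identifying the initial condition $\hat{x}_0 = 0$ so that the homogeneous term drops, (ii) getting the index on $\Delta_{i-1}\hat{x}_{i-1}$ and the matrix power $(\mathbf{I}_d-\eta\mathcal{H})^{t-i}$ right in the induction step, and (iii) tracking signs consistently. The substantive work — bounding $\|\mathscr{P}_h(t)\|$ in terms of the localization radius, showing $\mathscr{P}_{sg}$ stays controlled via the norm-sub-Gaussian concentration inequalities (Lemma~\ref{le-concentration-varying-variance}), and proving $\|\mathscr{P}_p(t)\|$ grows geometrically in the $v_{\min}$ direction so that the coupling diverges — all belongs to the proof of Lemma~\ref{le-escape} proper and comes after this decomposition is in hand.
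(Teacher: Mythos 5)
Your proposal is correct and matches the paper's proof essentially line-for-line: subtract the coupled one-step updates, rewrite the gradient difference via the mean-value / fundamental-theorem-of-calculus identity as $(\mathcal{H}+\Delta_{t-1})\hat{x}_{t-1}$, unroll the resulting affine recursion from $\hat{x}_0=0$, and split the source term into its three pieces. The only cosmetic difference is that you spell out the FTC step explicitly, whereas the paper leaves it implicit in the definition of $\Delta_t$.
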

\begin{proof}[Proof of Lemma~\ref{le-dynamics}]
    By the update rule:
    \begin{align}
        \hat{x}_t
        &=x_t-x_t^\prime\\
        &=\hat{x}_{t-1}-\eta[\nabla F(x_{t-1})-\nabla F(x_{t-1}^\prime)+\zeta_t-\zeta_t^\prime+\xi_t-\xi_t^\prime]\\
        &=\hat{x}_{t-1}-\eta[(\mathcal{H}+\Delta_{t-1})\hat{x}_{t-1}+\hat{\zeta}_t+\hat{\xi}_t]\\
        &=(\mathbf{I}_d-\eta\mathcal{H})\hat{x}_{t-1}-\eta[\Delta_{t-1}\hat{x}_{t-1}+\hat{\zeta}_t+\hat{\xi}_t].
    \end{align}   
    Unrolling the recursion with initial condition $\hat{x}_0 = 0$ yields the desired result:
    \begin{flalign}
        \hat{x}_t
        &=(\mathbf{I}_d-\eta\mathcal{H})^t\hat{x}_0-\eta\sum_{i=1}^t(\mathbf{I}_d-\eta\mathcal{H})^{t-i}(\Delta_{i-1}\hat{x}_{i-1}+\hat{\zeta}_i+\hat{\xi}_i)\\
        &=-\eta\sum_{i=1}^t(\mathbf{I}_d-\eta\mathcal{H})^{t-i}(\Delta_{i-1}\hat{x}_{i-1}+\hat{\zeta}_i+\hat{\xi}_i).
    \end{flalign}
\end{proof}

    Let $\mathcal{E}$ denote the event that both sequences remain localized:
    \[
    \mathcal{E} \coloneqq \left\{\forall t \le \Gamma: \max\left\{\|x_t - \tilde{x}\|, \|x_t^\prime - \tilde{x}\|\right\} \le \mathcal{R} \right\}.
    \]
    We proceed by contradiction. Assume:

    \begin{equation}\label{eq-stuck}
        \mathbb{P}(\mathcal{E}) \ge \frac{3}{4}.
    \end{equation}

    To derive a contradiction, we analyze the terms in~\eqref{eq-dynamics}, showing in Lemma~\ref{le-error1} and Lemma~\ref{le-error2} that the perturbation term $\mathscr{P}_p(t)$ dominates, while the curvature and stochastic gradient terms remain controlled. Define:
    \begin{equation}
        \mathfrak{a}(t) \coloneqq \sqrt{ \sum_{i=1}^t (1 + \eta \gamma)^{2(t - i)} }, \qquad \mathfrak{b}(t) \coloneqq \frac{(1 + \eta \gamma)^t}{\sqrt{2\eta\gamma}}.
    \end{equation}
    It has been verified in \cite[Lemma~29]{jin2021nonconvex} that $\mathfrak{a}(t) \le \mathfrak{b}(t)$ for all $t \in \mathbb{N}$.

\begin{lemma}\label{le-error1}
    For all $t \ge 0$, the following hold:
    \begin{flalign}
        &\mathbb{P}\left[\|\mathscr{P}_p(t)\|\le c\mathfrak{b}(t)\eta r\cdot\sqrt{\iota}\right]\ge 1-2e^{-\iota}\\
        &\mathbb{P}\left[\|\mathscr{P}_p(t)\|\ge\frac{\mathfrak{b}(\Gamma)\eta r}{10}\right]\ge\frac{2}{3}
    \end{flalign}
\end{lemma}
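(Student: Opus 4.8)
\textbf{Proof proposal for Lemma~\ref{le-error1}.}

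The plan is to analyze $\mathscr{P}_p(t) = \eta\sum_{i=1}^t (\mathbf{I}_d - \eta\mathcal{H})^{t-i}\hat\xi_i$ by exploiting the structure of the coupling: since $v_{\min}^\top \xi_i = -v_{\min}^\top \xi_i'$ and the two sequences share the orthogonal-complement randomness, $\hat\xi_i = \xi_i - \xi_i'$ is supported entirely in the $v_{\min}$ direction, i.e. $\hat\xi_i = (2 v_{\min}^\top \xi_i) v_{\min}$ where $v_{\min}^\top\xi_i \sim \mathcal{N}(0, r^2)$. Consequently $(\mathbf{I}_d - \eta\mathcal{H})^{t-i}\hat\xi_i = (1 + \eta\gamma)^{t-i}\hat\xi_i$ because $v_{\min}$ is the eigenvector of $\mathcal{H}$ with eigenvalue $-\gamma$. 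So $\mathscr{P}_p(t)$ is a one-dimensional object: $\mathscr{P}_p(t) = \eta\big(\sum_{i=1}^t (1+\eta\gamma)^{t-i}\cdot 2v_{\min}^\top\xi_i\big) v_{\min}$, and its norm is $\eta|W_t|$ where $W_t = 2\sum_{i=1}^t (1+\eta\gamma)^{t-i} v_{\min}^\top\xi_i$ is a zero-mean Gaussian with variance $4r^2\sum_{i=1}^t(1+\eta\gamma)^{2(t-i)} = 4r^2\mathfrak{a}(t)^2$.

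For the upper bound, I would apply a standard Gaussian tail bound (or, to keep it uniform with the paper's toolkit, Lemma~\ref{le-hoeffding} applied to the martingale difference sequence $\{2(1+\eta\gamma)^{t-i} v_{\min}^\top\xi_i\}_{i=1}^t$, which satisfies Condition~\ref{con-martingale} with $\zeta_i = O((1+\eta\gamma)^{t-i} r)$): with probability at least $1 - 2e^{-\iota}$, $\|\mathscr{P}_p(t)\| = \eta|W_t| \le C\eta r\sqrt{\mathfrak{a}(t)^2 \iota} = C\eta r\,\mathfrak{a}(t)\sqrt\iota \le c\,\mathfrak{b}(t)\eta r\sqrt\iota$, using $\mathfrak{a}(t)\le\mathfrak{b}(t)$ from \cite[Lemma~29]{jin2021nonconvex}. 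For the lower bound, I would use the anti-concentration of a Gaussian: $|W_\Gamma|$ has standard deviation $2r\,\mathfrak{a}(\Gamma)$, and $\mathbb{P}[|W_\Gamma| \ge \tfrac{1}{10}\mathfrak{b}(\Gamma)\cdot\tfrac{1}{1}]$... more precisely, since the statement wants $\|\mathscr{P}_p(\Gamma)\| \ge \mathfrak{b}(\Gamma)\eta r/10$, i.e. $|W_\Gamma| \ge \mathfrak{b}(\Gamma) r /10$, and $\mathrm{sd}(W_\Gamma) = 2r\,\mathfrak{a}(\Gamma)$, one needs a comparison in the \emph{other} direction between $\mathfrak{a}$ and $\mathfrak{b}$: here $\mathfrak{a}(\Gamma)\ge\mathfrak{b}(\Gamma)/\sqrt{2}$ roughly (since $\mathfrak{a}(\Gamma)^2 = \frac{(1+\eta\gamma)^{2\Gamma}-1}{(1+\eta\gamma)^2-1} \approx \frac{(1+\eta\gamma)^{2\Gamma}}{2\eta\gamma} = \mathfrak{b}(\Gamma)^2$ when $\eta\gamma$ is small and $(1+\eta\gamma)^{2\Gamma}\gg 1$). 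So $|W_\Gamma|$ is a mean-zero Gaussian whose standard deviation is a constant multiple of $\mathfrak{b}(\Gamma) r$, hence $\mathbb{P}[|W_\Gamma| \ge \mathfrak{b}(\Gamma) r/10] \ge 2/3$ follows from the fact that a centered Gaussian exceeds a sufficiently small constant fraction of its own standard deviation in absolute value with probability at least $2/3$ (concretely $\mathbb{P}[|Z|\ge 0.05]\ge 0.96$ for standard normal $Z$, and $1/10$ of $\mathfrak{b}(\Gamma)r$ is at most $\approx 0.071\,\mathrm{sd}$).

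The main technical care points — not really obstacles — are: (i) verifying that $(\mathbf{I}_d - \eta\mathcal{H})^{t-i}$ acts on $\hat\xi_i$ as scalar multiplication by $(1+\eta\gamma)^{t-i}$, which requires $\eta M \le 1$ (true by the choice of $\eta$ in \eqref{eq-notation}) so that $\mathbf{I}_d - \eta\mathcal{H}$ is well-behaved and $v_{\min}$ remains an eigenvector with the claimed eigenvalue; (ii) pinning down the constant in the anti-concentration step so that $2/3$ is actually achieved, which amounts to confirming $\mathfrak{a}(\Gamma)$ is within a bounded factor of $\mathfrak{b}(\Gamma)$ both above and below — the lower comparison $\mathfrak{a}(\Gamma) \ge \mathfrak{b}(\Gamma)/2$ say, needs $\eta\gamma \le$ some constant and $(1+\eta\gamma)^{2\Gamma}$ bounded away from $1$, both of which hold given $\gamma \ge \sqrt{\rho\alpha}$ and the definition $\Gamma = \iota/(s\eta\sqrt{\rho\alpha})$ making $\eta\gamma\Gamma = \Theta(\iota/s)$ large; and (iii) keeping the two claimed bounds consistent — the high-probability upper bound and the constant-probability lower bound can coexist because $\sqrt\iota \gg 1/10$, so there is no contradiction. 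I expect step (ii), nailing the anti-concentration constant through the $\mathfrak{a}$–$\mathfrak{b}$ comparison, to be the only place requiring genuine attention.
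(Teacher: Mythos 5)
Your proposal is correct and reconstructs precisely the argument the paper defers to (\cite[Lemma~30]{jin2021nonconvex}): the coupling reduces $\hat\xi_i$ to a scalar Gaussian times $v_{\min}$, so $\mathscr{P}_p(t)$ is a one-dimensional centered Gaussian with standard deviation $2\eta r\,\mathfrak{a}(t)$, and the two bounds are just the Gaussian upper tail (using $\mathfrak{a}(t)\le\mathfrak{b}(t)$) and anti-concentration at $t=\Gamma$ (using $\mathfrak{a}(\Gamma)\ge\mathfrak{b}(\Gamma)/\sqrt{2}$, which holds since $\eta\gamma\le1$ and $(1+\eta\gamma)^{2\Gamma}\ge 4$ given $\eta\gamma\Gamma\ge\mu\ge1$). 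Your handling of the $\mathfrak{a}$--$\mathfrak{b}$ two-sided comparison is exactly the point that needs care, and you identify it correctly.
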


    The proof follows from standard Gaussian concentration and is omitted here; see~\cite[Lemma 30]{jin2021nonconvex}.

\begin{lemma}\label{le-error2}
For all $t \ge 0$, conditioned on $\mathcal{E}$, we have:
    \begin{equation}\label{eq-error2}
        \mathbb{P}\left[\|\mathscr{P}_h(t)+\mathscr{P}_{sg}(t)\|\le\frac{\mathfrak{b}(t)\eta r}{20}\Bigg \vert \mathcal{E}\right]\ge 1-6d\Gamma\log\left(\frac{\mathcal{R}}{\eta r}\right)e^{-\iota}
    \end{equation}
\end{lemma}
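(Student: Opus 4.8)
\textbf{Proof plan for Lemma~\ref{le-error2}.}
The plan is to bound the two terms $\mathscr{P}_h(t)$ and $\mathscr{P}_{sg}(t)$ separately and then combine them by a union bound, working throughout conditioned on the localization event $\mathcal{E}$. For $\mathscr{P}_{sg}(t) = -\eta\sum_{i=1}^{t}(\mathbf{I}_d-\eta\mathcal{H})^{t-i}\hat{\zeta}_i$, the key observation is that $\hat{\zeta}_i = \zeta_i - \zeta_i^\prime$ forms a martingale difference sequence with respect to the natural filtration, and since each $\zeta_i \sim \operatorname{nSG}(\sigma)$, the difference $\hat{\zeta}_i$ is $\operatorname{nSG}(O(\sigma))$. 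Crucially, $\sigma = O(\sqrt{C}s\mu^2\psi/\cdots)$ — more precisely, from the notation~\eqref{eq-notation} we have $\chi = 4\sqrt{C}s\mu^2\psi$ and the relation between $\sigma$ and $r$ is such that $\sigma$ is comparable to $r$ up to the $\sqrt{d}$-type factors; I would use the operator-norm bound $\|(\mathbf{I}_d-\eta\mathcal{H})^{t-i}\| \le (1+\eta\gamma)^{t-i}$ on the subspace where $\mathcal{H}$ has its most negative eigenvalue, so that applying the Hoeffding-type inequality for norm-sub-Gaussian martingales (Lemma~\ref{le-hoeffding}) or its varying-variance refinement (Lemma~\ref{le-concentration-varying-variance}) gives $\|\mathscr{P}_{sg}(t)\| \le C\eta\sigma\,\mathfrak{a}(t)\sqrt{\iota} \le C\eta\sigma\,\mathfrak{b}(t)\sqrt{\iota}$ with probability $1 - 2d\,e^{-\iota}$ (or with the extra $\log(\mathcal{R}/\eta r)$ factor if the varying-variance version is needed). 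The parameter choices in~\eqref{eq-notation} — in particular $\mu \ge 1$ and $\chi$ being a large multiple of $\psi \ge \sigma/\cdots$ — are calibrated precisely so that $C\sigma\sqrt{\iota} \le r/40$, which delivers the required $\frac{\mathfrak{b}(t)\eta r}{40}$ bound on this term.

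For $\mathscr{P}_h(t) = -\eta\sum_{i=1}^{t}(\mathbf{I}_d-\eta\mathcal{H})^{t-i}\Delta_{i-1}\hat{x}_{i-1}$, the approach is an inductive (bootstrapping) argument on $t$. Conditioned on $\mathcal{E}$, both $x_{i-1}$ and $x_{i-1}^\prime$ lie within $\mathcal{R}$ of $\tilde{x}$, so by $\rho$-Hessian-Lipschitzness (Assumption~\ref{assump-loss}(iii)) we get $\|\Delta_{i-1}\|_{\mathrm{op}} \le \rho\mathcal{R}$ (the integral of the Hessian along the segment between $x_{i-1}$ and $x_{i-1}^\prime$ deviates from $\mathcal{H}=\nabla^2F(\tilde{x})$ by at most $\rho$ times the distance from $\tilde{x}$, which is at most $\mathcal{R}$). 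Substituting the value $\mathcal{R} = \frac{1}{\iota^{1.5}}\sqrt{\alpha/\rho}$ gives $\|\Delta_{i-1}\|_{\mathrm{op}} \le \sqrt{\rho\alpha}/\iota^{1.5} = \gamma$-scale $/\iota^{1.5}$ — i.e., the perturbation to the curvature is a $1/\iota^{1.5}$-fraction of the escape rate. Then I would set up the induction hypothesis that $\|\mathscr{P}_h(i)\| \le \frac{\mathfrak{b}(i)\eta r}{40}$ (combined with $\|\mathscr{P}_{sg}(i)\|$ and $\|\mathscr{P}_p(i)\|$ bounds) so that $\|\hat{x}_{i-1}\| \le O(\mathfrak{b}(i-1)\eta r)$ for all $i-1 < t$; plugging this into the definition of $\mathscr{P}_h(t)$ and using $\sum_{i=1}^{t}(1+\eta\gamma)^{t-i}\mathfrak{b}(i-1) \le \mathfrak{b}(t)\cdot\frac{t}{1+\eta\gamma}$ together with $\Gamma\eta\gamma = O(\iota)$ (from $\Gamma = \iota/(s\eta\sqrt{\rho\alpha})$ and $\gamma \ge \sqrt{\rho\alpha}$) shows that the geometric factors and the $\rho\mathcal{R}/\gamma = 1/\iota^{1.5}$ factor together beat the $t \le \Gamma$ growth by a factor of $1/\sqrt{\iota}$, closing the induction with room to spare and yielding $\|\mathscr{P}_h(t)\| \le \frac{\mathfrak{b}(t)\eta r}{40}$.

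The final step is the union bound: combining $\|\mathscr{P}_h(t)\| \le \frac{\mathfrak{b}(t)\eta r}{40}$ and $\|\mathscr{P}_{sg}(t)\| \le \frac{\mathfrak{b}(t)\eta r}{40}$ by the triangle inequality gives $\|\mathscr{P}_h(t)+\mathscr{P}_{sg}(t)\| \le \frac{\mathfrak{b}(t)\eta r}{20}$, and the failure probability accumulates to $6d\Gamma\log(\mathcal{R}/\eta r)e^{-\iota}$ — the factor $\Gamma$ from a union bound over all $t \le \Gamma$ in the induction, the factor $d$ from the dimension in the norm-sub-Gaussian concentration inequalities, the factor $\log(\mathcal{R}/\eta r)$ from the varying-variance concentration lemma (Lemma~\ref{le-concentration-varying-variance}) applied because the $\hat{\zeta}_i$ and $\Delta_{i-1}\hat{x}_{i-1}$ terms have data-dependent (random) variance proxies, and the constant $6$ absorbing the three or so separate concentration events. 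I expect the main obstacle to be the inductive control of $\mathscr{P}_h(t)$: one must be careful that the induction hypothesis is stated for $\|\hat{x}_{i-1}\|$ (not just $\|\mathscr{P}_h(i-1)\|$), that the bound on $\|\hat{x}_{i-1}\|$ used inside the sum already incorporates the dominant perturbation term $\mathscr{P}_p$ (whose size is $\Theta(\mathfrak{b}(t)\eta r)$ by Lemma~\ref{le-error1}), and that the constants are tracked so the final coefficient is genuinely below $1/20$ rather than merely $O(1)$; additionally, verifying that the varying-variance concentration lemma applies cleanly here — i.e., identifying the correct upper bound $B$ on the cumulative variance proxy so that the "bad" branch $\sum \zeta_i^2 \ge B$ is ruled out by $\mathcal{E}$ — requires some care.
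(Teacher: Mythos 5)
There is a genuine gap in your treatment of $\mathscr{P}_{sg}(t)$, and it is exactly the point that makes the coupling construction work. You claim that since each $\zeta_i\sim\operatorname{nSG}(\sigma)$, the difference $\hat\zeta_i=\zeta_i-\zeta_i^\prime$ is $\operatorname{nSG}(O(\sigma))$, and then you appeal to the parameterization in~\eqref{eq-notation} to argue $C\sigma\sqrt{\iota}\le r/40$. Both steps fail. The coupling is defined so that the two sequences share the \emph{same} randomness for $\zeta_t$ (same mini-batch); since the loss gradients are $M$-smooth, this makes $\hat\zeta_t\,|\,\mathcal{F}_{t-1}\sim\operatorname{nSG}\bigl(M\|\hat x_{t-1}\|\bigr)$ — a variance proxy proportional to the coupling distance, not to $\sigma$. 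If you instead treat $\hat\zeta_i$ as generic $\operatorname{nSG}(O(\sigma))$ noise, there is no hope of beating $r/40$: recall $\psi=\sqrt{\sigma^2+r^2 d}$, so $\sigma$ is frequently of the same order as $\psi$ and can exceed $r$ by a factor of $\sqrt{d}$ or more. Nothing in~\eqref{eq-notation} calibrates $\sigma$ against $r$; your claim that it does is false, and so the inequality $\|\mathscr{P}_{sg}(t)\|\le\frac{\mathfrak{b}(t)\eta r}{40}$ does not follow from your argument.

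Once you adopt the correct variance proxy $M\|\hat x_{t-1}\|$, the term $\mathscr{P}_{sg}$ can no longer be bounded separately from the induction — its variance depends on $\|\hat x_i\|$, which in turn depends on all three of $\mathscr{P}_h(i)$, $\mathscr{P}_{sg}(i)$, $\mathscr{P}_p(i)$. The paper therefore runs a single joint induction whose hypothesis simultaneously controls $\|\mathscr{P}_h(i)+\mathscr{P}_{sg}(i)\|\le\frac{\mathfrak{b}(i)\eta r}{20}$ and $\|\mathscr{P}_p(i)\|\le c\mathfrak{b}(i)\eta r\sqrt{\iota}$ for all $i\le t$, yielding $\|\hat x_i\|\le 2c\,\mathfrak{b}(i)\eta r\sqrt{\iota}$, and then uses Lemma~\ref{le-concentration-varying-variance} with $B=[\mathfrak{a}(t)]^2\eta^2 M^2\mathcal{R}^2$ and $b=[\mathfrak{a}(t)]^2\eta^2 M^2\eta^2 r^2$ — the choice of $B$ being the place where the localization event $\mathcal{E}$ is used to rule out the bad branch. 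This is where the factor $\log(\mathcal{R}/\eta r)$ in the failure probability comes from; your proposal reaches for that lemma but attaches it to the wrong variance proxy. Your handling of $\mathscr{P}_h$ (bounding $\|\Delta_{i-1}\|\le\rho\mathcal{R}$ under $\mathcal{E}$ and summing the geometric weights) and your overall scaffolding of induction plus union bound are on the right track, but the $\mathscr{P}_{sg}$ argument as you describe it cannot be repaired without the smoothness-driven variance proxy, so the proof does not close as written.
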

    
\begin{proof}[Proof of Lemma~\ref{le-error2}]
    We prove the following strengthened claim for any $t\le\Gamma$ by induction:
    \begin{equation}
        \mathbb{P}\left[\forall i\le t: \|\mathscr{P}_h(i)+\mathscr{P}_{sg}(i)\|\le\frac{\mathfrak{b}(i)\eta r}{20}, \|\mathscr{P}_p(i)\|\le c\mathfrak{b}(i)\eta r\sqrt{\iota}\Bigg\vert \mathcal{E}\right]\le 1-6dt\log\left(\frac{\mathcal{R}}{\eta r}\right)e^{-\iota}.
    \end{equation}
    
    For the base case of $t=0$, the claim holds trivially as $\mathscr{P}_{h}(0)=\mathscr{P}_{sg}(0)=0$. Suppose the claim holds for a step $t<\Gamma$, we then forward prove that the claim also holds for step $t+1\le\Gamma$. 
    Since for $\forall i\le t$, $\|\mathscr{P}_p(i)\|\le c\mathfrak{b}(i)\eta r\sqrt{\iota}$, we have 
    \begin{flalign}
        \|\hat{x}_i\| 
        &\le \|\mathscr{P}_h(i)+\mathscr{P}_{sg}(i)\|+\|\mathscr{P}_{p}(i)\|\\
        &\le \frac{\mathfrak{b}(i)\eta r}{20}+c\mathfrak{b}(i)\eta r\cdot\sqrt{\iota} \\
        &\le 2c\mathfrak{b}(i)\eta r\cdot\sqrt{\iota}.
    \end{flalign}
    Moreover, due to assumption~\eqref{eq-stuck} and the Hessian Lipschitz property, we have 
    \begin{flalign}
        \|\Delta_i\|
        &=\int_0^1 \nabla^2F(y\cdot x_i+(1-y)\cdot x_i^\prime)\,\diff y\\
        &\le \rho\max\{\|x_i-\tilde{x}\|, \|x_i^\prime-\tilde{x}\|\} \le \rho\mathcal{R}.
    \end{flalign}
    With the above upper bounds on $\|\hat{x}_i\|$ and $\|\Delta_i\|$ for $i\le t$, we immediately get for case $t+1$ from the definition of $\mathscr{P}_h(\cdot)$ in \eqref{eq-dynamics} that
    \begin{flalign}
        \|\mathscr{P}_h(t+1)\|
        &\le \eta\rho\mathcal{R}\sum_{i=1}^{t+1}(1+\eta\gamma)^{t+1-i}\left(2c\mathfrak{b}(i)\eta r\sqrt{\iota}\right)\\
        &\le2\eta\rho\mathcal{R}\Gamma c\mathfrak{b}(t+1)\eta r\sqrt{\iota}\le\frac{\mathfrak{b}(t+1)\eta r}{40},
    \end{flalign}
    where the last inequality follows from $2c\eta\rho\mathcal{R}\Gamma=\frac{2c}{s}\le\frac{1}{40}$ for large enough $s$ such that $s\ge80c$.

    Note that $\hat{\zeta}_t|\mathcal{F}_{t-1}\sim\operatorname{nSG}(M\|\hat{x}_t\|)$, by applying Lemma~\ref{le-concentration-varying-variance} with $B=[\mathfrak{a}(t)]^2\eta^2M^2\mathcal{R}^2$ and $b=[\mathfrak{a}(t)]^2\eta^2M^2\eta^2r^2$ therein, we know that, with probability at least $1-4d\log\left(\frac{\mathcal{R}}{\eta r}\right)e^{-\iota}$, we have
    \begin{equation}
        \|\mathscr{P}_{sg}(t+1)\|\le 2c\eta M\sqrt{\Gamma}\mathfrak{b}(t)\eta r\sqrt{\iota}.
    \end{equation}
    For large enough $s$ such that $s\ge(80c)^2$, we have $c\eta M\sqrt{\Gamma\iota}\le\frac{2c}{\sqrt{s}}\le\frac{1}{40}$. Thus,
    \begin{equation}
        \|\mathscr{P}_{sg}(t+1)\|\le c\eta M\sqrt{\Gamma}\mathfrak{b}(t)\eta r\sqrt{\iota}\le\frac{\mathfrak{b}(t)\eta r}{40}.
    \end{equation}
    By Lemma~\ref{le-error1}, we know that, for case $t+1$, with probability at least $1-2 e^{-\iota}$, we have
    \begin{equation}
        \|\mathscr{P}_p(t+1)\|\le c\mathfrak{b}(t+1)\eta r\sqrt{\iota}    
    \end{equation}
    By the union bound, with probability at least $1-\left(6dt\log\left(\frac{\mathcal{R}}{\eta r}\right)e^{-\iota}+4d\log\left(\frac{\mathcal{R}}{\eta r}\right)e^{-\iota}+2e^{-\iota}\right)\ge 1-6d(t+1)\log\left(\frac{\mathcal{R}}{\eta r}\right)e^{-\iota}$,
    \begin{equation}
        \|\mathscr{P}_h(t+1)+\mathscr{P}_{sg}(t+1)\|\le\frac{\mathfrak{b}(t)\eta r}{20}\le\frac{\mathfrak{b}(t+1)\eta r}{20}, \qquad \|\mathscr{P}_p(t+1)\|\le c\mathfrak{b}(t+1)\eta r\sqrt{\iota},
    \end{equation}
    which concludes the proof.
\end{proof}
    
    Now we complete the proof of Lemma~\ref{le-escape}. Choose $\iota$ large enough such that
    \begin{equation}
        \iota \ge \log\left(36 d \Gamma \log\left( \frac{\mathcal{R}}{\eta r} \right)\right),
    \end{equation}
    which is promised by $\mu\ge\frac{1}{s}\log\left(\frac{9d}{C^\frac{1}{4}\eta\sqrt{s\rho\psi}}\log\left(\frac{4C^\frac{1}{4}}{s\eta r}\sqrt{\frac{\psi}{\rho}}\right)\right)$ for sufficiently large numerical constant $s$. Then we have:
    \begin{equation}
        6 d \Gamma \log\left( \frac{\mathcal{R}}{\eta r} \right) e^{-\iota} \le \frac{2}{9}.
    \end{equation}
    From Lemma~\ref{le-error1}, we have:
    \begin{equation}
        \mathbb{P}\left[\|\mathscr{P}_p(\Gamma)\| \ge \frac{\mathfrak{b}(\Gamma) \eta r}{10} \right] \ge \frac{2}{3},
    \end{equation}
    and from Lemma~\ref{le-error2},
    \begin{equation}
        \mathbb{P}\left[\|\mathscr{P}_h(\Gamma) + \mathscr{P}_{sg}(\Gamma)\| \le \frac{\mathfrak{b}(\Gamma) \eta r}{20} \right] \ge \frac{3}{4}\cdot\left(1-6d\Gamma\log\left(\frac{\mathcal{R}}{\eta r}\right)e^{-\iota}\right)\ge\frac{7}{12}
    \end{equation}
    By the union bound, with probability at least $1 - \left(1 - \frac{2}{3}\right) - \left(1 - \frac{7}{12}\right) = \frac{1}{4}$, both events hold:
    \begin{equation}\label{eq-error3}
        \|\mathscr{P}_p(\Gamma)\| \ge \frac{\mathfrak{b}(\Gamma) \eta r}{10}, \quad \|\mathscr{P}_h(\Gamma) + \mathscr{P}_{sg}(\Gamma)\| \le \frac{\mathfrak{b}(\Gamma) \eta r}{20}.
    \end{equation}
    Therefore, using the triangle inequality:
    \begin{flalign}
        &\max\left\{\|x_{\Gamma}-\tilde{x}\|, \|x_{\Gamma}^\prime-\tilde{x}\|\right\}\\
        &\ge\frac{1}{2}\|\hat{x}_\Gamma\|\ge\frac{1}{2}\left[\|\mathscr{P}_p(\Gamma)\|-\|\mathscr{P}_h(\Gamma)+\mathscr{P}_{sg}(\Gamma)\|\right]\ge\frac{\mathfrak{b}(\Gamma)\eta r}{40}=\frac{(1+\eta \gamma)^\Gamma\sqrt{\eta r}}{40\sqrt{2}}\\
        &\ge\frac{(1+\eta\sqrt{\rho\alpha})^{\Gamma}\sqrt{\eta r}}{40\sqrt{2}}\ge\frac{2^{\eta\sqrt{\rho\alpha}\Gamma}\sqrt{\eta r}}{40\sqrt{2}}=\frac{2^{\frac{\iota}{s}}\sqrt{\eta r}}{40\sqrt{2}}=\frac{2^\mu\sqrt{\eta r}}{40\sqrt{2}}>\mathcal{R},
    \end{flalign}
    where the second last inequality is due to the fact $1+a>2^a, \forall a\in(0,1]$ and $\eta\sqrt{\rho\alpha}\le\frac{1}{\iota^2}\le1$, and the last inequality is because $\mu>\log\left(\frac{160\sqrt{2}C^\frac{1}{4}}{s\sqrt{\eta r}}\sqrt{\frac{\psi}{\rho}}\right)$. 
    
    The above means that the localization event $\mathcal{E}$ fails with probability at least $1/4$, i.e., $\mathbb{P}(\mathcal{E})<\frac{3}{4}$, which contradicts with our assumption \eqref{eq-stuck}. Therefore, the assumption \eqref{eq-stuck} should be false, that is, with probability at least $\frac{1}{4}$, $\exists t\le\Gamma, \max\{\|x_t-\tilde{x}\|, \|x_t^\prime-\tilde{x}\|\}\ge\mathcal{R}$, completing the proof.
\end{proof}

\subsection{Proof of Lemma~\ref{le-repeat}}\label{sec-le-repeat}

\begin{proof}[Proof of Lemma~\ref{le-repeat}]
    The failure probability after $Q$ independent repetitions is at most $(7/8)^Q$. Setting $Q = \frac{26}{5} \log (1/\omega_0)$ yields $(7/8)^Q\le\omega_0$, completing the proof.
\end{proof}

\subsection{Proof of Lemma~\ref{le-descent}}\label{sec-le-descent}

\begin{proof}[Proof of Lemma~\ref{le-descent}]
    For any $t \ge 1$, by $M$-smoothness of $F$, we have:   
    \begin{flalign}
        F(x_t)-F(x_{t-1})
        &\le\langle\nabla F(x_{t-1}), x_t-x_{t-1}\rangle+\frac{M}{2}\|x_t-x_{t-1}\|^2\\
        &\le-\eta\langle\nabla F(x_{t-1}), \hat{g}_{t-1}\rangle+\frac{M}{2}\eta^2\|\hat{g}_{t-1}\|^2\\
        &\le-\eta\langle \nabla F(x_{t-1}), \hat{g}_{t-1}\rangle+\frac{\eta}{2}\|\hat{g}_{t-1}\|^2\\
        &\le\frac{\eta}{2}\|\nu_t\|^2-\frac{\eta}{2}\|\nabla F(x_{t-1})\|^2-\frac{\eta}{2}\|\hat{g}_{t-1}\|^2+\frac{\eta}{2}\|\hat{g}_{t-1}\|^2\\
        &=-\frac{\eta}{2}\|\nabla F(x_{t-1})\|^2+\frac{\eta}{2}\|\nu_t\|^2.
    \end{flalign}
    Summing from $t_0 + 1$ to $t_0 + t$, we obtain:
    \begin{equation}
        F(x_{t_0+t})-F(x_{t_0})\le -\frac{\eta}{2}\sum_{i=0}^{t-1}\|\nabla F(x_{t_0+i})\|^2+\frac{\eta}{2}\sum_{i=1}^{t}\|\nu_{t_0+i}\|^2
    \end{equation}
\end{proof}

\subsection{Proof of Corollary~\ref{co-descent}}\label{sec-co-descent}

\begin{proof}[Proof of Corollary~\ref{co-descent}]
    Note that
    \begin{equation}
        \frac{\eta}{2}\sum_{i=1}^{t}\|\nu_{t_0+i}\|^2=\frac{\eta}{2}\sum_{i=1}^{t}\|\zeta_{t_0+i}+\xi_{t_0+i}\|^2\le\eta\sum_{i=1}^{t}(\|\zeta_{t_0+i}\|^2+\|\xi_{t_0+i}\|^2)
    \end{equation}
    By Lemma~\ref{le-concentration-norm-sum}, since $\zeta_i \sim \operatorname{nSG}(\sigma)$, with probability at least $1 - e^{-\iota}$:
    \begin{equation}
        \sum_{i=1}^{t}\|\zeta_{t_0+i}\|^2\le C\cdot\sigma^2(t+\iota).
    \end{equation}
    Using Lemma~\ref{le-gauss-to-normSG}, each $\xi_i \sim \operatorname{nSG}(2\sqrt{2} r \sqrt{d})$, and applying Lemma~\ref{le-concentration-norm-sum} again, with probability at least $1-e^{-\iota}$:
    \begin{equation}
        \sum_{i=1}^{t}\|\xi_{t_0+i}\|^2\le 8C\cdot r^2d(t+\iota).
    \end{equation}
    By the union bound, both bounds hold with probability at least $1 - 2e^{-\iota}$.
\end{proof}

\subsection{Proof of Lemma~\ref{le-decrease}}\label{sec-le-decrease}

\begin{proof}[Proof of Lemma~\ref{le-decrease}]
    We begin with:
    \begin{flalign}
        \|x_{t_0 + \tau} - x_{t_0}\|^2
        &= \eta^2 \left\| \sum_{t = 1}^\tau \nabla F(x_{t_0 + t - 1}) + \nu_{t_0 + t} \right\|^2 \\
        &\le 2 \eta^2 \tau \sum_{t=1}^\tau \left( \|\nabla F(x_{t_0 + t - 1})\|^2 + \|\nu_{t_0 + t}\|^2 \right).
    \end{flalign}
    Following the same argument in the proof of corollary~\ref{co-descent}, with probability at least $1-2e^{-\iota}$,
    \begin{equation}
        \sum_{t=1}^\tau \|\nu_{t_0 + t}\|^2 \le c \cdot \psi^2 (\tau + \iota),
    \end{equation}
    From corollary~\ref{co-descent}, with the same probability of $1-2e^{-\iota}$,
    \begin{equation}
        \sum_{t=1}^{\tau}\|\nabla F(x_{t_0+t-1})\|^2\le\frac{2}{\eta}\left[F(x_{t_0})-F(x_{t_0+\tau})\right]+c\cdot\psi^2(\tau+\iota).
    \end{equation}
    Combining above results, we have, with probability at least $1-2e^{-\iota}$,
    \begin{equation}
        \|x_{t_0+\tau}-x_{t_0}\|^2\le 4\eta \tau[F(x_{t_0})-F(x_{t_0+\tau})]+4c\cdot\eta^2\tau\psi^2(\tau+\iota).
    \end{equation}
    Re-arranging the terms above, we obtain
    \begin{equation}
        F(x_{t_0+\tau})-F(x_{t_0})\le-\frac{1}{4\eta \tau}\|x_{t_0+\tau}-x_{t_0}\|^2+c\cdot\eta\psi^2(\tau+\iota).
    \end{equation}
    According to the criterion for successful escape, we have $\|x_{t_0+\tau}-x_{t_0}\|\ge\mathcal{R}$. Then
    \begin{flalign}
        F(x_{t_0+\tau})-F(x_{t_0})
        &\le-\frac{1}{4\eta \tau}\|x_{t_0+\tau}-x_{t_0}\|^2+c\cdot\eta\psi^2(\tau+\iota)\\
        &\le-\frac{\mathcal{R}^2}{4\eta\Gamma}+c\cdot\eta\psi^2(\Gamma+\iota)\\
        &\le-\frac{s}{4\iota^3}\sqrt{\frac{\alpha^3}{\rho}}+\frac{2c\cdot\psi^2\iota}{s\sqrt{\rho\alpha}}\\
        &\le-\frac{s}{8\iota^3}\sqrt{\frac{\alpha^3}{\rho}}=\Phi,
    \end{flalign}
    where the second to last inequality is from the fact that $s\eta\sqrt{\rho\alpha}=\frac{\rho\alpha}{M^2s\mu^2}<1$, and the last inequality follows from $\alpha\ge 4\sqrt{C}s\mu^2\psi$.
\end{proof}

\subsection{Proof of Lemma~\ref{le-grad-est-error}}\label{sec-le-grad-est-error}

\begin{proof}[Proof of Lemma~\ref{le-grad-est-error}]
    By Corollary~\ref{co-nsg-composition}, for all $t$, $\nu_t \sim \operatorname{nSG}(C \sqrt{\sigma^2 + r^2 d})$. Since $\mathbb{E}[\nu_t] = 0$, by Definition~\ref{def-normsg}, with probability at least $1 - \frac{\omega}{2T}$:
    \begin{equation}
        \|\nu_t\| \le \sqrt{2} C \psi \sqrt{ \log \frac{4T}{\omega} } \le \chi.
    \end{equation}
    Applying a union bound over $t \in [T]$ gives the desired result: with probability at least $1 - \omega/2$, $\|\hat{g}_t - \nabla F(x_{t-1})\| \le \chi$ for all $t$.
\end{proof}

\subsection{Proof of Lemma~\ref{le-steps}}\label{sec-le-steps}

\begin{proof}[Proof of Lemma~\ref{le-steps}]
    By Lemma~\ref{le-grad-est-error}, with probability at least $1 - \omega/2$, the gradient estimation error satisfies $\|\hat{g}_t - \nabla F(x_{t-1})\| \le \chi$ for all $t \in [T]$. We analyze two cases based on whether the algorithm is in the escape phase.
    
    \textbf{Case 1: In escape phase.} 
    When $\|\hat{g}_t\| \le 3\chi$, the escape process is triggered, implying $\|\nabla F(x_{t-1})\| \le \alpha = 4\chi$. The average function decrease per step during a successful escape is at least:
    \begin{equation}
        \frac{\Phi}{\Gamma} = \frac{s^2 \alpha^2 \eta}{8 \iota^4} = \frac{2\chi^2 \eta}{s^2 \mu^4}.
    \end{equation}
    
    \textbf{Case 2: Outside escape phase.}  
    When $\|\hat{g}_t\| > 3\chi$, we have $\|\nabla F(x_{t-1})\| \ge 2\chi$. Each PSGD step yields at least:
    \begin{equation}
        \frac{\eta}{2} (2\chi)^2 = 2\chi^2 \eta > \frac{2\chi^2 \eta}{s^2 \mu^4}.
    \end{equation}
    
    Thus, in either case, the function value decreases by at least $2\chi^2\eta / (s^2\mu^4)$ per step. Denoting $U := F_0 - F^*$, the number of effective descent steps is bounded by:
    \begin{equation}
        T_{\text{effective}} := \frac{U s^2 \mu^4}{2\chi^2 \eta}.
    \end{equation}
    Next, consider the number of $\alpha$-strict saddle points encountered. Each successful escape yields function decrease of at least $\Phi$, so the total number of such escape phases is at most:
    \begin{equation}
        N_{\text{saddle}} := \frac{U}{\Phi} = \frac{8\iota^3 U}{s} \sqrt{\frac{\rho}{\chi^3}}.
    \end{equation}
    By Corollary~\ref{co-escape}, each \texttt{$\Gamma$-descent} succeeds with probability at least $1/8$, and we boost this to $1 - \omega/2$ via the $Q$ independent repetitions in every escape procedure. By Lemma~\ref{le-repeat} with failure probability $\omega_0 = \frac{\omega}{2 N_{\text{saddle}}}$, we require:
    \begin{equation}
        Q = \frac{26}{5} \log \left( \frac{16 \iota^3 U}{s \omega} \sqrt{ \frac{\rho}{\chi^3} } \right).
    \end{equation}
    Hence, the total number of PSGD steps (including all \texttt{$\Gamma$-descent} repetitions) is at most:
    \begin{equation}
        T \le T_{\text{effective}} \cdot Q = \frac{13 U s^2 \mu^4}{5 \chi^2 \eta} \log \left( \frac{16 \iota^3 U}{s \omega} \sqrt{ \frac{\rho}{\chi^3} } \right) = \tilde{O}\left( \frac{U}{\eta \chi^2} \right).
    \end{equation}
\end{proof}

\section{Omitted Proofs in Section~\ref{sec-DP}}

\subsection{Proof of Lemma~\ref{le-adpt-spider-error}}\label{sec-le-adpt-spider-error}

\begin{proof}[Proof of Lemma~\ref{le-adpt-spider-error}]
    Let $\tau(t)$ denote the most recent iteration (up to $t$) at which oracle $\mathcal{O}_1$ was used. 
    
    \textbf{Case 1:} If $t=\tau(t)$, then
    \begin{equation}
        \hat{g}_t = \mathcal{O}_1(x_{t-1}, \mathcal{B}_t) + \xi_t.
    \end{equation}
    Let $\zeta_t := \mathcal{O}_1(x_{t-1}, \mathcal{B}_t) - \nabla F(x_{t-1})$, which is a zero-mean estimator with norm-subGaussian noise due to the $G$-Lipschitz condition:
    \begin{equation}
        \zeta_t \sim \operatorname{nSG}\left(\frac{G\sqrt{\log d}}{\sqrt{b_1}}\right).
    \end{equation}
    The noise term $\xi_t$ is drawn from a Gaussian distribution:
    \begin{equation}
        \xi_t \sim \mathcal{N}\left(0, c_1 \frac{G^2 \log(1/\delta)}{b_1^2 \epsilon^2} \mathbf{I}_d\right).
    \end{equation}
    Thus, in this case, the oracle satisfies condition~\eqref{eq-oracle} with the desired bounds.

    \textbf{Case 2:} If $t > \tau(t)$, then
    \begin{equation}
        \hat{g}_t = \mathcal{O}_1(x_{\tau(t)-1}, \mathcal{B}_{\tau(t)}) + \xi_{\tau(t)} + \sum_{i=\tau(t)+1}^{t} \left( \mathcal{O}_2(x_{i-1}, x_{i-2}, \mathcal{B}_i) + \xi_i \right).
    \end{equation}
    Let $\zeta_{\tau(t)} := \mathcal{O}_1(x_{\tau(t)-1}, \mathcal{B}_{\tau(t)}) - \nabla F(x_{\tau(t)-1})$ and define 
    \begin{equation}
        \zeta^\prime_i := \mathcal{O}_2(x_{i-1}, x_{i-2}, \mathcal{B}_i) - \left(\nabla F(x_{i-1}) - \nabla F(x_{i-2})\right).
    \end{equation}
    Then
    \begin{equation}
        \hat{g}_t - \nabla F(x_{t-1}) = \zeta_{\tau(t)} + \sum_{i=\tau(t)+1}^t \zeta^\prime_i + \xi_{\tau(t)} + \sum_{i=\tau(t)+1}^t \xi_i.
    \end{equation}
    By the $M$-smoothness assumption, we have
    \begin{equation}
        \zeta^\prime_i \sim \operatorname{nSG}\left( \frac{M \|x_{i-1} - x_{i-2}\| \sqrt{\log d}}{\sqrt{b_2}} \right),
    \end{equation}
    and the privacy noise is drawn from
    \begin{equation}
        \xi_i \sim \mathcal{N}\left(0, c_2 \frac{M^2 \log(1/\delta)}{b_2^2 \epsilon^2} \|x_{i-1} - x_{i-2}\|^2 \mathbf{I}_d\right).
    \end{equation}
    Since the algorithm ensures $\textsf{drift}_t := \sum_{i=\tau(t)+1}^t \|x_{i-1} - x_{i-2}\|^2 \le \kappa$, we can bound the noise as follows:
    
    -- From Corollary~\ref{co-nsg-composition}, the total norm-subGaussian parameter becomes:
    \begin{flalign}
        \sigma
        &\le O\left(\sqrt{\left[\left(\frac{G\sqrt{\log d}}{\sqrt{b_1}}\right)^2+\sum_{i=\tau(t)+1}^t\left(\frac{M\|x_{i-1}-x_{i-2}\|\sqrt{\log d}}{\sqrt{b_2}}\right)^2\right]\cdot\log d}\right)\\
        &\le O\left(\sqrt{\frac{G^2\log^2 d}{b_1}+\frac{M^2\log^2 d}{b_2}\kappa}\right).
    \end{flalign}
    -- By the property of Gaussian, the total privacy noise magnitude satisfies:
    \begin{flalign}
        r
        &\le O \left(\sqrt{\frac{G^2\log\frac{1}{\delta}}{b_1^2\epsilon^2}+\sum_{i=\tau(t)+1}^t\left(\frac{M^2\log\frac{1}{\delta}}{b_2^2\epsilon^2}\|x_{t-1}-x_{t-2}\|^2\right)}\right)\\
        &\le O\left(\sqrt{\frac{G^2\log\frac{1}{\delta}}{b_1^2\epsilon^2}+\frac{M^2\log\frac{1}{\delta}}{b_2^2\epsilon^2}\kappa}\right).
    \end{flalign}
\end{proof}

\subsection{Proof of Lemma~\ref{le-driftnum}}\label{sec-le-driftnum}

\begin{proof}[Proof of Lemma~\ref{le-driftnum}]
    By the $M$-smoothness assumption and using the fact $\eta \le \frac{1}{M}$, we apply the standard descent lemma:
    \begin{flalign*}
        F(x_t)-F(x_{t-1})
        &\le \langle\nabla F(x_{t-1}), x_t-x_{t-1} \rangle+\frac{M}{2}\|x_t-x_{t-1}\|^2\\
        &\le \langle\nabla F(x_{t-1})-\hat{g}_t, -\eta \cdot\hat{g}_t \rangle -\eta\|\hat{g}_t\|^2 +\frac{\eta}{2}\|\hat{g}_t\|^2\\
        &\le \eta\|\nabla F(x_{t-1})-\hat{g}_t\|\|\hat{g}_t\|_2-\frac{\eta}{2}\|\hat{g}_t\|^2.
    \end{flalign*}
    By Lemma~\ref{le-grad-est-error}, with probability at least $1 - \omega/2$, we have $\|\nabla F(x_{t-1}) - \hat{g}_t\| \le \chi$ for all $t$.

    Now consider two cases:

    \textbf{Case 1:} If $\|\nabla F(x_{t-1})\| \ge 4\chi$, then
    \begin{equation}
        \|\hat{g}_t\| \ge \|\nabla F(x_{t-1})\| - \chi \ge 3\chi \ge 3 \|\nabla F(x_{t-1}) - \hat{g}_t\|,
    \end{equation}
    yielding
    \begin{equation}
        F(x_t) - F(x_{t-1}) \le -\frac{\eta}{6} \|\hat{g}_t\|^2.
    \end{equation}
    
    \textbf{Case 2:} If $\|\nabla F(x_{t-1})\| \le 4\chi$, then $\|\hat{g}_t\| \le 5\chi$, and thus
    \begin{equation}
        F(x_t) - F(x_{t-1}) \le 5 \eta \chi^2.
    \end{equation}
    Let $\mathcal{T} = \{t_1, t_2, \dots, t_{|\mathcal{T}|}\}$ denote the set of iterations where model drift exceeds $\kappa$. For each pair of successive drift resets:
    \begin{flalign}
        F(x_{t_{i+1}})-F(x_{t_i})
        & \le -\frac{1}{6\eta}\sum_{t=t_i+1}^{t_{i+1}}\eta^2\|\hat{g}_t\|_2^2+(t_{i+1}-t_i)5\eta\chi^2\\
        & \le -\frac{1}{6\eta}\operatorname{drift}_{t_{i+1}}+(t_{i+1}-t_i)5\eta\chi^2
        \le -\frac{1}{6\eta}\kappa+(t_{i+1}-t_i)5\eta\chi^2.
    \end{flalign}
    Summing over $i$, we obtain:
    \begin{equation*}
        F(x_{t_{|\mathcal{T}|}})-F(x_{t_1})\le -\frac{|\mathcal{T}|}{6\eta}\kappa+5T\eta\chi^2.
    \end{equation*}
    Since $F(\cdot)$ is upper bounded by $U$, we must have:
    \begin{equation}
        - U \le -\frac{|\mathcal{T}| \kappa}{6\eta} + 5T \eta \chi^2,
    \end{equation}
    which yields:
    \begin{equation*}
        |\mathcal{T}|\le O\left(\frac{U\eta}{\kappa}+\frac{T\eta^2 \chi^2}{\kappa}\right)=O\left(\frac{U\eta}{\kappa}\right),
    \end{equation*}
    using $T = O(U / (\eta \chi^2))$.
\end{proof}

\subsection{Proof of Theorem~\ref{th-adapt-spider}}\label{sec-th-adapt-spider}

\begin{proof}[Proof of Theorem~\ref{th-adapt-spider}]
    We first verify that the batch size settings $b_1$ and $b_2$ are feasible, i.e., the total number of data samples used remains $O(n)$. Recall from Lemma~\ref{le-driftnum} that the number of rounds where drift exceeds the threshold is bounded by $|\mathcal{T}| = O(U\eta/\kappa)$, and the total number of steps is $T = O(U/(\eta\chi^2))$. Then:
    \begin{equation}
        b_1 \cdot |\mathcal{T}| + b_2 \cdot (T - |\mathcal{T}|) \le b_1 \cdot |\mathcal{T}| + b_2 \cdot T \le O(n),
    \end{equation}
    under our settings of $b_1 = \frac{n\kappa}{2U\eta}$ and $b_2 = \frac{n\eta\chi^2}{2U}$. This confirms feasibility.

    Since each sample is used only once, the overall $(\epsilon, \delta)$-differential privacy guarantee follows directly from the Gaussian mechanism and the parallel composition theorem.
    
    We now derive the convergence error $\alpha$ via Theorem~\ref{th-framework}, which gives:
    \begin{equation}
        \alpha = O(\chi) = \tilde{O}(\psi) = \tilde{O}(\sqrt{\sigma^2 + r^2 d}),
    \end{equation}
    where from Lemma~\ref{le-adpt-spider-error}:
    \begin{equation}
        \sigma^2 \le \tilde{O}\left(\frac{G^2}{b_1} + \frac{M^2 \kappa}{b_2} \right), \quad
        r^2 \le \tilde{O}\left(\frac{G^2}{b_1^2 \epsilon^2} + \frac{M^2 \kappa}{b_2^2 \epsilon^2} \right).
    \end{equation}
    Substituting our settings $b_1 = \frac{n\kappa}{2U\eta}$ and $b_2 = \frac{n\eta\chi^2}{2U}$ into the expression, we get:
    \begin{flalign}
        \alpha &= \tilde{O}\left( \sqrt{ \frac{G^2 U \eta}{n \kappa} 
        + \frac{G^2 d U^2 \eta^2}{n^2 \epsilon^2 \kappa^2} 
        + \frac{M^2 U \kappa}{n \eta \chi^2} 
        + \frac{M^2 d U^2 \kappa}{n^2 \epsilon^2 \eta^2 \chi^4} } \right) \\
        &= \tilde{O}\left( \sqrt{ \frac{G^2 U \sqrt{\rho \alpha}}{M^2 n \kappa} 
        + \frac{G^2 d U^2 \rho \alpha}{M^4 n^2 \epsilon^2 \kappa^2}
        + \frac{M^4 U \kappa}{\sqrt{\rho} n \alpha^{5/2}}
        + \frac{M^6 d U^2 \kappa}{\rho n^2 \epsilon^2 \alpha^5} } \right).
    \end{flalign}
    To isolate $\alpha$, we take the largest among the resulting bounds:
    \begin{equation}
        \alpha = \tilde{O}\left( \max\left\{
        \left( \frac{G^2 U \sqrt{\rho}}{M^2 n \kappa} \right)^{2/3},
        \frac{G^2 d U^2 \rho}{M^4 n^2 \epsilon^2 \kappa^2},
        \left( \frac{M^4 U \kappa}{n \sqrt{\rho}} \right)^{2/9},
        \left( \frac{M^6 d U^2 \kappa}{\rho n^2 \epsilon^2} \right)^{1/7}
        \right\} \right).
    \end{equation}
    Now set:
    \begin{equation}
        \kappa = \max\left\{
        \frac{G^{3/2} U^{1/2} \rho^{1/2}}{M^{5/2} n^{1/2}},
        \frac{G^{14/15} d^{2/5} U^{4/5} \rho^{8/15}}{M^{34/15} (n\epsilon)^{4/5}}
        \right\}.
    \end{equation}
    Substituting this into the above expression of $\alpha$ yields:
    \begin{equation}
        \alpha = \tilde{O}\left(
        \left( \frac{G U M}{n} \right)^{1/3}
        + \frac{G^{2/15} U^{2/5} M^{8/15}}{\rho^{1/15}} \left( \frac{\sqrt{d}}{n\epsilon} \right)^{2/5}
        \right) = \tilde{O}\left( \frac{1}{n^{1/3}} + \left( \frac{\sqrt{d}}{n\epsilon} \right)^{2/5} \right).
    \end{equation}
\end{proof}

\section{Omitted Proofs in Section~\ref{sec-distributed}}

\subsection{Proof of Lemma~\ref{le-dis-adpt-spider-error}}\label{sec-le-dis-adpt-spider-error}

\begin{proof}[Proof of Lemma~\ref{le-dis-adpt-spider-error}]
    Let $\tau(t)$ denote the most recent iteration at which oracle $\mathcal{O}_1$ was queried before or at iteration $t$.

    \textbf{Case 1:} If $t=\tau(t)$, then the global estimator is
    \begin{equation}
        \hat{g}_t = \frac{1}{m} \sum_{j=1}^m \left( \mathcal{O}_1(x_{t-1}, \mathcal{B}_{j,t}) + \xi_{j,t} \right).
    \end{equation}
    Each $\mathcal{O}_1(x_{t-1}, \mathcal{B}_{j,t})$ is an unbiased estimate of $\nabla F_j(x_{t-1})$. Let $\zeta_{j,t} := \mathcal{O}_1(x_{t-1}, \mathcal{B}_{j,t}) - \nabla F_j(x_{t-1})$, and define $\zeta_t := \frac{1}{m} \sum_j \zeta_{j,t}$ and $\xi_t := \frac{1}{m} \sum_j \xi_{j,t}$. Then, 
    \begin{equation}
        \hat{g}_t - \nabla F(x_{t-1}) = \zeta_t + \xi_t.
    \end{equation}

    Since $f$ is $G$-Lipschitz, we have $\zeta_t \sim \operatorname{nSG}\left( \frac{G \sqrt{\log d}}{\sqrt{m b_1}} \right)$. Each $\xi_{j,t} \sim \mathcal{N}\left( 0, c_1 \frac{G^2 \log(1/\delta)}{b_1^2 \epsilon^2} \mathbf{I}_d \right)$, so their average satisfies:
    \begin{equation}
        \xi_t \sim \mathcal{N}\left( 0, c_1 \frac{G^2 \log(1/\delta)}{m b_1^2 \epsilon^2} \mathbf{I}_d \right).
    \end{equation}
    Thus, in this case, the oracle satisfies condition~\eqref{eq-oracle} with the desired bounds.

    \textbf{Case 2:} If $t > \tau(t)$, the global estimate is:
    \begin{equation}
        \hat{g}_t = \frac{1}{m} \sum_{j=1}^m \left( \mathcal{O}_1(x_{\tau(t)-1}, \mathcal{B}_{j,\tau(t)}) + \xi_{j,\tau(t)} + \sum_{i=\tau(t)+1}^{t} \left[ \mathcal{O}_2(x_{i-1}, x_{i-2}, \mathcal{B}_{j,i}) + \xi_{j,i} \right] \right).
    \end{equation}
    Let $\zeta_{j,\tau} := \mathcal{O}_1(x_{\tau(t)-1}, \mathcal{B}_{j,\tau(t)}) - \nabla F_j(x_{\tau(t)-1})$, and define:
    \begin{equation}
        \zeta^\prime_{j,i} := \mathcal{O}_2(x_{i-1}, x_{i-2}, \mathcal{B}_{j,i}) - \left[ \nabla F_j(x_{i-1}) - \nabla F_j(x_{i-2}) \right].
    \end{equation}
    Then,
    \begin{equation}
        \hat{g}_t - \nabla F(x_{t-1}) = \zeta_{\tau(t)} + \sum_{i=\tau(t)+1}^t \zeta_i^\prime + \xi_{\tau(t)} + \sum_{i=\tau(t)+1}^t \xi_i,
    \end{equation}
    where $\zeta_{\tau(t)} := \frac{1}{m} \sum_j \zeta_{j,\tau(t)}$, $\zeta_i^\prime := \frac{1}{m} \sum_j \zeta_{j,i}^\prime$, and similarly for $\xi_{\tau(t)}$ and $\xi_i$. By the $M$-smoothness of $f$, we have:
    \begin{equation}
        \zeta_i^\prime \sim \operatorname{nSG}\left( \frac{M \|x_{i-1} - x_{i-2}\| \sqrt{\log d}}{\sqrt{m b_2}} \right),
        \quad
        \xi_i \sim \mathcal{N}\left( 0, c_2 \frac{M^2 \log(1/\delta)}{m b_2^2 \epsilon^2} \|x_{i-1} - x_{i-2}\|^2 \mathbf{I}_d \right).
    \end{equation}
    Since the algorithm ensures that $\textsf{drift}_t := \sum_{i=\tau(t)+1}^t \|x_{i-1} - x_{i-2}\|^2 \le \kappa$, we obtain:
    \begin{equation}
        \sigma = \tilde{O}\left( \sqrt{ \frac{G^2 \log^2 d}{m b_1} + \frac{M^2 \log^2 d}{m b_2} \kappa } \right),
        \quad
        r = \tilde{O}\left( \sqrt{ \frac{G^2 \log(1/\delta)}{m b_1^2 \epsilon^2} + \frac{M^2 \log(1/\delta)}{m b_2^2 \epsilon^2} \kappa } \right).
    \end{equation}
\end{proof}

\subsection{Proof of Theorem~\ref{th-dis-adapt-spider}}\label{sec-th-dis-adapt-spider}

\begin{proof}[Proof of Theorem~\ref{th-dis-adapt-spider}]
    We first verify that the total sample usage per client is $O(n)$. From Lemma~\ref{le-driftnum}, we have $|\mathcal{T}| = O(U \eta / \kappa)$ and $T = O(U / (\eta \chi^2))$. Using the settings:
    \begin{equation}
        b_1 = \frac{n \kappa}{2U \eta}, \quad b_2 = \frac{n \eta \chi^2}{2U},
    \end{equation}
    the total number of samples used per client is:
    \begin{equation}
        b_1 \cdot |\mathcal{T}| + b_2 \cdot (T - |\mathcal{T}|) \le b_1 \cdot |\mathcal{T}| + b_2 \cdot T = O(n).
    \end{equation}

    Differential privacy guarantees follows from the Gaussian mechanism and parallel composition, since each data point is used at most once.
    
    Now for the error analysis. By Theorem~\ref{th-framework}:
    \begin{equation}
        \alpha = O(\chi) = \tilde{O}(\psi) = \tilde{O}(\sqrt{\sigma^2 + r^2 d}).
    \end{equation}
    From Lemma~\ref{le-dis-adpt-spider-error}:
    \begin{equation}
        \alpha = \tilde{O}\left( \sqrt{ \frac{G^2}{m b_1} + \frac{G^2 d}{m b_1^2 \epsilon^2} + \left( \frac{M^2}{m b_2} + \frac{M^2 d}{m b_2^2 \epsilon^2} \right) \cdot \kappa } \right).
    \end{equation}
    Substitute the expressions for $b_1$, $b_2$ into the bound and simplify, we get:
    \begin{flalign}
        \alpha&=\tilde{O}\left(\sqrt{\frac{G^2U\eta}{m n\kappa}+\frac{G^2dU^2\eta^2}{m n^2\epsilon^2\kappa^2}+\frac{M^2U\kappa}{m n\eta\chi^2}+\frac{M^2dU^2\kappa}{m n^2\epsilon^2\eta^2\chi^4}}\right)\\
        &=\tilde{O}\left(\sqrt{\frac{G^2U\sqrt{\rho\alpha}}{m M^2n\kappa}+\frac{G^2dU^2\rho\alpha}{m n^2\epsilon^2M^4\kappa^2}+\frac{M^4U\kappa}{mn\rho^\frac{1}{2}\alpha^\frac{5}{2}}+\frac{M^6dU^2\kappa}{mn^2\epsilon^2\rho\alpha^5}}\right).
    \end{flalign}
    To isolate $\alpha$, we take the largest among the resulting bounds:
    \begin{flalign*}
        \alpha 
        = & \tilde{O}\left( \max \left\{ 
        \left(\frac{G^2 U\sqrt{\rho}}{mM^2n\kappa}\right)^{2/3}, \frac{G^2dU^2\rho}{mn^2\epsilon^2M^4\kappa^2}, \left(\frac{M^4U\kappa}{mn\sqrt{\rho}}\right)^{2/9},\left(\frac{M^6dU^2\kappa}{m  n^2\epsilon^2\rho}\right)^{1/7}\right\}\right).
    \end{flalign*}
    Now set:
    \begin{equation}
        \kappa=\max\left\{\frac{G^{3/2}\sqrt{\rho U}}{M^{5/2}\sqrt{mn}}, \frac{G^{14/15}d^{2/5}U^{4/5}\rho^{8/15}}{M^{34/15}(\sqrt{m}n\epsilon)^{4/5}}\right\}
    \end{equation}
    Substituting this into the above expression of $\alpha$ yields:
    \begin{equation}
        \alpha
        = \tilde{O}\left(\left(\frac{GUM}{mn}\right)^{1/3}+\frac{G^{2/15}U^{2/5}M^{8/15}}{\rho^{1/15}}\left(\frac{\sqrt{d}}{\sqrt{m}n\epsilon}\right)^{2/5}\right)
        =\tilde{O}\left(\frac{1}{(mn)^{1/3}}+\left(\frac{\sqrt{d}}{\sqrt{m}n\epsilon}\right)^{2/5}\right).
    \end{equation}
\end{proof}

\subsection{Proof of Theorem~\ref{th-select}}\label{sec-th-select}

\begin{proof}[Proof of Theorem~\ref{th-select}]
    The $(\epsilon, \delta)$-ICRL-DP guarantee follows directly from the Gaussian mechanism and the adaptive composition theorem, since each client adds independent Gaussian noise to both their gradient and Hessian estimates. Each local data point is used at most $T$ times—once for each model iterate—and all messages sent to the server are privatized accordingly.

    We now derive the error rate $\alpha$ guarantee for the output $x_o$.
    Let $\mathcal{S} \coloneqq \bigsqcup_{j=1}^m S_j$ denote the full held-out evaluation dataset, and let $x_p$ be an $\alpha$-SOSP in the input to Algorithm~\ref{algo-select}. Define the aggregate gradient noise and Hessian noise as
    \begin{equation}
        \theta_p \coloneqq \frac{1}{m} \sum_{j=1}^m \theta_{j,p}, \quad \mathbf{H}_p \coloneqq \frac{1}{m} \sum_{j=1}^m \mathbf{H}_{j,p}.
    \end{equation}
    Let $\sigma_1^2 = c_1 \frac{G^2 T \log(1/\delta)}{n^2 \epsilon^2}$ and $\sigma_2^2 = c_2 \frac{M^2 d T \log(1/\delta)}{n^2 \epsilon^2}$ denote the variances of the noise added to the gradient and Hessian components, respectively.

    \textbf{Gradient Estimation Error.} For any $\mathcal{S}_j$ and $x$, $\nabla \hat{f}_{\mathcal{S}_j}(x)-\nabla F_j(x)$ is zero-mean and follows $\operatorname{nSG}\left(\frac{2G}{\sqrt{n}}\right)$. By the $G$-Lipschitz assumption and norm-sub-Gaussian concentration (Lemma~\ref{le-hoeffding}), we have with probability at least $1 - \omega'/8$:
    \begin{equation}
        \|\nabla F(x_p) - \nabla \hat{f}_{\mathcal{S}}(x_p)\| \le O\left( \frac{G \sqrt{\log(d/\omega')}}{\sqrt{mn}} \right).
    \end{equation}
    Also, since $\theta_p \sim \mathcal{N}(0, \sigma_1^2/m)$, standard Gaussian concentration (Lemma~\ref{le-gauss-to-normSG}) gives, with probability at least $1 - \omega'/8$:
    \begin{equation}
         \|\theta_p\| \le O\left( \frac{G \sqrt{d T \log(1/\delta) \log(1/\omega')}}{\sqrt{m} n \epsilon} \right).
    \end{equation}

    \textbf{Hessian Estimation Error.} For any $j\in[m]$ and $z\in\mathcal{S}_j$, $\mathbb{E}[\nabla^2 f(x_p; z)-\nabla^2 F_j(x_p)]=0$, and $\|\nabla^2 f(x_p;z)-\nabla^2 F_j(x_p)\|_2\le 2M$ (due to $M$-smoothness). That is, each empirical Hessian term is $2M$-bounded in operator norm. Applying the matrix Bernstein inequality (Lemma~\ref{le-bernstein}), and using the assumption $mn \ge \frac{4}{9} \log(8d/\omega')$, we obtain with probability at least $1 - \omega'/8$:
    \begin{equation}
        \left\| \nabla^2 \hat{f}_{\mathcal{S}}(x_p) - \nabla^2 F(x_p) \right\| \le O\left( M \sqrt{ \frac{ \log(d/\omega') }{ mn } } \right).
    \end{equation}
    For the added noise, since $\mathbf{H}_p$ consists of symmetric Gaussian matrices with variance $\sigma_2^2 / m$, Lemma~\ref{le-norm-symatrix} gives, with probability at least $1 - \omega'/8$:
    \begin{equation}
        \|\mathbf{H}_p\| \le O\left( \frac{M d \sqrt{T \log(1/\delta) \log(1/\omega')}}{\sqrt{m} n \epsilon} \right).
    \end{equation}

    \textbf{Verification for $x_p$.} Combining the above estimates and using a union bound, with probability at least $1 - \omega'/2$, we have:
    \begin{flalign}
        \|\nabla \bar{F} (x_p)\|_2
        & \le \|\nabla F (x_p)\|_2 + \|\nabla \bar{F} (x_p)-\nabla F(x_p)\|_2\\
        & \le \|\nabla F (x_p)\|_2 + \|\nabla \hat{f}_{\mathcal{S}}(x_p)-\nabla F(x_p)\|_2 + \|\theta_p\|_2 \\
        & \le \alpha + \text{(estimation error)}\\
        & \le O\left( \alpha + \frac{G\log\left(d/\omega^\prime\right)}{\sqrt{mn}}+ \frac{G\sqrt{dT\log\left(1/\delta\right)\log\left(1/\omega^\prime\right)}}{\sqrt{m}n\epsilon} \right),
    \end{flalign}
    and
    \begin{flalign}
        \lambda_{\min}\left(\nabla^2 \bar{F} (x_p)\right) 
        & \ge \lambda_{\min}\left(\nabla^2 F (x_p)\right) + \lambda_{\min}\left(\nabla^2 \bar{F} (x_p)-\nabla^2 F(x_p)\right) \\
        & \ge \lambda_{\min}\left(\nabla^2 F (x_p)\right) + \lambda_{\min}\left(\nabla^2 \hat{f}_{\mathcal{S}} (x_p)-\nabla^2 F(x_p)\right) + \lambda_{\min}\left(\mathbf{H}_p\right) \\
        & \ge -\sqrt{\rho\alpha} -\left\|\nabla^2 f(x_p; \mathcal{S})-\nabla^2 F(x_p)\right\|_2 -\|\mathbf{H}_p\|_2 \\
        & \ge -\left( \sqrt{\rho \alpha} + \text{(estimation error)} \right)\\
        & \ge -O\left(\sqrt{\rho \alpha} + M\sqrt{\frac{\log\left(d/\omega^\prime\right)}{mn}}+\frac{M d\sqrt{T\log(1/\delta)\log\left(1/\omega^\prime\right)}}{\sqrt{m}n\epsilon}\right).
    \end{flalign}
    Hence, $x_p$ will be selected with probability at least $1 - \omega^\prime/2$.

    \textbf{Guarantee for Output \( x_o \).} Let $x_o$ be the output of Algorithm~\ref{algo-select}. By construction, it must satisfy:
    \begin{flalign}
        \|\nabla F(x_o)\|_2
        & \le \|\nabla \bar{F}(x_o)\|_2 + \|\nabla F(x_o)-\nabla \bar{F}(x_o)\|_2 \\
        & \le \|\nabla \bar{F}(x_o)\|_2 + \|\nabla F(x_o)-\nabla \hat{f}_{\mathcal{S}}(x_o)\|_2+\|\xi_{o}\|_2,
    \end{flalign}
    and
    \begin{flalign}
        \lambda_{\min}(\nabla^2 F(x_o))
        &\ge \lambda_{\min}(\nabla^2 \bar{F}(x_o)) + \lambda_{\min}(\nabla^2 F(x_o)-\nabla^2 \bar{F}(x_o))\\
        &\ge \lambda_{\min}(\nabla^2 \bar{F}(x_o)) - \|\nabla^2 F(x_o)-\nabla^2 \bar{F}(x_o)\|_2 \\
        &\ge \lambda_{\min}(\nabla^2 \bar{F}(x_o)) - \|\nabla^2 F(x_o)-\nabla^2 \hat{f}_{\mathcal{S}}(x_o)\|_2 -\|H_o\|_2.
    \end{flalign}

    Using the same reasoning as above, applying the union bound again and using the fact that $x_o$ is the output, we get that with probability at least $1 - \omega'$, the following hold:
    \begin{equation}
        \|\nabla F(x_o)\| \le O\left( \alpha + \frac{G \log(d/\omega')}{\sqrt{mn}} + \frac{G \sqrt{d T \log(1/\delta) \log(1/\omega')}}{\sqrt{m} n \epsilon} \right),
    \end{equation}
    and
    \begin{equation}
        \lambda_{\min}(\nabla^2 F(x_o)) \ge -O\left( \sqrt{\rho \alpha} + M \sqrt{ \frac{ \log(d/\omega') }{ mn } } + \frac{ M d \sqrt{ T \log(1/\delta) \log(1/\omega') } }{ \sqrt{m} n \epsilon } \right).
    \end{equation}
    Finally, recalling that $T = O(1/\alpha^{2.5})$, and grouping the dependency on $\alpha$, $d$, $m$, $n$, and $\epsilon$, we conclude that $x_o$ is an $\alpha'$-SOSP with
    \begin{equation}
        \alpha' = \tilde{O}\left( \alpha + \frac{1}{mn} + \frac{1}{\sqrt{mn}} + \frac{\alpha}{\sqrt{mn}} + \frac{\sqrt{d}}{\sqrt{m} n \epsilon \alpha^{5/4}} + \frac{d}{\sqrt{m} n \epsilon \alpha^{3/4}} + \frac{d^2}{m n^2 \epsilon^2 \alpha^{5/2}} \right),
    \end{equation}
    as claimed.
\end{proof}

\section{Experiments}\label{sec-exp}

\paragraph{Running Environments}
All experiments were conducted with the following computing infrastructure:  
\begin{itemize}
    \item OS: Ubuntu 22.04.4 LTS
    \item CPU: AMD EPYC 7513 32-Core Processor
    \item CPU Memory: 503GB
    \item GPU: NVIDIA RTX A6000 GPU
    \item GPU Memory: 48GB
    \item Programming language: Python 3.11.8
    \item Deep learning framework: Pytorch 2.2.2 + cuda 12.1
\end{itemize}

\begin{figure*}[htbp]
\centering
{
  \subfigure[Test Accuracy ($\epsilon = 0.5$)]{\includegraphics[width=0.32\textwidth]{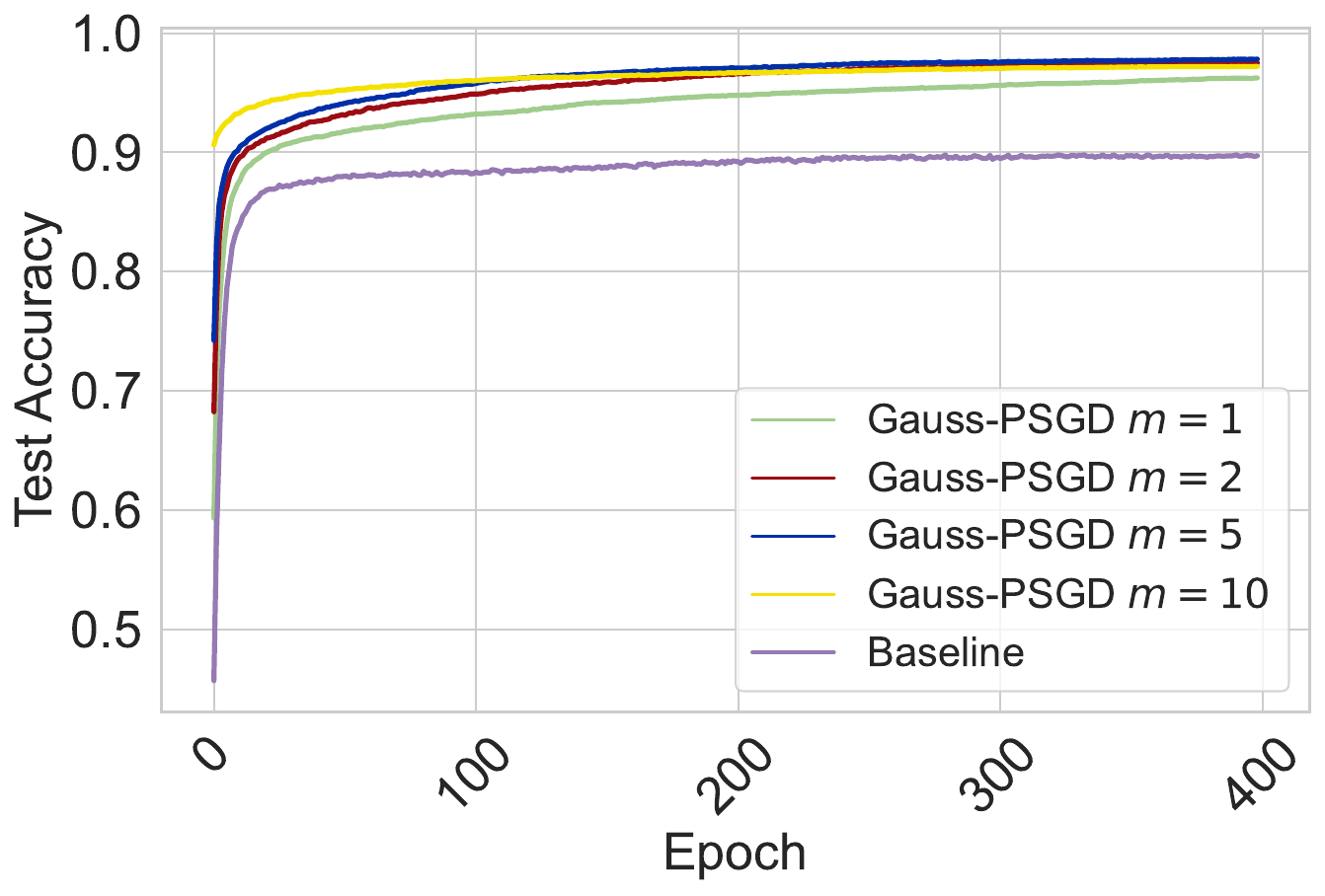}
  }\hfill
  \subfigure[Test Accuracy ($\epsilon = 1.0$)]{\includegraphics[width=0.32\textwidth]{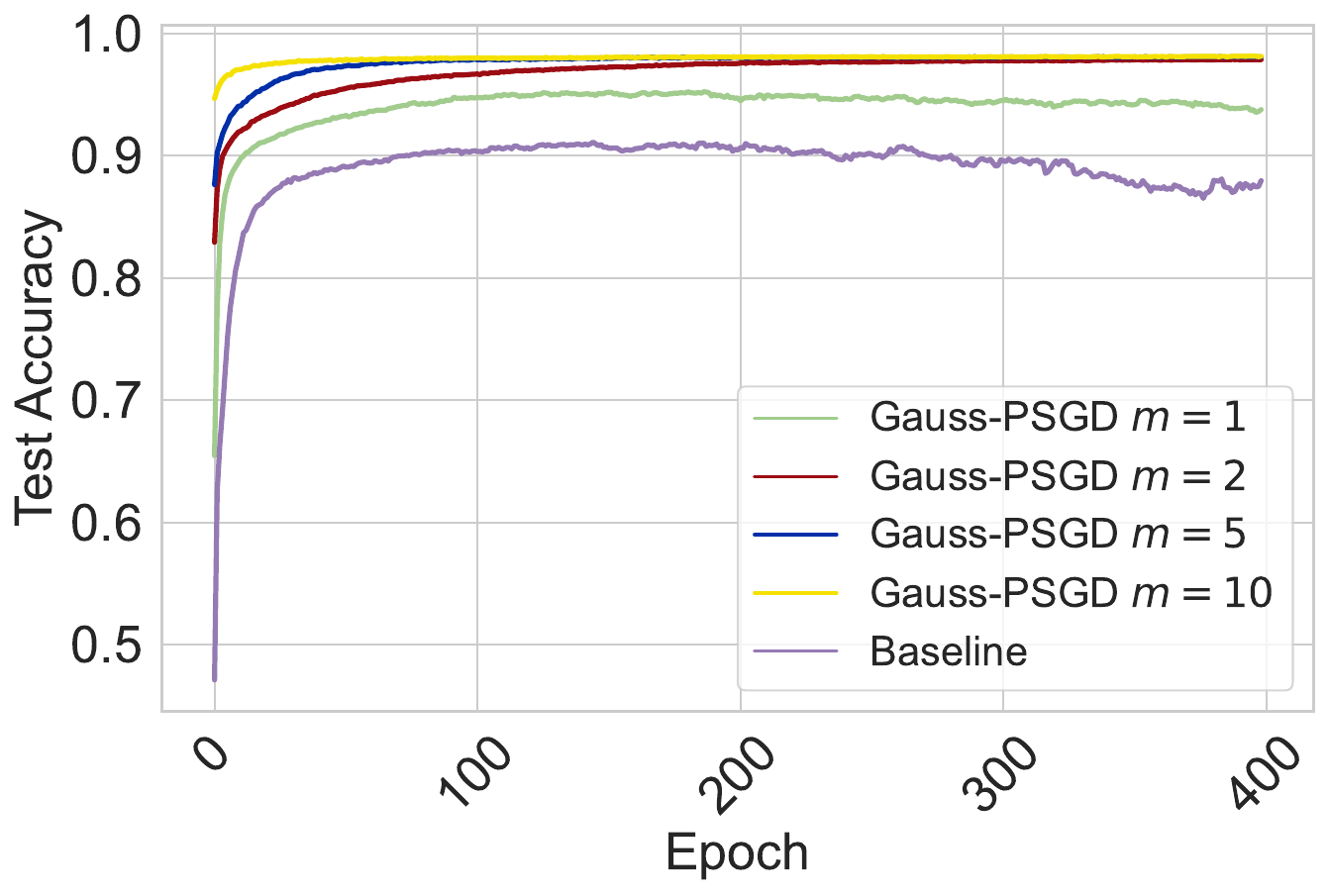}
  }\hfill
  \subfigure[Test Accuracy ($\epsilon = 2.0$)]{\includegraphics[width=0.32\textwidth]{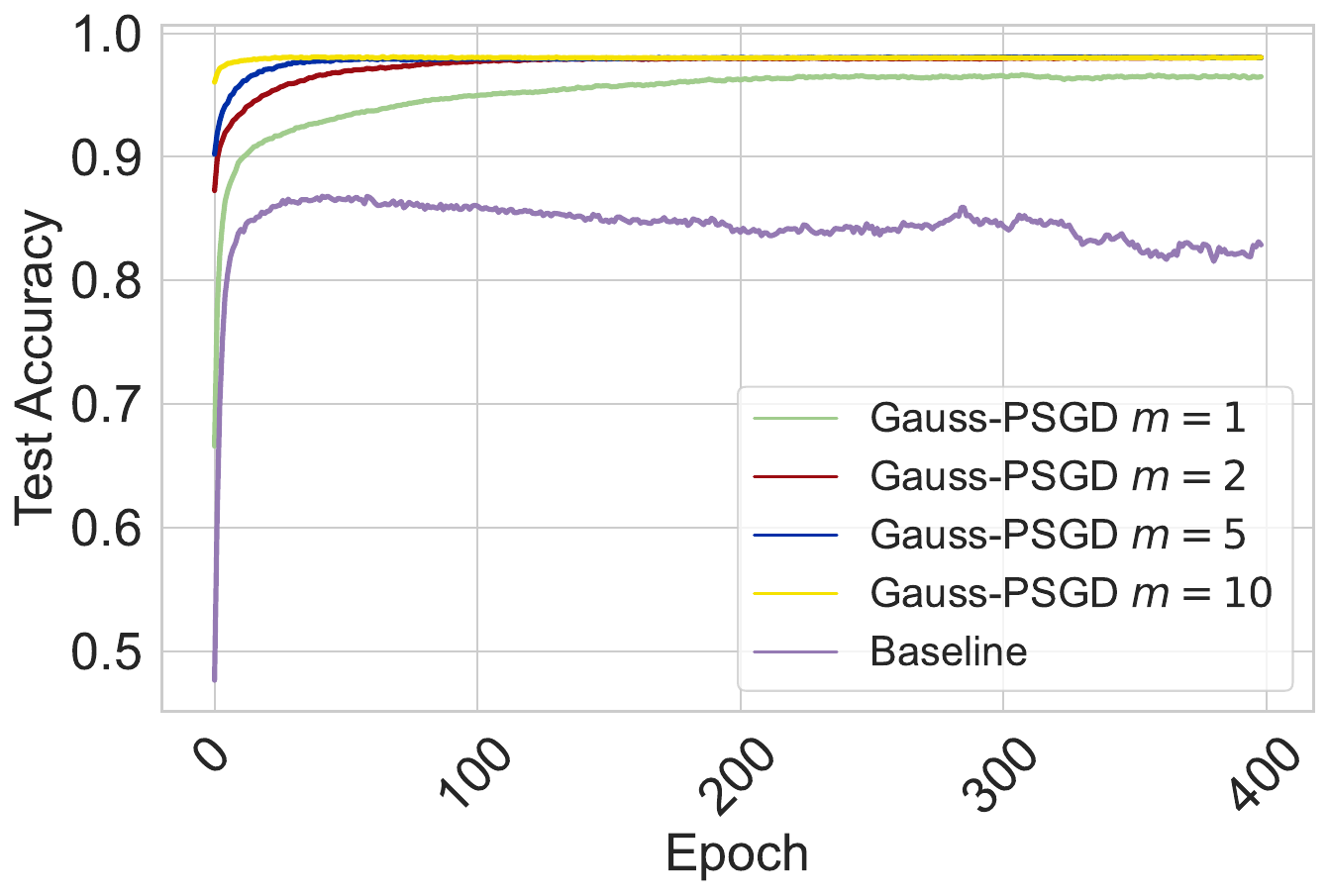}
  }\hfill
  \\
  \subfigure[Test Loss ($\epsilon = 0.5$)]{\includegraphics[width=0.32\textwidth]{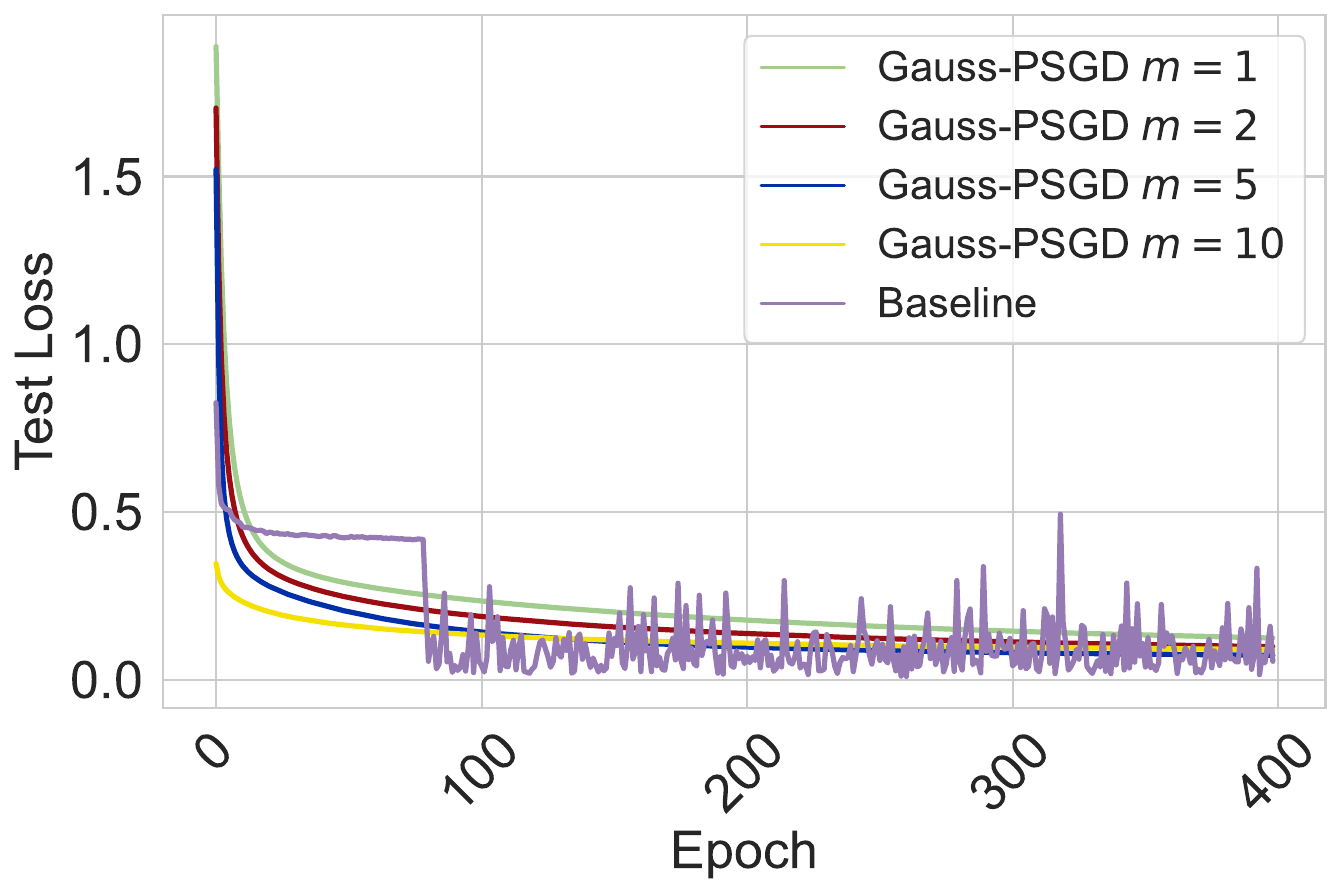}
  }\hfill
  \subfigure[Test Loss ($\epsilon = 1.0$)]{\includegraphics[width=0.32\textwidth]{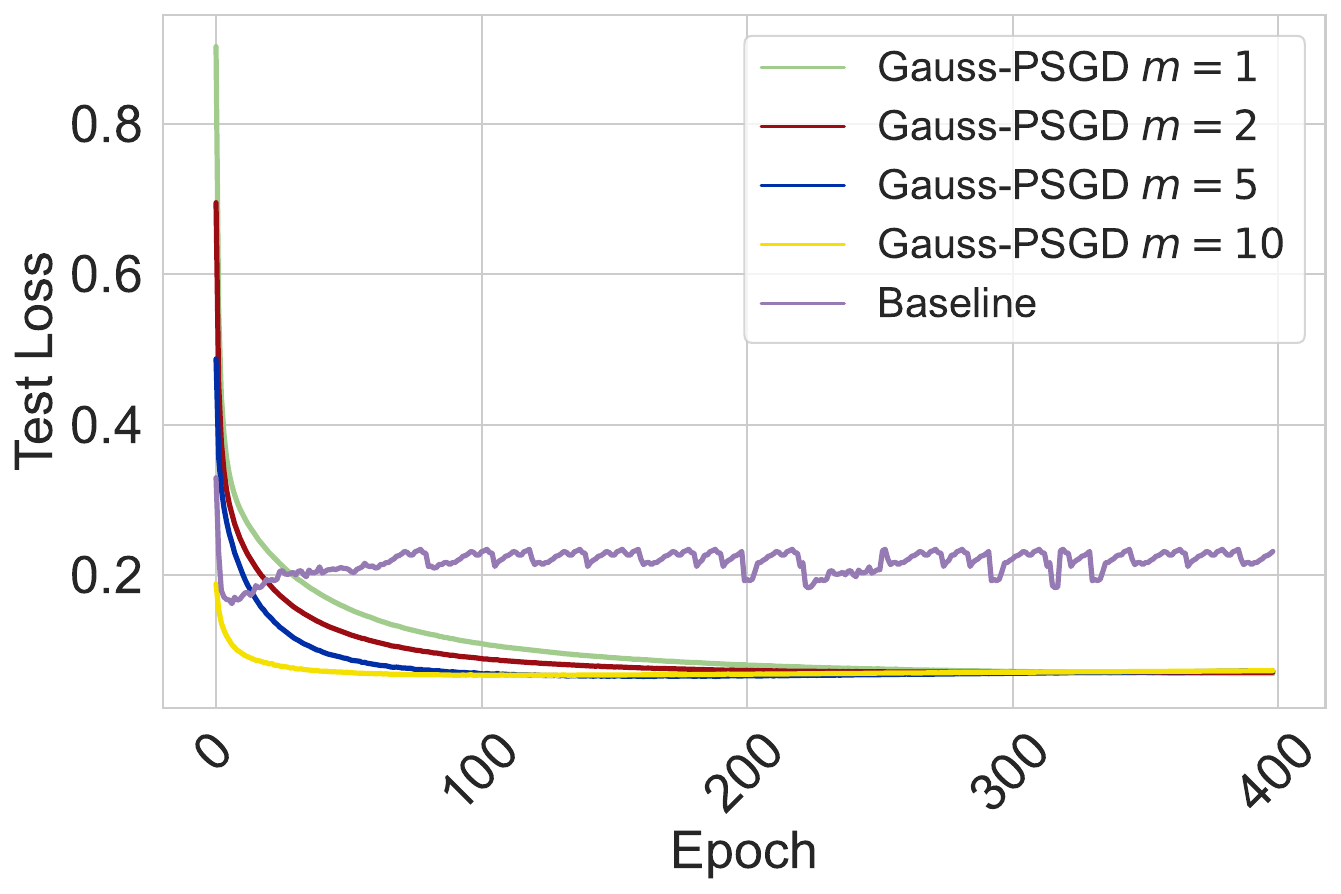}
  }\hfill
  \subfigure[Test Loss ($\epsilon = 2.0$)]{\includegraphics[width=0.32\textwidth]{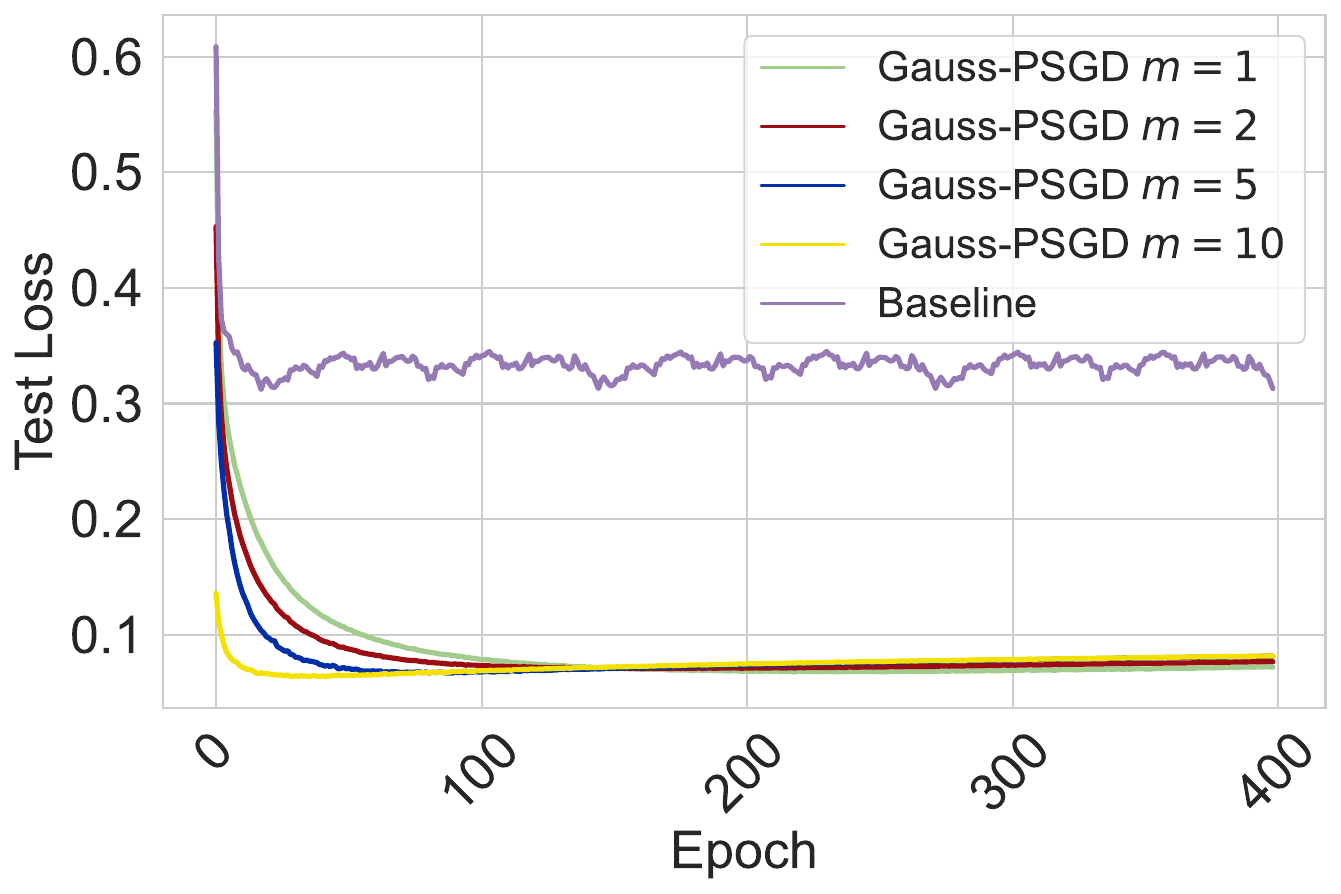}
  }\hfill
   \caption{Comparison of learning performance for our Gauss-PSGD and the baseline method on \textbf{MNIST} dataset. \textbf{Top: Test accuracy} v.s. \# epoch for varying privacy budget $\epsilon\in\{0.5, 1.0, 2.0\}$. \textbf{Bottom: Test loss} v.s. \# epoch for varying privacy budget $\epsilon\in\{0.5, 1.0, 2.0\}$.}
   \label{fig:MNIST}
}

\end{figure*}

\begin{figure*}[htbp]
\centering
{
  \subfigure[Test Accuracy ($\epsilon = 0.5$)]{\includegraphics[width=0.32\textwidth]{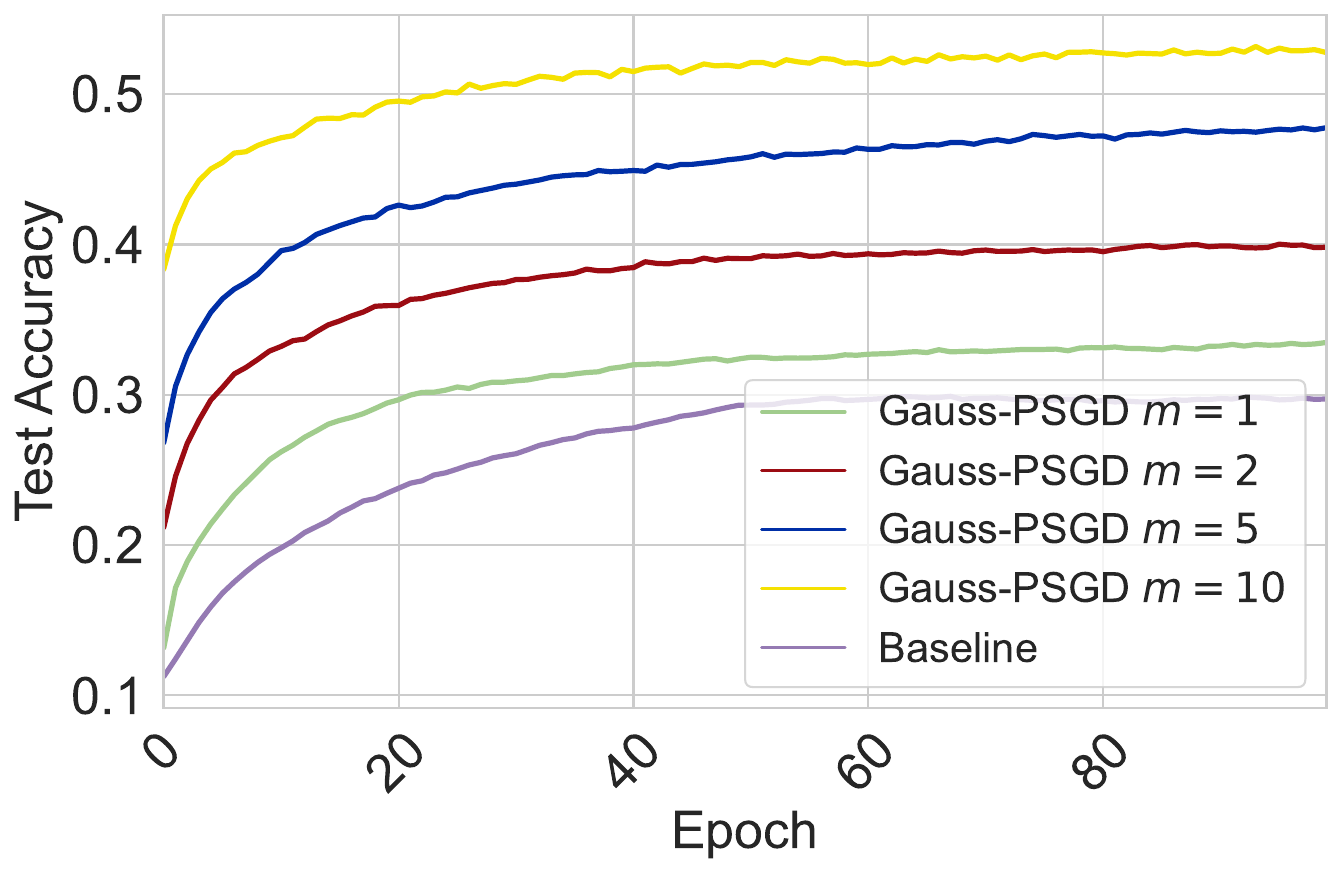}
  }\hfill
  \subfigure[Test Accuracy ($\epsilon = 1.0$)]{\includegraphics[width=0.32\textwidth]{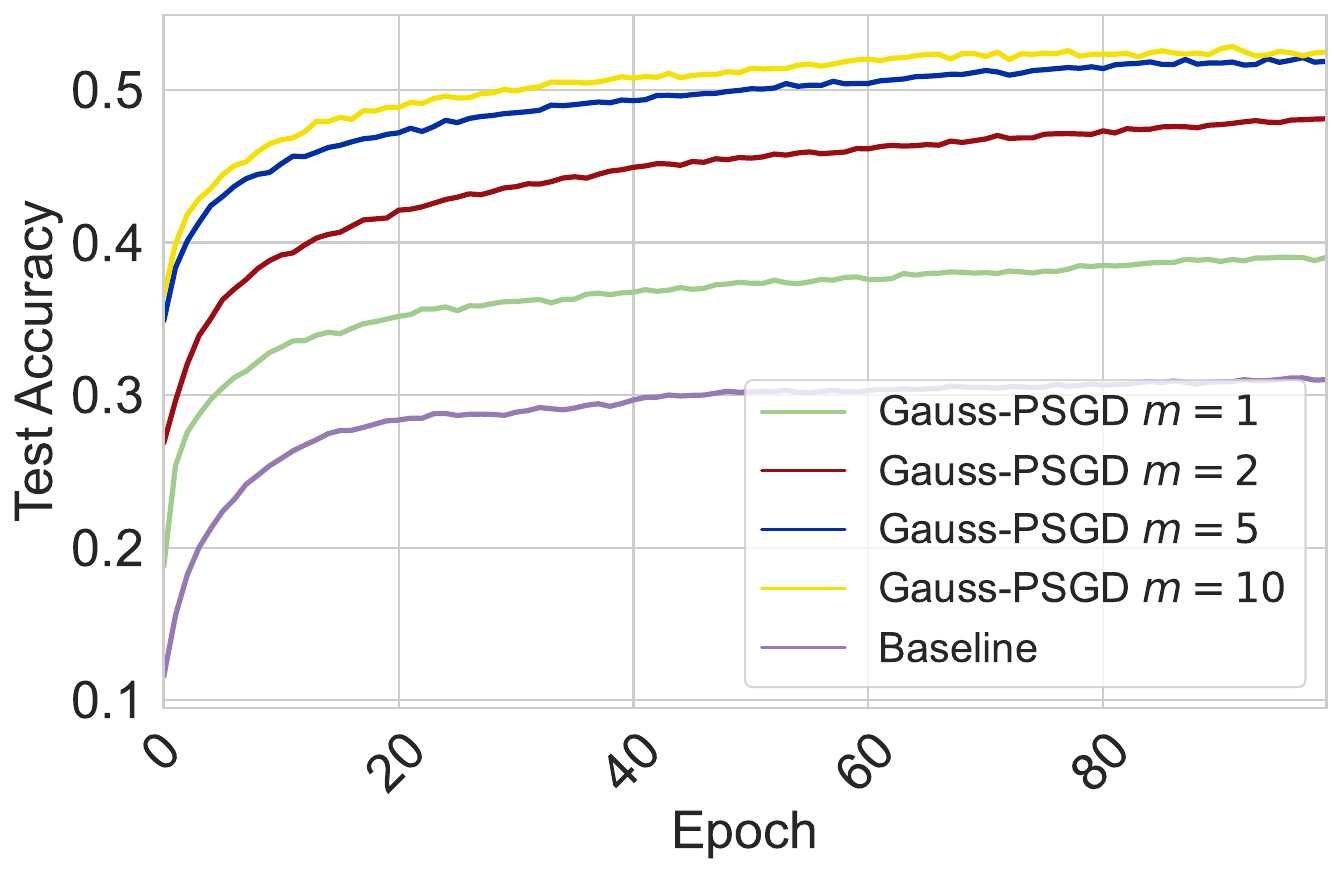}
  }\hfill
  \subfigure[Test Accuracy ($\epsilon = 2.0$)]{\includegraphics[width=0.32\textwidth]{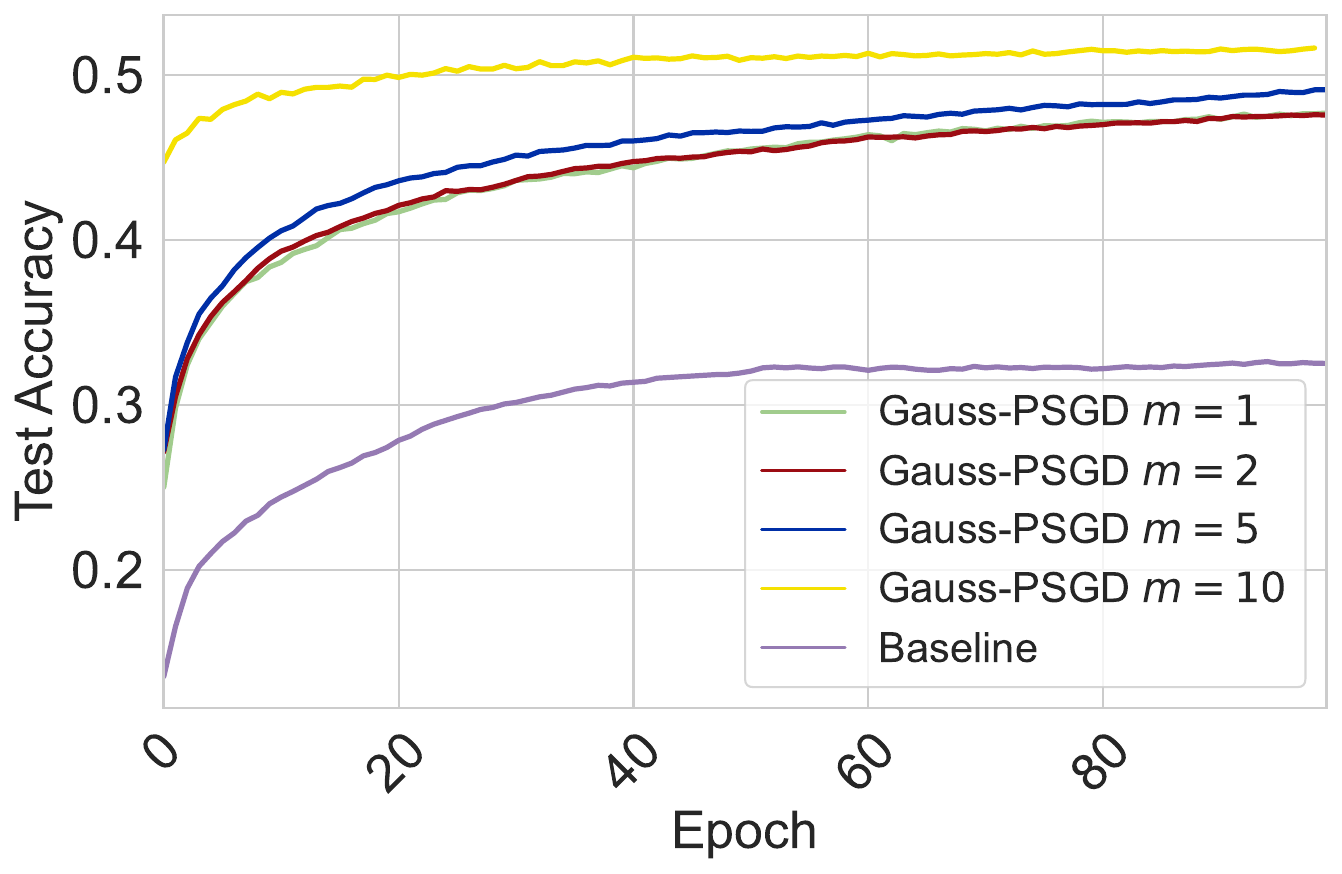}
  }\hfill
  \\
  \subfigure[Test Loss ($\epsilon = 0.5$)]{\includegraphics[width=0.32\textwidth]{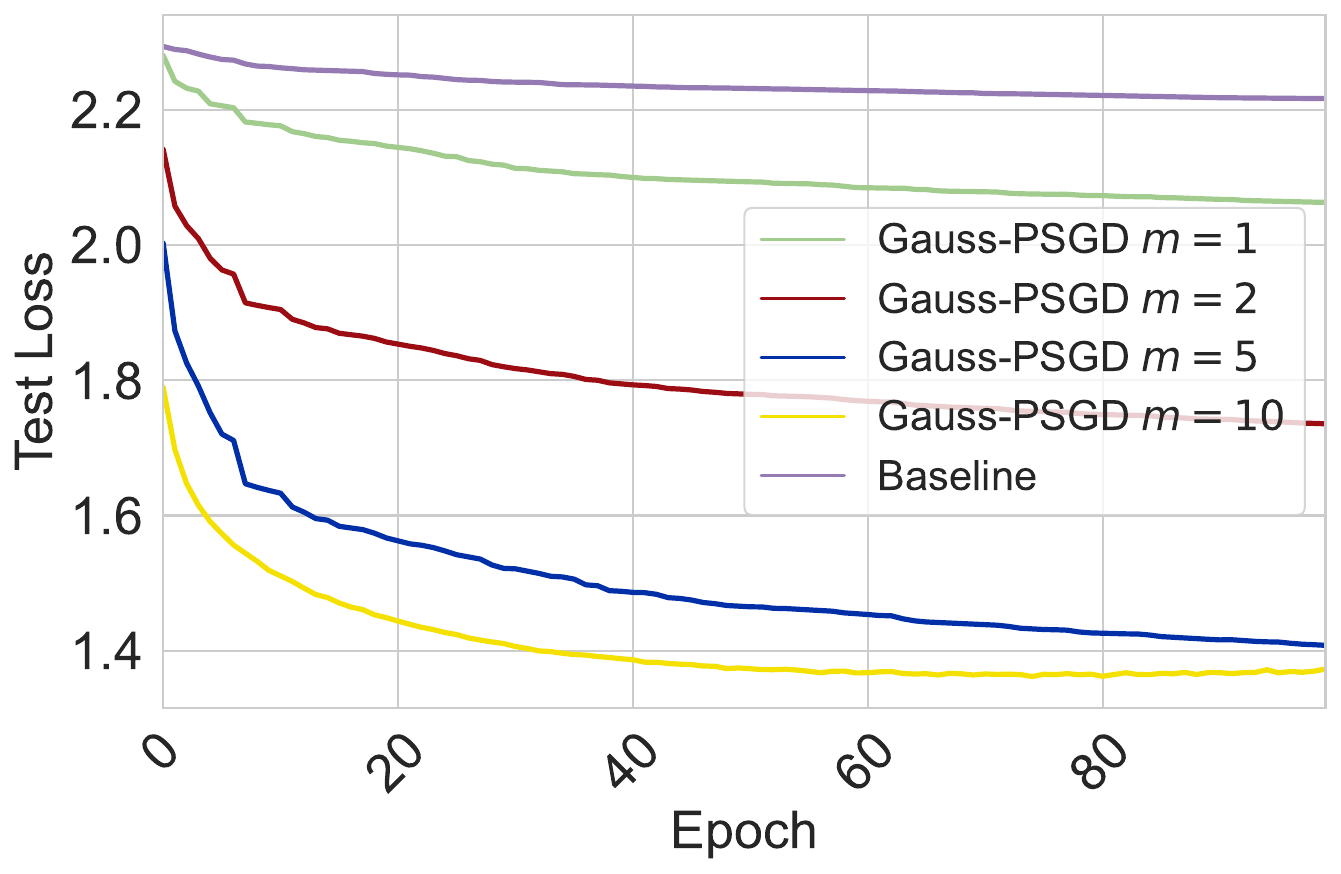}
  }\hfill
  \subfigure[Test Loss ($\epsilon = 1.0$)]{\includegraphics[width=0.32\textwidth]{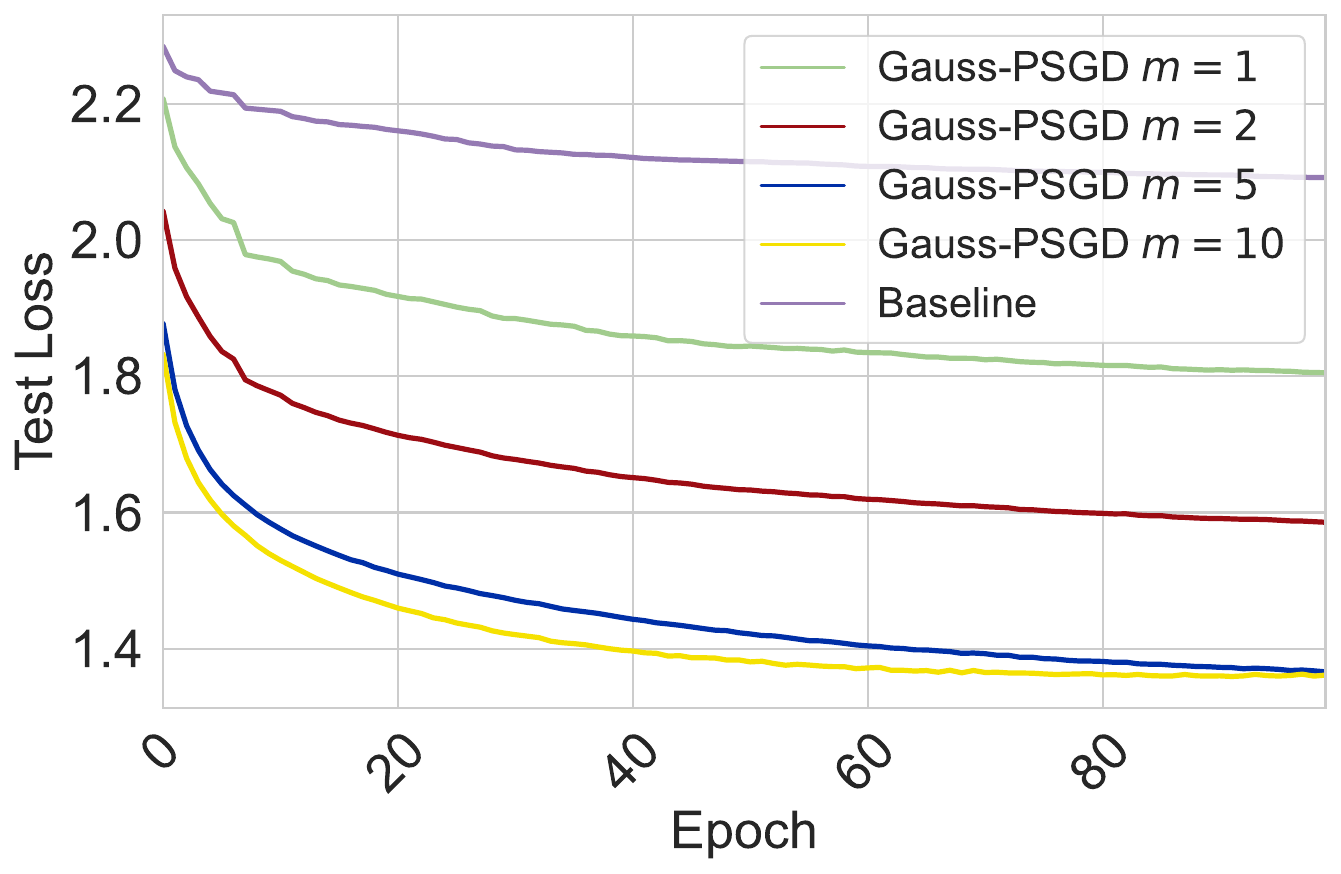}
  }\hfill
  \subfigure[Test Loss ($\epsilon = 2.0$)]{\includegraphics[width=0.32\textwidth]{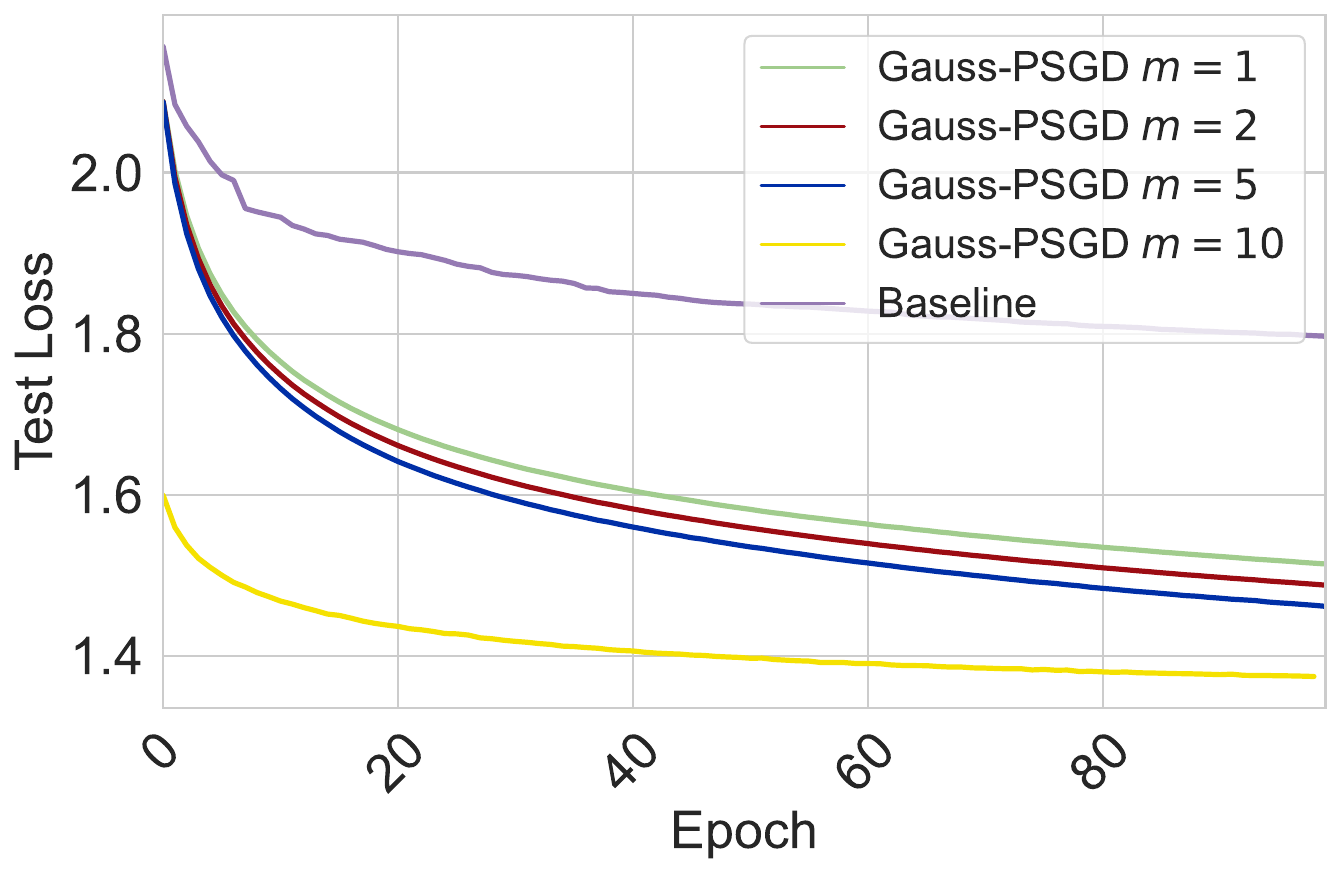}
  }\hfill
   \caption{Comparison of learning performance for our Gauss-PSGD and the baseline method on \textbf{CIFAR-10} dataset. \textbf{Top: Test accuracy} v.s. \# epoch for varying privacy budget $\epsilon\in\{0.5, 1.0, 2.0\}$. \textbf{Bottom: Test loss} v.s. \# epoch for varying privacy budget $\epsilon\in\{0.5, 1.0, 2.0\}$.}
   \label{fig:CIFAR-10}
}

\end{figure*}

\paragraph{Tasks and Datasets}
We conduct image classification tasks on two datasets: MNIST~\cite{lecun1998gradient} and CIFAR-10~\cite{krizhevsky2009learning}. For each experiment, we set the number of training samples to $n=6000$ and vary the number of clients $m$ in $\{1, 2, 5, 10\}$, where $m=1$ corresponds to the single-machine setting, while the others correspond to distributed learning scenarios. The test set consists of $10000$ samples for both datasets.

\paragraph{Models}
 
We use a fully connected neural network with one hidden layer containing 128 units and ReLU activation. The loss function is the standard cross-entropy loss. The model is initialized using Kaiming initialization~\cite{he2015delving}, with biases set to zero by default.

\paragraph{Algorithms}
We compare our proposed algorithm, which is abbreviated as \textbf{Gauss-PSGD}, against the baseline method from \cite{liu2024private}. The hyperparameters for Gauss-PSGD are set as follows:
\begin{itemize}
    \item Escape threshold $\chi=0.01$
    \item Model drift threshold $\kappa=0.1$
    \item Maximum escape steps $\Gamma = 10$
    \item Maximum repeat number of escape $Q = 3$
\end{itemize}
For all algorithms, we set the privacy parameters to $\delta=10^{-5}$ and vary $\epsilon$ in $\{0.5, 1.0, 2.0\}$, corresponding to strong, medium, and weak privacy regimes, respectively. The learning rate is set to $0.001$ for MNIST and $0.01$ for CIFAR-10.

\paragraph{Evaluations}

We evaluate the performance of the implemented algorithms using two criteria: test accuracy and test loss. Both metrics are analyzed over training epochs to assess convergence and generalization performance.

\paragraph{Results}

The experimental results for the MNIST and CIFAR-10 datasets are shown in Fig.~\ref{fig:MNIST} and Fig.\ref{fig:CIFAR-10}, respectively. In each figure, we present test accuracy (top row) and test loss (bottom row) against the number of epochs for different privacy budgets ($\epsilon=0.5,1.0,2.0$). From the experimental results, it can be seen that our proposed Gauss-PSGD consistently outperforms the baseline across all configurations. Specifically, Gauss-PSGD achieves higher test accuracy than the baseline for both datasets, with accuracy improving as $\epsilon$ increases due to weaker privacy constraints. The accuracy gap between Gauss-PSGD and the baseline widens in distributed settings ($m>1$), highlighting the collaborative synergy of distributed learning and the robustness of Gauss-PSGD in handling data heterogeneity. Gauss-PSGD exhibits lower test loss compared to the baseline across all configurations. The rapid reduction in loss during the initial epochs indicates faster convergence, which holds true for both datasets and all privacy budgets. In conclusion, the results demonstrate that Gauss-PSGD achieves superior accuracy, faster convergence, and better scalability compared to the baseline.

\section{Broader Impact Statement}\label{sec-impact}

This paper advances the field of differentially private (DP) stochastic non-convex optimization by addressing key theoretical challenges in finding second-order stationary points (SOSP). Our contributions are particularly relevant for applications requiring strong privacy guarantees, including distributed learning with heterogeneous data. These advancements have practical implications for privacy-sensitive fields such as healthcare, finance, and large language models (LLMs), where data confidentiality is paramount.

By improving the efficiency and accuracy of DP optimization techniques, our work supports the development of machine learning systems that can operate on sensitive datasets without compromising privacy. This fosters greater trust in data-driven decision-making and encourages organizations to adopt privacy-preserving practices, enabling informed and responsible use of sensitive data.

Nevertheless, it is important to acknowledge the broader limitations inherent to DP-based learning algorithms, not just those specific to our work. Privacy-preserving methods often introduce trade-offs, such as reduced model accuracy compared to their non-private counterparts, which may impact decision-making in high-stakes applications.

Despite these challenges, we believe that advancing and responsibly applying privacy-preserving optimization techniques will have a positive societal impact. By enabling secure and ethical data analysis, our work contributes to the broader goal of building trustworthy AI/ML systems.

\end{document}